\newif\ifshortver
\newcommand{\ifshort}[2]{\ifshortver#1\else#2\fi}
\theoremstyle{plain}
\newtheorem{theorem}{Theorem}[section]
\newtheorem{proposition}[theorem]{Proposition}
\newtheorem{lemma}[theorem]{Lemma}
\newtheorem{definition}[theorem]{Definition}
\newtheorem{assumption}[theorem]{Assumption}
\theoremstyle{definition}
\newtheorem{remark}[theorem]{Remark}
\title{Guarantees for Nonlinear Representation Learning: \\Non-identical Covariates, Dependent Data, Fewer Samples}
\author{Thomas T.\ Zhang\footnote{Corresponding author, email: \texttt{ttz2@seas.upenn.edu}}\;, Bruce D.\ Lee, Ingvar Ziemann, George J.\ Pappas, Nikolai Matni}
\date{Department of Electrical and Systems Engineering, University of Pennsylvania}
\begin{document}

\maketitle

\begin{abstract}
A driving force behind the diverse applicability of modern machine learning is the ability to extract meaningful features across many sources. However, many practical domains involve data that are non-identically distributed across sources, and possibly statistically dependent within its source, violating vital assumptions in existing theoretical studies of representation learning. Toward addressing these issues, we establish statistical guarantees for learning general \textit{nonlinear} representations from multiple  data sources that admit different input distributions and possibly sequentially dependent data.  
Specifically, we study the sample-complexity of learning $T+1$ functions $f_\star^{(t)} \circ g_\star$ from a function class $\mathcal F \times \mathcal G$, where $f_\star^{(t)}$ are task specific linear functions and $g_\star$ is a shared non-linear representation. An approximate representation $\hat g$ is estimated using $N$ samples from each of $T$ source tasks, and a fine-tuning function $\hat f^{(0)}$ is fit using $N$ samples from a target task passed through $\hat g$. 
Our results show that when $N \gtrsim C_{\mathrm{dep}} (\mathrm{dim}(\mathcal F) + \mathrm{C}(\mathcal G)/T)$, the excess risk of the estimate $\hat f^{(0)} \circ \hat g$ on the target task decays as $\nu_{\mathrm{div}} \big(\frac{\mathrm{dim}(\mathcal F)}{N} + \frac{\mathrm{C}(\mathcal G)}{N T} \big)$, where $C_{\mathrm{dep}}$ denotes the effect of data dependency, $\nu_{\mathrm{div}}$ denotes an (estimatable) measure of \textit{task-diversity} between the source and target tasks, and $\mathrm{C}(\mathcal G)$ denotes a complexity measure of the representation class $\calG$. In particular, our analysis reveals: 1.\ as the number of tasks $T$ increases, both the sample requirement and risk bound converge to that of $r$-dimensional regression as if $g_\star$ had been given, 2.\ the effect of dependency only enters the sample requirement, leaving the risk bound matching the iid setting, and 3.\ the proposed task diversity measure $\nu_{\mathrm{div}}$ addresses pathologies like ill-conditioning and rank-degeneracy while avoiding direct uniformity assumptions.
\end{abstract}

    







\ifshort{
\vspace{-0.8cm}
}

\section{Introduction}\label{sec: intro}

Transfer learning, in which a model is pre-trained on a large  dataset, and then finetuned for a specific application, has shown great success in various fields of machine learning including computer vision \citep{dosovitskiy2021an} and natural language processing \citep{devlin2019bert}. The principle enabling the success of these approaches is the use of a large dataset to extract compressed features which are broadly useful for downstream tasks. The extraction of such generally useful features from data is referred to as \emph{representation learning} \citep{bengio2013representation}. Despite its critical role in the success of deep learning, statistical guarantees remain somewhat limited.

Only recently have studies formalized multi-task representation learning in a way that illustrates how generalization improves when data is aggregated across many tasks \citep{du2020few, tripuraneni2020theory}. These regression settings consider learning $T+1$ functions $f_\star^{(t)} \circ g_\star$ in a function class $\calF \times \calG$ from covariate-observation pairs $\scurly{(\xit, \yit)}$, where $f_\star^{(t)}$ are task-specific functions, and $g_\star$ is a shared representation. The tasks for $t=1,\dots, T$ are denoted source (training) tasks, while $t=0$ is the target (test) task. A basic model of transfer learning can be expressed as a two-step procedure in which an estimate $\hat g$ for the representation is determined by solving  a least squares problem using $N$ data samples from each of the source tasks with measurements corrupted by zero-mean noise. This representation is then used to determine an estimate $\hat f^{(0)}$ by solving a least squares problem using $N'$ samples from the target task, also with measurements corrupted by zero-mean noise. \citet{du2020few, tripuraneni2020theory} show generalization bounds on the learned predictor in which the excess risk scales as $\tilde \calO \paren{\frac{\mathrm{C}(\calG)}{N T} + \frac{\mathrm{C}(\calF)}{N'}}$, where $\mathrm{C}$ quantifies the complexity of a function class. These rates capture the desirable behavior where the error from fitting the shared representation decays with the \textit{total} amount of data aggregated across the $T$ source tasks. 

While a rather complete picture can be stitched for linear settings, for such rates to hold in settings where the representation class $\calG$ is nonlinear, prior work crucially relies upon the assumption that covariates are independent and identically distributed (iid) across all tasks, such that the only source of variation comes from the task-specific $f_\star^{(t)}$. Such assumptions are fundamentally incompatible with many potential use cases of multi-task representation learning, such as in domain generalization and sequential decision-making. A key goal of this work is to remedy this issue and achieve multi-task rates in the absence of assumptions requiring identical covariate distributions across tasks, and independent data within tasks.


\ifshort{
\vspace{-0.3cm}
}

\subsection{Related Work}
\paragraph{Multi-task linear regression: } 

Beginning with \citet{du2020few}, a fairly complete picture has emerged in the setting of multi-task linear regression in which distinct tasks share a low dimensional representation, i.e. $f_\star^{(t)}(z) = F z$ and $g(x) = \Phi x$ for some matrices $F \in \mathbb{R}^{\dy \times r}$ and $\Phi \in \mathbb{R}^{r \times \dx} $ with $r \leq \dx$\footnote{\citet{du2020few} consider a scalar setting $\dy = 1$. Their analysis is extended to vector-valued settings in \citet{zhang2023multi}.}. In this setting, the authors demonstrate that the excess risk achieved by the empirical risk minimizer (ERM) achieves rates $\tilde \calO\paren{\frac{\dx r}{N T} + \frac{\dy r}{N'}}$. An active learning setting is considered by \citet{chen2022active, wang2023improved}, in which the assumption of uniform sampling from each task is replaced with an adaptive sampling algorithm. \citet{chua2021fine} considers a setting in which the representation is fine-tuned for each task, thereby allowing the assumption of a shared $\Phi$ to be relaxed.  Crucially, all of the aforementioned bounds hold only if the minimum amount of data (``burn-in time'') per task exceeds a quantity proportional to $\dx$. This is counterintuitive, as a goal of aggregating data across tasks is to remove the necessity for many samples per task. Furthermore, solving the ERM is nominally a non-convex bilinear problem. Efficient algorithms to bypass ERM have been proposed to explicitly address these issues  \citep{tripuraneni2021provable, collins2021exploiting, thekumparampil2021sample}, while attaining same rates order-wise. The resulting analysis alleviates the dependence of the burn-in time on $\dx$, but require iid covariates across all tasks, such that their estimators are consistent without requiring standardizing data per-task which would otherwise reintroduce a $\approx\dx$ burn-in per task. This is partially resolved by an algorithm proposed in \citet{zhang2023meta}, which handles tasks with non-identical covariate distributions; however the burn-in remains proportional to $\dx$.  These results beg the question: is the $\dx$ per-task burn-in for ERM fundamental or a technical byproduct? A $\dx$ burn-in is unintuitive, since given an optimal representation $\Phi_\star$, solving each task is precisely standard linear regression over $r$-dimensional covariates $z \triangleq \Phi_\star x$, for which the burn-in is much more lenient $\approx r$ \citep{wainwright2019high}. 

\ifshort{
\vspace{-0.3cm}
}

\paragraph{Non-linear multi-task learning: } 
Early works consider statistical guarantees for multi-task learning over general nonlinear function classes \citep{{baxter2000model, ben2008notion, maurer2016benefit, hanneke2022no}}; however, they do not obtain rates scaling jointly in $N,T$ due to the data model or assuming agnostic settings. \cite{du2020few, tripuraneni2020theory, watkins2024optimistic} provide excess risk bounds in which the error benefits jointly in $N$ and $T$ by assuming a \emph{shared representation}. 
\citet{du2020few} considers a setting in which $\calF$ is a linear function class, and $\calG$ is a nonlinear function class.  \citet{tripuraneni2020theory, watkins2024optimistic} consider nonlinear $\calF$ and $\calG$; however, the resulting generalization bounds scale with diameter of covariate distributions rather than with noise-level \citep{du2020few}.  
These works all assume marginal covariate distributions are identical across all tasks, and the final bounds involve data-dependent complexity terms. When instantiated in linear settings, their guarantees recover suboptimal burn-ins \textit{at least} order-$\dx$ samples per task. The aforementioned results study the ERM solution rather than feasible algorithms. \citet{meunier2023nonlinear} is a notable exception, providing a feasible algorithm in the setting of Reproducing Kernel Hilbert Spaces (RKHS), in which tasks share a RKHS subspace projection. 

\ifshort{
\vspace{-0.4cm}
}

\paragraph{Multi-task sequential learning: }

Multi-task learning has been applied to many dynamical systems settings, such as robotic manipulation \citep{brohan2022rt,shridhar2023perceiver} and agile flight \cite{o2022neural}. Despite its effectiveness in practice, the existing theoretical guarantees for representation learning do not apply to these settings due to the assumption that covariates are iid across tasks. Notably, when the predictor is either the dynamics function or a closed-loop control policy, the covariate distribution is inextricably linked with the predictor itself. Consider, for example, 
a stable autonomous system driven by white noise, $
            y_t \triangleq x_{t+1} = A x_t + w_t$ with $x_0,w_t \sim \calN(0,I_{\dx}).$
          The stationary covariate distribution is inextricably linked to the ``predictor'' $A$, as demonstrated by solving the Lyapunov equation \begin{align*}
              \Sigma_x &\triangleq \sfE[x_{t+1} x_{t+1}^\top] = A \sfE[x_t x_t^\top] A^\top + I_{\dx} =  A \Sigma_x A^\top + I_{\dx} \\
              \implies \Sigma_x &= \sum_{k \geq 0} A^k (A^k)^\top.
          \end{align*} Therefore, in multi-task settings where multiple distinct predictors are involved, the covariate distributions will be non-identical between tasks. Furthermore, covariates generated by dynamical systems are correlated across time.
These issues have been remedied in the linear setting by \citet{modi2021joint, zhang2023multi} applied to system identification and imitation learning by extending the analysis of \citet{du2020few} to consider covariates generated by linear systems. 
For the single-task non-linear regression setting,  \citet{ziemann2022learning,ziemann2023tutorial} demonstrate that learning in the presence of correlated covariates achieves the same rate as learning in the absence of correlation, only inflating the burn-in time by the correlation level. However, we are unaware of extensions of these results to multi-task representation learning.

In parallel, various works have considered extensions of the aforementioned multi-task linear regression works to linear bandit settings \citep{yang2022nearly, yang2020impact, du2023multi, mukherjee2023multi}. We also note works studying reinforcement learning (RL) with feature approximation (or low-rank Markov decision processes (MDPs) \citep{agarwal2020flambe, jin2020provably, uehara2021representation, efroni2022sample, du2019provably} which attain sample complexity gains from dimensionality reduction, but do not consider aggregating data across multiple tasks. Analysis of multi-task representation learning for MDPs has been studied by \citet{arora2020provable, lu2021power}; however these works assume generative models, thereby sidestepping the issues of independent data and non-identical covariates.
\ifshort{
\vspace{-0.3cm}
}

\subsection{Contributions}
\ifshort{
\vspace{-0.2cm}
}

In this work, we analyze the transfer learning problem in a setting where $\calF$ is a class of linear functions mapping $\mathbb{R}^{r}$ to $\R^{\dy}$, and $\calG$ is a class of nonlinear representations, as in \cite{du2020few}\footnote{This is a prototypical predictor model, e.g.\ in RL with feature approximation, nonlinear least squares, and classification.}. 
In this setting, we remove assumptions of both identical covariate distributions and independent covariates within tasks, and we additionally improve the per-task burn-in requirement.  We list our specific contributions:
    \begin{itemize}[noitemsep,nolistsep, leftmargin=*]
         \item We derive generalization bounds that hold for non-identical covariate distributions and vector-valued measurements. In particular, we present a refined ``task-diversity'' measure, which takes into account overlap of non-identical covariate distributions, in addition to the similarity of task-specific linear heads $\ftstar$ (e.g.\ \citet{du2020few, zhang2023multi}). 
     
        \item We show our proposed bounds on ERM scale multiplicatively with noise level and jointly with number of tasks and per-task samples as in \citet{du2020few}, while requiring only $\Omega(\dy r)$ samples per task for sufficiently large $T$ (noting $\dy = 1$ in most prior work, e.g.\ \citet{du2020few, tripuraneni2020theory}), as opposed to $\Omega(\dx)$. This illustrates the trend that as one increases the number of tasks, both the generalization error and sample requirement on each task converges to that of the $r$-dimensional regression of linear heads (as is the case if an optimal representation had been provided).

        \item We extend our bounds to (within-task) dependent data. Adapting ideas from recent work \citep{ziemann2022learning, ziemann2023noise}, we demonstrate that when task covariates are $\phi$-mixing, our generalization bounds scale with the independent-data rate. In particular, we avoid the effective sample-size deflation incurred by standard blocking techniques \citep{yu1994mixing, kuznetsov2017generalization}, relegating the effect of mixing to a mild increase of burn-in.
        
    \end{itemize}
    Notably, via our contributions, the guarantees in this work can be lifted from offline regression to various sequential decision-making settings, such as nonlinear system identification \citep{mania2022active, wagenmaker2023optimal} and stochastic contextual bandits \citep{foster2020beyond, simchi2022bypassing}. Stating our main theoretical result informally: 
    \begin{theorem}[Main result, informal]
        Let there be $T$ tasks and $N$ samples per task. Assume $N \geq C_{\mathrm{mix}}\blue{\Omega\paren{\dy r + \mathrm{C}(\calG)/T}}$, where $C_{\mathrm{mix}}$ characterizes the dependency of the covariates of each task. Then the excess transfer risk of ERM is bounded with high-probability:
        \begin{align*}
            \mathrm{ER}(\hat f^{(0)}, \hatg) \lesssim C_{\mathrm{task\;div}} \sigma^2 \paren{\frac{\mathrm{C}(\calG)}{NT} + \frac{\dy r}{N}},
        \end{align*}
        where $C_{\mathrm{task\;div}}$ characterizes the relatedness between the source tasks and the target task and $\sigma^2$ characterizes the level of the noise corrupting the measurements. 
    \end{theorem}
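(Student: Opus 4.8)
The plan is to decouple the argument into a \emph{source stage}, which certifies that the learned representation $\hatg$ is close to $g_\star$ uniformly over the directions excited by the source tasks, and a \emph{target stage}, which fine-tunes a linear head on top of the (now fixed) $\hatg$ and transfers the representation guarantee to the target covariate law via the task-diversity quantity $\nu_{\mathrm{div}}$. Throughout, we exploit realizability, $y^{(t)} = \ftstar g_\star(x) + \text{noise}$, together with the fact that, conditioned on the covariates, the noise is a martingale-difference sequence with respect to the natural filtration — this is what lets within-task dependency affect only the burn-in.

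First I would analyze the source ERM $(\hat f^{(1)},\dots,\hat f^{(T)},\hatg)$. Writing the averaged empirical excess risk and invoking optimality of the ERM, the standard basic inequality reduces matters to controlling a multi-task empirical process: the inner product between the stacked noise and the stacked prediction error $\hat f^{(t)}\hatg(x_i^{(t)}) - \ftstar g_\star(x_i^{(t)})$. I would bound this process by a localized (offset) complexity argument — a chaining / Dudley-type integral over $\calG$ contributing $\mathrm{C}(\calG)$ globally, and the finite-dimensional linearity of $\calF$ contributing a parametric $\dy r$ per task — so that the cross term is at most $\sigma\sqrt{\mathrm{C}(\calG) + T\dy r}$ times the square root of the averaged empirical error. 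Rearranging the resulting self-bounded (quadratic-in-$\sqrt{\cdot}$) inequality yields an averaged \emph{empirical} excess-risk bound of order $\sigma^2(\mathrm{C}(\calG) + T\dy r)/(NT)$. A separate step converts this to a population bound $\frac1T\sum_t \mathbb{E}_{x\sim p_t}\,\|\hat f^{(t)}\hatg(x) - \ftstar g_\star(x)\|^2 \lesssim \sigma^2\big(\mathrm{C}(\calG)/(NT) + \dy r/N\big)$; here one needs a restricted / lower-isometry statement equating empirical and population norms over the localized class, which is exactly where the burn-in $N \gtrsim \dy r + \mathrm{C}(\calG)/T$ is consumed, and where, under $\phi$-mixing, the inflation $C_{\mathrm{mix}}$ enters — crucially only in this lower-isometry step, not in the leading martingale term, so there is no effective-sample-size deflation in the rate.

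Given the averaged source guarantee, the target stage proceeds in two pieces. Since $\hatg$ is independent of the $N$ target samples, fitting $\hat f^{(0)}$ by least squares over the $r$-dimensional features $\hatg(x^{(0)})$ is ordinary vector-valued linear regression, so conditioned on $\hatg$ its excess risk splits into (i) a misspecification term driven by how well \emph{some} linear head composed with $\hatg$ approximates $f_\star^{(0)} g_\star$ on $p_0$, and (ii) an estimation term of the standard order $\sigma^2\dy r/N$ once the empirical second-moment matrix of $\hatg(x^{(0)})$ is well-conditioned — again a burn-in $\gtrsim \dy r$ (times $C_{\mathrm{mix}}$). For (i), the point of the refined diversity measure is that it upper-bounds the target-distribution representation error by $\nu_{\mathrm{div}}$ times the averaged source-distribution representation error controlled above, simultaneously accounting for overlap of the non-identical covariate laws $p_1,\dots,p_T$ with $p_0$ and for the conditioning of $\mathrm{span}\{\ftstar\}$. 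Combining (i) and (ii) gives $\mathrm{ER}(\hat f^{(0)},\hatg) \lesssim \nu_{\mathrm{div}}\,\sigma^2\,\mathrm{C}(\calG)/(NT) + \sigma^2\dy r/N$, and collecting the diversity factors into $C_{\mathrm{task\;div}}$ yields the claim.

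The main obstacle is the source-stage empirical-process bound, for three intertwined reasons: it must be (a) \emph{multiplicative in $\sigma^2$} rather than in the covariate diameter (unlike, e.g., \citet{tripuraneni2020theory}), which forces a genuinely localized analysis where the noise/error cross term is handled by self-bounding plus chaining rather than a crude uniform bound; (b) compatible with \emph{dependent data}, so the chaining must be run against a martingale-type tail inequality for the noise while the non-leading covariance terms are controlled via mixing without losing a factor of the mixing time in the rate; and (c) \emph{burn-in efficient}, i.e., the lower-isometry step must be carried out at the level of the $\dy r$-dimensional head geometry (and the intrinsic complexity $\mathrm{C}(\calG)$) rather than by naively standardizing the raw $\dx$-dimensional covariates per task. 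Achieving all three simultaneously — essentially, cleanly separating a martingale ``leading term'' that behaves as in the iid parametric case from a ``burn-in term'' that absorbs the nonlinearity, non-identical covariates, and mixing — is the crux; the remaining steps (the target linear regression and the diversity transfer) are comparatively routine given the source guarantee.
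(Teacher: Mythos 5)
Your proposal is correct and follows essentially the same route as the paper: the same two-term decomposition (target-stage $r$-dimensional least squares on $\hatg(X)$ plus $\nu^{-1}$ times the task-averaged source estimation error), the same offset/martingale-complexity-plus-chaining bound for the source stage with the lower-isometry (hypercontractivity) step absorbing the burn-in, and the same observation that mixing enters only through that isometry step (and the target-stage blocking) rather than the leading noise term. The only cosmetic difference is that you phrase the source bound as a self-bounded quadratic inequality where the paper works directly with the offset basic inequality, but these are interchangeable here.
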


\paragraph{Notation}
Expectation (resp.\ probability) with respect to the underlying probability space is denoted by $\sfE$ (resp.\ $\sfP$). For two probability measures $\sfP$ and $\sfQ$ defined on the same probability space, their total variation is denoted $\|\sfP-\sfQ\|_{\mathsf{TV}}$. 
For an integer $n\in \N$, we also define the shorthand $[n] \triangleq\{1,\dots,n\}$.  The Euclidean norm on $\mathbb{R}^{d}$ is denoted $\|\cdot\|_2$,
and the unit sphere in $\R^d$ is denoted $\mathbb{S}^{d-1}$. We also write $\norm{M}_2$ for the spectral norm. We use $A^\dagger$ to denote the Moore-Penrose pseudo-inverse of $A$. For two symmetric matrices $M, N$, we write $M \succ N$ ($M\succeq N)$ if $M-N$ is positive (semi-)definite. 
We use $\lesssim, \gtrsim$ to omit universal numerical factors, and $\toP$ to denote convergence in probability.


\paragraph{Samples} In general, we index tasks by superscript while \textit{within-task} samples are indexed by subscript, e.g.\ $x_i^{(t)}$ for task $t$ and sample $i$. Let $\sfP_i^{(t)}, t\in [T]$ be probability measures over a fixed sample space $\mathsf{S}$. We are given $N$ samples from each ``training task'' $t$: $\sit \sim \sfP^{(t)}_{i}, t\in [T], i\in [N]$. For convenience, we overload notation and understand $\Pt$ alternatively refers to the stationary distribution when $\sit$ are identically distributed or to a joint \textit{trajectory} distribution $\scurly{\sit}_{i=1}^N \sim \Pt$ otherwise. Similarly, when we omit within-task indices $i$, we understand $\Et[f(S)] \triangleq \frac{1}{N}\sumN \Et[f(s_i^{(t)})]$. We use superscript $^{1:T}$ to denote a uniform mixture, e.g.\ $\sfP^{1:T} \triangleq \frac{1}{T} \sumT \Pt$. This work focuses on supervised learning: the sample space decomposes into an input (covariate) space $\sfX$ and and output (label) space $\sfY$: $\mathsf{S} = \sfX \times \sfY$ and we write $\sit = (\xit,\yit)$. Moreover, we are given $N'$ samples from a target task distributed according to a probability measure $\Pzero$ over $\sfZ$: $(\xiz, \yiz)\sim \sfP_i^{(0)}, i\in [N']$. It will also be convenient to introduce empirical counterparts $\hatPt, \hatEt$, such that e.g.\ $\hatEt[f(X)] = \frac{1}{N}\sumN f(\xit)$. We generally denote covariance matrices\footnote{To be precise, second moment matrices.} by $\Sigma$, e.g.\ $\Sigma_x^{(t)} \triangleq \Et [X X^\top]$.



\ifshort{
\vspace{-0.3cm}
}

\subsection{Problem Formulation}\label{sec: prob formulation}

Given the above definitions of training and test/transfer distributions, we consider a prototypical regression problem, where the goal of the learner is to perform well on the target task in terms of square loss over a fixed hypothesis class $\calH$. To enable transfer and characterize the benefits of representation learning, we assume that the hypothesis class $\calH$ under consideration splits into $\calH = \calF \times \calG$. We define the optimal training-task predictors:
\begin{align*}
    (\scurly{f_\star^{(t)}}_{t=1}^T,g_\star) \in \argmin_{\substack{(\scurly{f^{(t)}},g) \\ \in \calF^{\otimes T} \times \calG}}  \sum_{t=1}^T \Et \|f^{(t)}\mem \circ g(X) - Y\|_2^2.
\end{align*}
Hence, to each task $t\in[T]$ we associate a task-specific ``head" $f_{\star}^{(t)}\mem \in \calF$, while enforcing a shared ``representation'' $g_\star \in \calG$. We further denote the optimal target-task head: $f^{(0)}_{\star}\in \argmin_{f\in \calF}  \Ezero \|f\circ\mem g_\star(X)-Y\|_2^2$. 
Using our samples from both target and training tasks, we seek to find an element $(f,g) \in \calF\times \calG $ that renders 
the excess risk on the target distribution as small as possible:
\begin{align}
  &\ER^{(0)}(f,g) \label{eq:excess_risk} \\
  \triangleq \;&\Ezero \|f\circ\mem g(X)- Y\|_2^2 - \Ezero \|f_\star^{(0)}\circ\mem g_\star(X)-Y\|_2^2. \nonumber
\end{align}
In particular, we study the excess risk of a standard two-stage empirical risk minimization scheme \citep{du2020few,tripuraneni2020theory}, where a representation $\hatg \in \calG$ is fit on data from the $T$ training tasks, and a target-task head $\hat{f}^{(0)}$ is fit on the target task data, \textit{passed through} $\hatg$:
\begin{align}
        (\scurly{\hat{f}^{(t)}}_{t=1}^T,\hat{g}) &\in \argmin_{\calF^{\otimes T} \times \calG} \sum_{t=1}^T\sum_{i=1}^N \|f^{(t)}\mem \circ g(\xit) - \yit\|_2^2 \label{eq: first-stage ERM} \\
        \hat{f}^{(0)} &\in \argmin_{f\in\calF} \sum_{i=1}^{N'} \|f \circ \hatg(\xiz) - \yiz \|_2^2. \label{eq: two-stage ERM}
\end{align}
Though our work mostly concerns the statistical properties of ERM, we note that in many practical settings with expressive $\calG$, the empirical loss on a given dataset can be effectively optimized, and the error incurred by an algorithm enters as an additive factor in the generalization bounds \citep{vaskevicius2020statistical}. Toward characterizing bounds on the above excess risk, we consider vector-valued inputs and outputs $\sfX \times \sfY \subseteq \R^{\dx} \times \R^{\dy}$. We consider the \textit{realizable} setting, i.e.\ there exist $(\scurly{\ftstar}_{t=0}^{T}, g_\star)$ such that the noise term $W^{(t)} \triangleq 
Y^{(t)} - \ftstar\circ g_\star(X^{(t)})$ is a (conditionally) zero-mean process for every task. Importantly, we note $(\scurly{\ftstar}_{t=0}^{T}, g_\star)$ is generally not unique, e.g.\ if $\calF, \calG$ are both linear classes, $f_\star^{(t)}(z) = F^{(t)}_\star z \to F^{(t)}_\star Qz$ and $g_\star(x) = G_\star x \to Q^{-1} G_\star x$ remain optimal. The results in this paper should be understood to hold for \textit{any} tuple of optimal hypotheses $(\scurly{\ftstar}_{t=0}^{T}, g_\star)$.
\begin{assumption}\label{ass: conditional-subG noise}
    Given a filtration $\scurly{\scrF^{(t)}_{i}}_{i\geq 1}$ to which $\scurly{x_{i-1}^{(t)}}_{i\geq 1}$ is adapted, i.e.\ $\xit$ is predictable with respect to $\scrF^{(t)}_i$, for each $t \in [T]$, the noise sequence $\scurly{\wit}_{i\geq 1}$ is a $\sigma_{\sfW}^2$-\textit{conditionally subgaussian martingale difference sequence}:
    \begin{enumerate}[label=\alph*), itemsep=0pt]
        \item $\Et[\wit | \scrF^{(t)}_{i-1}] = 0$.
        \item $\Et[\exp(\lambda \langle\wit, v\rangle) \mid \scrF^{(t)}_{i-1}] \leq \exp(\lambda^2\sigma_{\sfW}^2/2)$, for all $\lambda \in \R$, $v \in \mathbb{S}^{\dy - 1}$, $i \geq 1$.
    \end{enumerate}
\end{assumption}
\Cref{ass: conditional-subG noise} simply asserts the noise is independent, zero-mean subgaussian for independent data, with the additional formalism necessary when extending to sequentially dependent settings.

\vspace{-0.3cm}
\section{Main Results}\label{sec: main results}

In this section, we present our main results and the key steps in the proof. Firstly, we present the main definitions and assumptions in \Cref{sec: canonical decomposition}, and convert the target-task excess risk to quantities defined over the training tasks. We then instantiate in \Cref{sec: main; indep cov finite class} a basic setting where $\calG$ is finite and within-task samples are iid, but task-wise covariate distributions may be non-identical, $\Ptx \neq \sfP^{(t')}_{\sfX}$, in order to highlight the benefits brought by our analysis. 
In \Cref{sec: main; rep learning with little mixing}, we lift our results to general representations $\calG$ and settings where within-task samples may be sequentially dependent. In particular, we leverage recent literature to shift the effect of dependency to the burn-in, resulting in rates analogous to the independent data setting.

\vspace{-0.25cm}
\subsection{Task Diversity and A Canonical Decomposition}\label{sec: canonical decomposition}

A non-vacuous bound on the excess transfer risk is only possible if the source tasks are somehow informative for the target task. Therefore, a pervasive step in establishing a bound lies in relating the risk on the target task to the average risk over the training tasks, where the quality of this relation is determined by a ``task-diversity'' condition. To make this concrete, we adapt such a condition from \cite{tripuraneni2020theory}.
\begin{definition}[Task-Diversity \citep{tripuraneni2020theory}]\label{ass: id}
The training tasks satisfy a task-diversity condition at level $\nu >0$, if for any $g \in \calG$ the following holds:
\ifshort{
\begin{align}\label{eq: nu task diversity}\tag{\texttt{TD}}
    \begin{split}
        &\inf_{f\in \calF} \ER(f,g)
        \\
        \leq \;& \frac{\nu^{-1}}{ T }\sumT\inf_{f^{(t)}}\; \Et \|f^{(t)}\circ g(X)- \ftstar \circ g_\star(X)\|_2^2
    \end{split}
\end{align}
}
{
\begin{align}\label{eq: nu task diversity}\tag{\texttt{TD}}
    \begin{split}
        \inf_{f\in \calF} \ER^{(0)}(f,g) &\leq \frac{\nu^{-1}}{ T }\sumT \inf_{f\in \calF} \ER^{(t)}(f,g)\\
        &= \frac{\nu^{-1}}{ T }\sumT\inf_{f^{(t)}}\; \Et \|f^{(t)}\circ g(X)- \ftstar \circ g_\star(X)\|_2^2.
    \end{split}
\end{align}
}
\end{definition}
Intuitively, \eqref{eq: nu task diversity} measures to what degree generalization on-average across source tasks \textit{certifies} generalization on the target task.\footnote{We note that that $\nu$ must hold given any $g$, but $f$ is assumed to be optimized per-task. This is weaker than $\nu$ holding over every $f^{(0)}, f^{(1)}, \dots, f^{(T)} \in \calF$, $g \in \calG$, and is better fit for the two-stage ERM framework.} We then consider a trivial canonical risk decomposition:
\ifshort{
\begin{align*}
    &\ER(f,g) \\
   =\;& \Ezero \norm{f\mem \circ g(X) - Y}_2^2  - \inf_{f' \in \calF} \Ezero\norm{f'\mem\circ g(X) - Y}_2^2  \\
   +\;&\inf_{f' \in \calF} \Ezero\norm{f'\mem\circ g(X) - Y}_2^2 - \Ezero \|f_\star^{(0)}\mem \circ g_\star(X) - Y\|_2^2.
\end{align*}
}
{
\begin{align*}
    \ER^{(0)}(f,g) &= \Ezero \norm{f\mem \circ g(X) - Y}_2^2  - \inf_{f' \in \calF} \Ezero\norm{f'\mem\circ g(X) - Y}_2^2  \\
   &\quad+ \inf_{f' \in \calF} \Ezero\norm{f'\mem\circ g(X) - Y}_2^2 - \Ezero \|f_\star^{(0)}\mem \circ g_\star(X) - Y\|_2^2.
\end{align*}
}
Applying the task-diversity condition \eqref{eq: nu task diversity} to the last line, and observing any plug-in $f^{(t)}$ upper bounds each infimum (in particular $f^{(t)} = \fthat$) yields the following result.
\begin{lemma}\label{lemma: excess risk two term decomp}
    Let $(\scurly{\fthat}_{t=0}^{T}, \hat g)$ be the output of the two-stage ERM \eqref{eq: two-stage ERM}. Assuming the task-diversity condition \eqref{ass: id} holds at level $\nu > 0$, then
    \ifshort{
    \begin{align}
        &\hspace{-0.3cm}\ER(\hat{f}^{(0)},\hatg) \nonumber \leq \\
        &\hspace{-0.3cm}\overset{(\text{Risk of non-realizable regression on targets }\hat g(X))}{\Ezero \|\hat{f}^{(0)}\mem \circ \hat g(X) - Y\|_2^2  - \inf_{f'\mem \in \calF} \Ezero\snorm{f'\circ \hat g(X) - Y}_2^2} \label{eq: non-realizable regression}\\
        &\hspace{-0.3cm}+ \overset{(\text{Task-average estimation error of training task predictors})}{\frac{\nu^{-1}}{T} \sumT \Et \snorm{\hat{f}^{(t)}\mem \circ \hat g(X) - \ftstar\mem \circ g_\star(X)}_2^2.} \label{eq: task-averaged est error}
    \end{align}
    }
    {
    \begin{align}
        \ER^{(0)}(\hat{f}^{(0)},\hatg) &\leq \overset{(\text{Risk of non-realizable regression on targets }\hat g(X))}{\Ezero \|\hat{f}^{(0)}\mem \circ \hat g(X) - Y\|_2^2  - \inf_{f'\mem \in \calF} \Ezero\snorm{f'\circ \hat g(X) - Y}_2^2} \label{eq: non-realizable regression}\\
        &\quad + \overset{(\text{Task-averaged estimation error of training task predictors})}{\frac{\nu^{-1}}{T} \sumT \Et \snorm{\hat{f}^{(t)}\mem \circ \hat g(X) - \ftstar\mem \circ g_\star(X)}_2^2.} \label{eq: task-averaged est error}
    \end{align}
    }
\end{lemma}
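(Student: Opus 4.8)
The plan is to derive the bound directly from the deterministic risk decomposition displayed immediately above the statement, together with a single application of the task-diversity condition \eqref{eq: nu task diversity} evaluated at the learned representation $\hat g$. First I would instantiate that decomposition with $f=\hat f^{(0)}$ and $g=\hat g$, which expresses $\ER^{(0)}(\hat f^{(0)},\hat g)$ as a sum of two brackets. The first bracket is, verbatim, the ``non-realizable regression'' term \eqref{eq: non-realizable regression}, so nothing is needed there. Using the definition \eqref{eq:excess_risk} of the excess risk, the second bracket, $\inf_{f'\in\calF}\Ezero\|f'\circ\hat g(X)-Y\|_2^2-\Ezero\|f_\star^{(0)}\circ g_\star(X)-Y\|_2^2$, is exactly $\inf_{f\in\calF}\ER^{(0)}(f,\hat g)$. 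Hence it remains only to upper bound $\inf_{f\in\calF}\ER^{(0)}(f,\hat g)$ by the task-averaged estimation error \eqref{eq: task-averaged est error}.

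Next I would invoke the task-diversity condition \eqref{eq: nu task diversity} with the choice $g=\hat g$. This is legitimate because \eqref{eq: nu task diversity} is asserted to hold \emph{uniformly over all} $g\in\calG$, hence in particular for the data-dependent minimizer $\hat g$ of the first-stage ERM \eqref{eq: first-stage ERM}; since the condition is used exactly as stated, no measurability or independence issue arises. Taking the right-hand side in the per-task estimation-error form in which \eqref{eq: nu task diversity} is stated in \Cref{ass: id} gives
\[
\inf_{f\in\calF}\ER^{(0)}(f,\hat g)\;\le\;\frac{\nu^{-1}}{T}\sumT\inf_{f^{(t)}\in\calF}\Et\big\|f^{(t)}\circ\hat g(X)-f_\star^{(t)}\circ g_\star(X)\big\|_2^2.
\]

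Finally, for each $t\in[T]$ I would bound the per-task infimum from above by the feasible plug-in $f^{(t)}=\hat f^{(t)}$, i.e.\ the first-stage ERM head from \eqref{eq: first-stage ERM},
\[
\inf_{f^{(t)}\in\calF}\Et\big\|f^{(t)}\circ\hat g(X)-f_\star^{(t)}\circ g_\star(X)\big\|_2^2\;\le\;\Et\big\|\hat f^{(t)}\circ\hat g(X)-f_\star^{(t)}\circ g_\star(X)\big\|_2^2,
\]
and then chain the three displays to recover the claimed inequality.

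I do not anticipate a genuine obstacle: this is essentially a bookkeeping lemma, and it uses no optimality property of either $\hat f^{(0)}$ or $\hat g$ --- those are exploited only later, when \eqref{eq: non-realizable regression} and \eqref{eq: task-averaged est error} are each bounded. The only two points that warrant a line of justification are (i) that \eqref{eq: nu task diversity} may be applied at the random representation $\hat g$ precisely because it is a statement uniform over $\calG$, and (ii) that the infima over $f^{(t)}$ may be replaced by $\hat f^{(t)}$ irrespective of how the latter were produced --- here, jointly with $\hat g$ in the first-stage ERM \eqref{eq: first-stage ERM}.
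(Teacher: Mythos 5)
Your proposal is correct and follows exactly the paper's own argument: instantiate the canonical decomposition at $(\hat f^{(0)},\hat g)$, recognize the second bracket as $\inf_{f\in\calF}\ER^{(0)}(f,\hat g)$, apply \eqref{eq: nu task diversity} at $g=\hat g$ (valid since the condition is uniform over $\calG$), and replace each per-task infimum by the plug-in $\hat f^{(t)}$. No differences worth noting.
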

As outlined above, the risk of the transfer task predictor can be bounded by two main terms. The former term is precisely the excess risk of target-task head $\hat f^{(0)}$ over the $r$-dimensional inputs $\hat g(X)$; since $\hat g$ via the two-stage ERM is statistically independent of $\Pzero$, it may be treated as fixed. Since generally $\hat g \neq g_\star$, regressing $Y$ against $\hat g(X)$ is \textit{non-realizable}. In particular, this breaks the (conditional) independence between the error $u_i = y_i - f \circ \hat g(x_i)$ and covariate $x_i$. The latter term \eqref{eq: task-averaged est error} is the population task-averaged estimation error of the ERM predictors $\fthat \circ \hat g$ with respect to optimal $\ftstar \circ g_\star$, adjusted by task diversity parameter $\nu$.

\Cref{lemma: excess risk two term decomp} and definitions therein thus far hold for general composite classes $\calF \times \calG$. Toward substantiating bounds on \eqref{eq: non-realizable regression} and \eqref{eq: task-averaged est error}, we consider a prevalent model of non-linear representation learning, where $\calG$ is an arbitrary function class that embeds the inputs into a low-dimensional latent space in $\R^r$, and the task-specific heads act linearly on $\R^r$ \citep{du2020few, meunier2023nonlinear, collins2023provable}. Besides being an established theoretical model, we mention that last-layer finetuning (alternatively known as linear probing) has empirical benefits for multi-task / out-of-distribution transfer compared to fine-tuning the full model \citep{kumar2022fine, lee2023surgical}, and that more complex task-specification classes $\calF$ can lead to provable and empirical difficulties extracting task diversity \citep{xu2021representation}.
\begin{assumption}[Low-dim.\ representations]\label{ass: low-dim reps}
    The representation class $\calG$ embeds inputs to $\R^r$, $g : \R^{\dx} \to \R^{r} \;\forall g \in \calG$, where $r \leq \dx$. The task-specific head class $\calF$ is linear: $\calF = \scurly{f: f(z) = Fz,\;F \in \R^{\dy \times r}}$. 
\end{assumption}
In particular, \Cref{ass: low-dim reps} turns bounding \eqref{eq: non-realizable regression} into bounding the excess risk of a non-realizable least squares problem \citep{ziemann2023noise}.
We now discuss sufficient conditions to quantify the task-diversity parameter $\nu$. We define the following ``task-coverage'' condition.
    
\begin{definition}\label{def: mu task coverage}
    For a given $g \in \calG$, define the stacked covariance matrices and the corresponding Schur complements for each $t = 0,\dots, T$:
    \begin{align*}
        \Sigma_g^{(t)} &\triangleq \Et \brac{\bmat{g(X) \\ g_\star(X)} \bmat{g(X) \\ g_\star(X)}^\top} \\
        \overline\Sigma_g^{(t)} &\triangleq \Et\brac{g_\star(X) g_\star(X)^\top} - \Et\brac{g_\star(X) g(X)^\top} \Et\brac{g(X) g(X)^\top}^{-1} \Et\brac{g(X) g_\star(X)^\top} 
    \end{align*}
    We say the \emph{task-coverage condition} holds if there exists a constant $\mu_{\sfX} > 0$ such that for all $g \in \calG$:
    \begin{align}\label{eq: mu task coverage}\tag{\texttt{TC}}
        \overline \Sigma_g^{(0)} \preceq \mu_{\sfX} \overline\Sigma_g^{(t)}.
    \end{align}
    This smallest such $\mu_{\sfX}$ may be defined as
    \begin{align*}
        \mu_{\sfX} \triangleq \max_{g \in \calG} \max_{t \in [T]} \snorm{(\overline\Sigma_g^{(t)})^{\dagger/2} \overline\Sigma_g^{(0)}(\overline\Sigma_g^{(t)})^{\dagger/2}}_2.
    \end{align*}
\end{definition}
\vspace{-0.2cm}
Since Schur complements preserve psd ordering, \Cref{def: mu task coverage} is immediately implied by psd dominance of the covariance matrices $\Sigma_g^{(0)} \preceq \mu_{\sfX} \Sigma_g^{(t)}$, $t \in [T]$. Furthermore, we may relax the maximum over all $t$ to any constant-fraction subset of $[T]$, or more intricate notions that also weight the contribution of each task in \eqref{eq: nu task diversity} beyond the uniform $1/T$. However, for conciseness we do not discuss these extensions. Intuitively, \Cref{def: mu task coverage} quantifies the degree to which each training task covariate distribution covers the target covariate distribution, passed through the representation class. Larger $\mu_{\sfX}$ implies ``worse'' coverage, affecting the transfer learning rate.
\begin{remark}\label{remark: examples of mu task coverage}
    When covariates are identically distributed for all tasks $\Ptx = \sfP^{(t')}_\sfX$, $t,t' \in \scurly{0,\dots, T}$, then $\mu_{\sfX}$-\eqref{eq: mu task coverage} holds with equality and $\mu_{\sfX} = 1$.
    When $\calG$ is a linear class $g(x) = Gx$, then $\mu_{\sfX}$-\eqref{eq: mu task coverage} holds for any $\mu_{\sfX}$ such that $\Sigma_{\sfX}^{(0)} \preceq \mu_{\sfX} \Sigma_{\sfX}^{(t)}$, $t \in [T]$, which follows from combining $\Sigma^{(t)}_g = \bmat{G \\ G_\star} \Sigma_{\sfX}^{(t)} \bmat{G \\ G_\star}^\top$ with the fact $P \preceq Q$ implies $M P M^\top \preceq M Q M^\top$ for any $M$. Notably, this recovers the ``$c$'' parameter in \citet[Assumption 4.2]{du2020few} $c\Sigma_{\sfX}^{(0)} \preceq \Sigma_{\sfX}^{(t)}\;\forall t\in[T]$.
\end{remark}
One of our key results demonstrates our notion of task-coverage implies a bound on the task-diversity parameter, captured in the following result.
\begin{restatable}[\eqref{eq: mu task coverage}$\implies$\eqref{eq: nu task diversity}]{proposition}{TaskCoverageTaskDiversity}\label{prop: task coverage -> task diversity}
    Let \Cref{ass: low-dim reps} hold. Define $\bfF_\star^{(0)} \triangleq \Fzerostar^\top \Fzerostar$ and $\bfF^{1:T}_\star \triangleq \frac{1}{T}\sumT \Ftstar^\top \Ftstar \in \R^{r \times r}$, and suppose $\range(\bfF_\star^{(0)}) \subseteq \range(\bfF^{1:T}_\star)$. Define the head-coverage coefficient
    \begin{align}\label{eq: head coverage}
        \mu_{F} \triangleq \snorm{(\bfF^{1:T}_\star)^{\dagger/2} \bfF_\star^{(0)}(\bfF^{1:T}_\star)^{\dagger/2}}_2.
    \end{align}
    Then any problem instance satisfying $\mu_{\sfX}$-\eqref{eq: mu task coverage} also satisfies $\nu$-\eqref{eq: nu task diversity} with $\nu^{-1} = \mu_{\sfX} \mu_{F}$.
\end{restatable}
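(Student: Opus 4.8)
The plan is to reduce $\nu$-\eqref{eq: nu task diversity} to a Loewner-order inequality between traces of products of PSD matrices, and then to close it with two successive applications of trace monotonicity — one for the covariates (via $\mu_{\sfX}$) and one for the heads (via $\mu_F$) — which is exactly what produces the product constant $\nu^{-1}=\mu_{\sfX}\mu_F$.

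\textbf{Step 1: closed forms for both sides of \eqref{eq: nu task diversity}.} Under \Cref{ass: low-dim reps} each $f\in\calF$ is an $F\in\R^{\dy\times r}$, and by realizability ($Y^{(t)}=F_\star^{(t)}g_\star(X)+W^{(t)}$ with $W^{(t)}$ conditionally zero-mean) together with predictability of the covariates, the noise is orthogonal to any fixed function of $X$; hence for every $g\in\calG$ we get $\inf_{f\in\calF}\ER^{(0)}(f,g)=\inf_{F}\Ezero\|Fg(X)-F_\star^{(0)}g_\star(X)\|_2^2$, while the summands on the right-hand side of \eqref{eq: nu task diversity} are already of this form. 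For a fixed task $t$ (including $t=0$), writing $A:=\Et[g(X)g(X)^\top]$ and $B:=\Et[g(X)g_\star(X)^\top]$, the minimizer over $F$ of $\Et\|Fg(X)-F_\star^{(t)}g_\star(X)\|_2^2$ is $F=F_\star^{(t)}B^\top A^\dagger$ (well-defined since $\range(B)\subseteq\range(A)$), and substitution gives the residual
\[ \inf_{F}\Et\|Fg(X)-F_\star^{(t)}g_\star(X)\|_2^2 = \operatorname{tr}\!\big(F_\star^{(t)}\,\overline\Sigma_g^{(t)}\,(F_\star^{(t)})^\top\big) = \operatorname{tr}\!\big(\overline\Sigma_g^{(t)}\,\bfF_\star^{(t)}\big), \]
where $\overline\Sigma_g^{(t)}$ is the Schur complement of \Cref{def: mu task coverage} (reading the inverse there as a pseudo-inverse when $A$ is singular; the residual does not depend on the choice of generalized inverse) and $\bfF_\star^{(t)}:=(F_\star^{(t)})^\top F_\star^{(t)}$. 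Thus \eqref{eq: nu task diversity} is equivalent to $\operatorname{tr}\!\big(\overline\Sigma_g^{(0)}\bfF_\star^{(0)}\big)\le \mu_{\sfX}\mu_F\cdot\frac1T\sumT\operatorname{tr}\!\big(\overline\Sigma_g^{(t)}\bfF_\star^{(t)}\big)$.

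\textbf{Step 2: two monotonicity steps.} From $\mu_F=\snorm{(\bfF_\star^{1:T})^{\dagger/2}\bfF_\star^{(0)}(\bfF_\star^{1:T})^{\dagger/2}}_2$ and $\range(\bfF_\star^{(0)})\subseteq\range(\bfF_\star^{1:T})$ I obtain the Loewner bound $\bfF_\star^{(0)}\preceq\mu_F\bfF_\star^{1:T}$: conjugating $(\bfF_\star^{1:T})^{\dagger/2}\bfF_\star^{(0)}(\bfF_\star^{1:T})^{\dagger/2}\preceq\mu_F I$ by $(\bfF_\star^{1:T})^{1/2}$ gives $\Pi\bfF_\star^{(0)}\Pi\preceq\mu_F\bfF_\star^{1:T}$ with $\Pi$ the orthogonal projector onto $\range(\bfF_\star^{1:T})$, and the range hypothesis makes $\Pi\bfF_\star^{(0)}\Pi=\bfF_\star^{(0)}$. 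Since $\overline\Sigma_g^{(0)}\succeq 0$ and the trace of a product of two PSD matrices is nonnegative, this yields $\operatorname{tr}(\overline\Sigma_g^{(0)}\bfF_\star^{(0)})\le\mu_F\operatorname{tr}(\overline\Sigma_g^{(0)}\bfF_\star^{1:T})=\mu_F\cdot\frac1T\sumT\operatorname{tr}(\overline\Sigma_g^{(0)}\bfF_\star^{(t)})$. Now invoke $\mu_{\sfX}$-\eqref{eq: mu task coverage}, i.e.\ $\overline\Sigma_g^{(0)}\preceq\mu_{\sfX}\overline\Sigma_g^{(t)}$, against $\bfF_\star^{(t)}\succeq 0$ term by term to get $\operatorname{tr}(\overline\Sigma_g^{(0)}\bfF_\star^{(t)})\le\mu_{\sfX}\operatorname{tr}(\overline\Sigma_g^{(t)}\bfF_\star^{(t)})$, and average over $t\in[T]$. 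Chaining the two bounds gives the inequality of Step 1 with constant $\mu_{\sfX}\mu_F$, i.e.\ \eqref{eq: nu task diversity} holds with $\nu^{-1}=\mu_{\sfX}\mu_F$.

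\textbf{Main obstacle.} The argument is conceptually short; the only place demanding care is the linear-algebra bookkeeping when $\Et[g(X)g(X)^\top]$ and/or $\bfF_\star^{1:T}$ are rank-deficient — verifying that the Schur-complement residual identity of Step 1 holds with a pseudo-inverse and is independent of which generalized inverse is used, and that the Loewner bound $\bfF_\star^{(0)}\preceq\mu_F\bfF_\star^{1:T}$ genuinely rests on the hypothesis $\range(\bfF_\star^{(0)})\subseteq\range(\bfF_\star^{1:T})$. Everything else is trace–PSD monotonicity, applied once for the heads and once for the covariates, which is precisely how the product structure $\nu^{-1}=\mu_{\sfX}\mu_F$ emerges.
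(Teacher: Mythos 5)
Your proposal is correct and follows essentially the same route as the paper: both reduce each side of \eqref{eq: nu task diversity} to the Schur-complement residual $\operatorname{tr}(F_\star^{(t)}\overline\Sigma_g^{(t)}(F_\star^{(t)})^\top)$ via partial minimization of the quadratic form, and then chain a head-coverage step (using $\mu_F$ and the range hypothesis, which you phrase as the Loewner bound $\bfF_\star^{(0)}\preceq\mu_F\bfF_\star^{1:T}$ where the paper instead inserts the projector $(\bfF^{1:T}_\star)^{\dagger/2}(\bfF^{1:T}_\star)^{1/2}$ inside the trace — an equivalent manipulation) with a covariate-coverage step using $\mu_{\sfX}$-\eqref{eq: mu task coverage} and trace--PSD monotonicity. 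Your extra care about pseudo-inverses in the rank-deficient case is a harmless refinement the paper elides.
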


The proof is found in \Cref{appendix: main results}. It may be helpful to consider scalar outputs $\dy = 1$, where the range requirement of \Cref{prop: task coverage -> task diversity} is equivalent to $\Fzerostar \in \Span(F^{(1)}_\star, \dots, F^{(T)}_\star)$. If this is not satisfied, then in the worst case $\sfP^{(0)}_{\sfX}$ may only hit the component $\Fzerostar$ orthogonal to the span, for which the training data is uninformative. We also note that $\mu_F = \snorm{(\bfF^{1:T}_\star)^{\dagger/2} \bfF_\star^{(0)}(\bfF^{1:T}_\star)^{\dagger/2}}_2$ is precisely the generalization proposed in \cite{zhang2023multi} of the ``task-diversity parameter'' \citep{du2020few, tripuraneni2021provable}. Therein, task-diversity is certified by directly assuming normalization and well-conditioning $\lambda_i(\bfF^{1:T}_\star) = \Theta(T/r)$, $i=1,\dots, r$. Generality aside, this also accrues an additional factor of $r$ in the final rates when the eigenvalues of $\bfF_\star^{(0)}$ non-uniform (see \citet[Remark 4.2]{du2020few}). 

\begin{remark}[Robustness to overspecified $r$]
    Another consequence of uniformity assumptions on the training task heads $\bfF^{1:T}$ is that the representation dimension $r$ must in general be exactly specified: underspecification (i.e.\ $r$ is set lower than necessary) leads to non-realizability, and overspecification in general implies $\bfF^{1:T}_\star$ may not be full-rank, let alone have approximately uniform eigenvalues. Since in practice the representation dimension is often a user-specified parameter (e.g.\ hidden dimension of a neural net), it is important that a certificate of task-diversity only considers the range of optimal training-task heads, as in \Cref{prop: task coverage -> task diversity}.
\end{remark}

Compared to prior work, we suggest that task diversity should actually be measured by the joint quantity $\mu_{\sfX} \mu_F$. Whereas $\mu_F$ is precisely $\nu^{-1}$ when task covariates are identical (\Cref{remark: examples of mu task coverage}), in general the alignment of the train and target task heads \textit{and} of the covariate distributions both contribute to task diversity. Pathologically, if the train and target task covariates have disjoint supports, even if the heads are identical $\Fzerostar = \Ftstar,\;\forall t \in [T]$ ($\mu_F = 1$), the error induced by a given $(F,g)$ on the training distributions is in general uninformative to that on the target distribution. Similarly, non-trivial transfer risk is generally impossible when $\range (\bfF_\star^{(0)}) \not\subseteq \range (\bfF^{1:T}_\star)$, even when $\sfP^{(0)}_{\sfX} = \sfP^{(t)}_{\sfX}$, $\forall t \in [T]$.

\subsubsection*{An Excursion: Directly Estimating $\nu$}

We have demonstrated how task-diversity (\Cref{ass: id}) can be leveraged to bound the excess risk on the transfer task by the training-task-averaged risk. Furthermore, we demonstrated how the abstract task diversity coefficient $\nu$ can be bounded in a problem-independent manner by measures of covariate and head coverage $\mu_{\sfX}, \mu_{F}$ that illustrate the scaling of task diversity as tasks deviate from identicality. However, if the goal is to numerically \textit{estimate} the task-diversity of a set of tasks $t = 0,\dots, T$ from data, then we demonstrate that this is possible in our setting, even though the excess risks $\ER^{(t)}$ are generally unsavory to directly estimate. We note that $\nu$ is defined as a uniform quantity over $\calG$, which is generally conservative and intractable to search over. We therefore consider estimating the task-diversity induced by a \textit{given} representation $g \in \calG$:
\begin{align}\label{eq: rep-conditioned task-diversity}
    \nu(g) \triangleq \paren{\frac{1}{T}\sumT \inf_{f\in \calF} \ER^{(t)}(f,g)}/\inf_{f\in \calF} \ER^{(0)}(f,g).
\end{align}
By this definition $\nu(g)$ is a measure of how ``informative'' the pretraining tasks $t =1,\dots,T$ are for certifying the excess risk of a given representation $g$ for the target task, assuming that an optimal head $F^{(t)}$ had been fit on each task.

\begin{definition}\label{def: task div estimator}
    Given $g \in \calG$ and a batch of data $\scurly{(\xit, \yit)}_{i=1,t=0}^{N, T}$, define the $g$-conditioned empirical least squares heads:
    \begin{align*}
        \widehat F_{g}^{(t)} \triangleq \argmin_{F \in \calF} \;\hatEt \brac{\snorm{Y - F g(X)}^2}, \quad t = 0,\dots, T.
    \end{align*}
    We define the following estimator of $\nu(g)$:
    \begin{align}\label{eq: task div estimator}
    \hat\nu_N(g) \triangleq \frac{\frac{1}{T}\sumT \hatEt[\snorm{Y}_2^2] - \trace\paren{\widehat F_{g}^{(t)} \hatEt[g(X)g(X)^\top] \mathopen{}\widehat F_{g}^{(t)}\mathclose{}^\top}}{ \hatEzero \brac{\snorm{Y}_2^2} - \trace\paren{\widehat F_{g}^{(0)} \hatEzero[g(X)g(X)^\top] \mathopen{}\widehat F_{g}^{(0)}\mathclose{}^\top} }.
    \end{align}
\end{definition}
We note that $\hat\nu_N(g)$ does not require any additional information beyond the data batch and the given representation $g$. It turns out this simple estimator can be shown to be consistent.
\begin{restatable}[Convergence of $\hat\nu_N(g)$]{proposition}{taskdivconsistency}\label{prop: consistency of nu(g)}
    Let \Cref{ass: conditional-subG noise} and \Cref{ass: low-dim reps} hold. For convenience, assume $\wit = 0$ for all $i, t$, and $\xit$ are iid across $i$ for each $t = 0, \dots, T$.
    Then, the empirical estimate $\hat \nu_N (g)$ is consistent: $\hat\nu_N(g) \toP \nu(g)$.
\end{restatable}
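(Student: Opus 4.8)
The plan is to rewrite both $\nu(g)$ and its estimator $\hat\nu_N(g)$ as ratios of least-squares residuals — population and empirical, respectively — and then show that numerator and denominator each converge in probability, invoking the law of large numbers and the continuous mapping theorem. First I would use the noiseless realizable assumption to simplify: since $\wit\equiv 0$ forces $\yit = \Ftstar g_\star(\xit)$, for $f(z)=Fz$ we get $\ER^{(t)}(f,g) = \Et\snorm{Fg(X)-Y}_2^2$, so $\inf_{f\in\calF}\ER^{(t)}(f,g)$ is exactly the population residual of regressing $Y$ on $g(X)$, which by the normal equations equals $\Et\snorm{Y}_2^2 - \trace(F_g^{(t)}\Et[g(X)g(X)^\top]F_g^{(t)\top})$ with $F_g^{(t)} = \Et[Yg(X)^\top]\Et[g(X)g(X)^\top]^{-1}$. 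The numerator and denominator appearing in \eqref{eq: task div estimator} are precisely the $\hatEt$-counterparts, i.e.\ the empirical least-squares residuals with head $\widehat F_g^{(t)}$. Thus $\nu(g) = A/B$ and $\hat\nu_N(g) = \hat A_N/\hat B_N$, where $A \triangleq \frac1T\sumT\inf_f\ER^{(t)}(f,g)$, $B \triangleq \inf_f\ER^{(0)}(f,g)$, and $\hat A_N,\hat B_N$ are the corresponding empirical quantities.

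Second, for each fixed task $t\in\{0,\dots,T\}$, since the $\xit$ are iid across $i$ (and $\yit$ is a deterministic function of $\xit$), the weak law of large numbers gives $\hatEt[g(X)g(X)^\top]\toP\Et[g(X)g(X)^\top]$, $\hatEt[Yg(X)^\top]\toP\Et[Yg(X)^\top]$, and $\hatEt\snorm{Y}_2^2\toP\Et\snorm{Y}_2^2$, under finiteness of the relevant second moments (a standing regularity condition; the restriction to noiseless data in the statement is only for brevity — with a finite-variance martingale-difference noise one simply adds an extra vanishing LLN term). Assuming the mild non-degeneracy $\Et[g(X)g(X)^\top]\succ 0$, with probability tending to one $\hatEt[g(X)g(X)^\top]$ is invertible, so $\widehat F_g^{(t)}$ is eventually the unique minimizer; matrix inversion being continuous on the invertibles, the continuous mapping theorem yields $\widehat F_g^{(t)}\toP F_g^{(t)}$, and hence $\trace(\widehat F_g^{(t)}\hatEt[g(X)g(X)^\top]\widehat F_g^{(t)\top})\toP\trace(F_g^{(t)}\Et[g(X)g(X)^\top]F_g^{(t)\top})$.

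Third, combining over the finitely many tasks $t=0,\dots,T$ gives $\hat A_N\toP A$ and $\hat B_N\toP B$. Because $B=\inf_f\ER^{(0)}(f,g)>0$ — which is exactly the condition making $\nu(g)$ in \eqref{eq: rep-conditioned task-diversity} well-defined — the map $(a,b)\mapsto a/b$ is continuous at $(A,B)$, so Slutsky's theorem (equivalently, the continuous mapping theorem) gives $\hat\nu_N(g)=\hat A_N/\hat B_N\toP A/B=\nu(g)$, as claimed.

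The main obstacle is the potential rank-deficiency of $\Et[g(X)g(X)^\top]$: there the least-squares head is not unique and the pseudo-inverse is discontinuous across rank jumps, so both the well-posedness of $\widehat F_g^{(t)}$ and its convergence need care. The fix I would use is to note that the residual $\inf_F\Et\snorm{Y-Fg(X)}_2^2$ depends only on the orthogonal projection onto $\range(\Et[g(X)g(X)^\top])$, that $\range(\hatEt[g(X)g(X)^\top])$ coincides with that subspace with probability tending to one, and that the Moore--Penrose pseudo-inverse is continuous when restricted to matrices of a fixed rank — so the second and third steps go through verbatim with inverses replaced by pseudo-inverses. Everything else is routine.
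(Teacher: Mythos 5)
Your proposal is correct and follows essentially the same route as the paper's proof: express each of the $T+1$ terms as an empirical least-squares residual, identify its population counterpart via the closed-form residual $\trace(\Ftstar \overline\Sigma_g^{(t)} \Ftstar^\top)$ from \eqref{eq: partial quadratic form}, and conclude by the law of large numbers plus continuous mapping on the ratio. If anything you are more careful than the paper, which silently uses pseudo-inverses and glosses over both the rank-deficiency/continuity issue and the requirement $\inf_f \ER^{(0)}(f,g) > 0$ that you make explicit.
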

The proof of \Cref{prop: consistency of nu(g)} can be found in \Cref{appdx: est task diversity}. We note that convergence of $\hat\nu_N(g)$ can likely be refined for finite-sample guarantees; we leave this to future work. When noise $\wit$ is present and we depart from independence, we expect $\hat\nu(g)$ (or most other estimators) to be biased. However, for low-noise settings, we note if both the numerator and denominator of \eqref{eq: task div estimator} are small (i.e.\ close to the noise-level), this means that $g$ is already close to optimal. In summary, in addition to being an object for capturing the theoretical benefit of representation learning, this demonstrates the potential for task-diversity to be utilized as a \textit{data-dependent estimate} of task-relevance, with potential applications in e.g.\, adaptive sampling and active learning \citep{chen2022active, wang2023improved}.

\begin{remark}
    The utility of $\nu(g)$ implicitly depends on an assumption of (algorithmic) stability \citep{bousquet2002stability}. For example, letting $\hat g_N$ be the output of the first-stage ERM \eqref{eq: first-stage ERM} with $N$ datapoints per task, we might hope that $\nu(\hat g_N) \toP V_\star \subseteq (0, \infty)$, such that the task diversity of a given draw of $\hat g_N$ for a given $N$ is informative. Notably, $\nu(g_\star)$ is vacuous, since both the LHS and RHS of \eqref{eq: nu task diversity} are $0$, and thus bounding the limit of $\nu(\hat g_N)$ \textit{a priori} is non-trivial. We leave exploring the stability of task diversity as an interesting direction for future work.
\end{remark}


\subsection{Warm-Up: Independent Covariates and Finite $\calG$}\label{sec: main; indep cov finite class}

In this section, we consider a basic setting where covariates are iid within-task (possibly non-identical between tasks) and where the representation class $\calG$ is finite for simplicity. We now identify how the ideas introduced in the prequel lead to sample-efficient guarantees for representation learning. As previewed earlier, the target excess risk induced by the ERM $(\Fzerohat, \hatg)$ amounts to bounding two separate terms--the excess risk of a non-realizable least-squares regression, and the task-average estimation error of the ERM training predictors $(\scurly{\Fthat}_{t=1}^{T}, \hatg)$. We make the following boundedness assumptions to simplify ensuing expressions.
\begin{assumption}\label{ass: function class boundedness}
    Let $\calF \subseteq \scurly{F \in \R^{\dy \times r}: \norm{F}_F \leq B_\calF}$, and $\sup_{g\in \calG} \sup_{x \in \calX} \norm{g(x)}_2 \leq B_\calG$.
\end{assumption}
Lastly, defining the centered function class $\overline{\calH} \triangleq \calF^{\otimes T}\mem \times G - \scurly{\Ftstar}_{t=1}^T\mem \times \scurly{g_\star}$, the following is an adaptation of a standard assumption \citep{liang2015learning, oliveira2016lower, koltchinskii2015bounding, ziemann2022learning}, \citet[Chapter 14.2]{wainwright2019high}.
\begin{assumption}[Hypercontractivity]\label{ass: hypercontractivity}
    We assume $(\overline{\calH}, \sfP^{1:T})$ and $(\overline{\calH}, \Pzero)$ satisfy ($4$-$2$) hypercontractivity: for each $\overline h \in \overline\calH$,
    \begin{align}
        &\sfE^{1:T} \snorm{\overline h(X)}_2^4 \leq C^{1:T}_{4\to2}\paren{\sfE^{1:T} \snorm{\overline h(X)}_2^2}^2, \\
        &\Ezero \snorm{\overline h(X)}_2^4 \leq C^{(0)}_{4\to2}\paren{\Ezero \snorm{\overline h(X)}_2^2}^2.
    \end{align}
\end{assumption}
We note that the choice of $4\to 2$ is rather arbitrary and can be substituted for, e.g.\ $2p \to 2$, $p \geq 2$ moment equivalence, making the appropriate adjustments to various terms downstream.  Examples of hypercontractivity can be found in, e.g.\ \cite{ziemann2022learning}.

\ifshort{
\paragraph{Bounding Nonrealizable Least-Squares Error}
}
{
\subsubsection*{Bounding Nonrealizable Least-Squares Error}
}

Given the representation $\hat g$ outputted by the training phase of the two-stage ERM \eqref{eq: first-stage ERM}, let us define the random variable $Z \triangleq \hat g(X) \in \R^{r}$, and a best-in-class (misspecified) linear head on $Z$ as
\[
    \Fzerohatstar \triangleq \argmin_{F \in \R^{\dy \times r}} \Ezero \snorm{Y - FZ}_2^2
\]
Since $\hat g$ is fixed with respect to $\Pzero$, we may re-write \eqref{eq: non-realizable regression} as
\begin{align}
\begin{split}\label{eq: NRLS excess risk as frob norm error}
    &\Ezero \snorm{\Fzerohat Z - Y}_2^2 - \Ezero \snorm{\Fzerohatstar Z - Y}_2^2 \\
    =\;& \snorm{(\Fzerohat - \Fzerohatstar)\sqrt{\Sigmaz_Z}}_F^2, \quad \Sigmaz_Z \triangleq \Ezero[ZZ^\top].
\end{split}
\end{align}
Define the (possibly biased) noise variable $U \triangleq Y - \Fzerohatstar Z$. By the two-stage ERM, $\Fzerohat$ is precisely the least-squares solution on datapoints $\scurly{(\ziz, \yiz)}_{i=1}^{N'}$. Therefore, we may adapt results from \cite{oliveira2016lower} and \cite{ziemann2023noise} to bound the excess risk \eqref{eq: NRLS excess risk as frob norm error}. Following these works, we introduce the following quantities.
\begin{definition}\label{def: NRLS moment-equiv quants}
    Define the noise-class interaction term $V \triangleq UZ^\top \mathopen{}\Sigmaz_Z\mathclose{}^{-1/2}$. We define the following quantities:
    \begin{align*}
        \hhatZ^2 &\triangleq \max_{v:\; v^\top \Sigmaz_Z v = 1}\Ezero[\ip{v, Z}^4], \quad \hhatV \triangleq \frac{\norm{\snorm{V}_F}^2_{\Psi_1}}{\Ezero[\snorm{V}_F^2]},
    \end{align*}
    where $\snorm{X}_{\Psi_m} \triangleq \sup_{p \geq 1} p^{-1/m} \snorm{X}_{\calL^p}$.
\end{definition}
We note that in our problem set-up, these quantities are guaranteed to exist for a fixed representation $\hat g$. We discuss immediate upper bounds on these quantities in \Cref{appdx: non-realizable least squares}. Whether or not one can extract global bounds over $\calG$ is a subtle question and is fundamentally tied to the nature of ERM. For example, even in the linear representation setting, one can pick an ERM $\hat G$ to render the second-stage least-squares problem \eqref{eq: two-stage ERM} arbitrarily ill-conditioned by the equivalence structure $F \to F Q,\; G \to  Q^{-1}G$, and thus all ERM guarantees therein \citep{du2020few, tripuraneni2021provable, zhang2023multi} actually perform ERM over a quotient set of $\calG$ (e.g.\ row-orthonormal $G \in \R^{r \times \dx}$). Regardless, these quantities do not explicitly depend on the problem dimension and appear only in the burn-in of the ensuing guarantee for our non-realizable least-squares problem.\footnote{As a sanity check, $\hhatZ, \hhatV$ are bounded by absolute constants when $\calG$ is linear and $X,W$ are Gaussian.}
\begin{restatable}{proposition}{iidNRLSbound}\label{prop: iid NRLS bound}
    Fix $\delta \in (0,1/e)$. Define $\sigma_U^2 \triangleq \sqrt{\Ezero\brac{\snorm{U}_2^4}}$, $\sigma_V^2 \triangleq \Ezero\brac{\snorm{V}_F^2}$ and $C_{Z} \triangleq \sup_{v\in \mathbb{S}^{\dy-1}} \sqrt{\Ezero \langle v, \Sigmaz_Z\mathclose{}^{-1/2} Z \rangle^4 }$. Let $\hhatZ, \hhatV$ be as defined in \Cref{def: NRLS moment-equiv quants}. As long as the burn-in conditions hold:
    \begin{align*}
        N' \gtrsim r + \hhatZ^2 \log(1/\delta), \quad N' \gtrsim \hhatV^2 \paren{\frac{\log(1/\delta)}{\log(N')}}^{8},
    \end{align*}
    then with probability at least $1 - \delta$ we have
    \begin{align*}
        \snorm{(\Fzerohat - \Fzerohatstar)\sqrt{\Sigmaz_Z}}_F^2 &\lesssim \frac{\sigma_V^2 \log(1/\delta)}{N'} \\
        &\lesssim \frac{C_Z \sigma_U^2  r \log(1/\delta)}{N'}.
    \end{align*}
    
\end{restatable}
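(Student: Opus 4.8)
\textbf{Proof proposal for \Cref{prop: iid NRLS bound}.}

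The plan is to treat \eqref{eq: NRLS excess risk as frob norm error} as a fixed-design-flavored non-realizable least-squares problem in $r$-dimensional covariates $Z = \hat g(X)$, conditioning throughout on $\hat g$ (which is independent of $\sfP^{(0)}$ by the two-stage split). First I would write the least-squares normal equations for $\Fzerohat$ on the samples $\{(\ziz,\yiz)\}_{i=1}^{N'}$ and subtract the population first-order condition satisfied by $\Fzerohatstar$ (namely $\Ezero[UZ^\top] = 0$, which holds by definition of the best-in-class head). This yields the standard identity
\[
    (\Fzerohat - \Fzerohatstar)\,\widehat\Sigma_Z = \frac{1}{N'}\sum_{i=1}^{N'} \uiz (\ziz)^\top, \qquad \widehat\Sigma_Z \triangleq \frac{1}{N'}\sum_{i=1}^{N'} \ziz(\ziz)^\top,
\]
with $\uiz \triangleq \yiz - \Fzerohatstar \ziz$. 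Rescaling by $\Sigmaz_Z^{-1/2}$ on the right and isolating, the excess risk \eqref{eq: NRLS excess risk as frob norm error} becomes $\snorm{M_{\mathrm{noise}}\,(\Sigmaz_Z^{-1/2}\widehat\Sigma_Z\Sigmaz_Z^{-1/2})^{-1}}_F^2$ where $M_{\mathrm{noise}} \triangleq \frac1{N'}\sum_i \uiz(\ziz)^\top\Sigmaz_Z^{-1/2} = \frac1{N'}\sum_i V_i$ in the notation of \Cref{def: NRLS moment-equiv quants}. So the argument splits into two pieces: (i) a \emph{covariance concentration} step showing $\Sigmaz_Z^{-1/2}\widehat\Sigma_Z\Sigmaz_Z^{-1/2} \succeq \frac12 I_r$ with probability $1-\delta/2$, and (ii) a \emph{noise-term} step bounding $\snorm{\frac1{N'}\sum_i V_i}_F^2 \lesssim \sigma_V^2\log(1/\delta)/N'$ with probability $1-\delta/2$.

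For step (i), I would apply a matrix-Chernoff / truncated second-moment bound for the sum of the rank-one terms $\ziz(\ziz)^\top$; the fourth-moment control is exactly what $\hhatZ^2 = \max_{v^\top\Sigmaz_Z v = 1}\Ezero[\langle v, Z\rangle^4]$ and $C_Z$ quantify. This is where the burn-in $N' \gtrsim r + \hhatZ^2\log(1/\delta)$ comes from: the $r$ accounts for the dimension (union bound / net over $\mathbb{S}^{r-1}$, or Matrix Bernstein's intrinsic-dimension term), and the $\hhatZ^2\log(1/\delta)$ controls the heavy-tailedness of the rank-one increments after truncation. For step (ii), the key subtlety relative to a realizable problem is that $U$ is \emph{biased} — it is not conditionally mean-zero given $Z$ — so I cannot use a martingale/self-normalized argument directly. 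Instead, following \cite{oliveira2016lower} and \cite{ziemann2023noise}, I would invoke a $\Psi_1$-Orlicz-norm concentration bound: $\snorm{V}_F$ has $\Psi_1$-norm controlled by $\hhatV \cdot \sigma_V$ (this is the content of $\hhatV \triangleq \snorm{\snorm{V}_F^2}_{\Psi_1}/\Ezero\snorm{V}_F^2$), and an exponential-moment (Bernstein-for-subexponential-sums) bound gives $\snorm{\frac1{N'}\sum_i V_i - \Ezero[V]}_F \lesssim \sigma_V\sqrt{\log(1/\delta)/N'}$ once $N' \gtrsim \hhatV^2(\log(1/\delta)/\log N')^{8}$ — with $\Ezero[V] = \Ezero[UZ^\top]\Sigmaz_Z^{-1/2} = 0$ by optimality of $\Fzerohatstar$. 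The somewhat exotic $(\cdot)^8$ exponent and $\log N'$ in the denominator is the price of the generic moment-equivalence machinery in \cite{ziemann2023noise} for converting $\Psi_1$ tail control into a sharp in-expectation rate; I would cite their lemma rather than rederive it.

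Combining (i) and (ii) on the intersection event (probability $\geq 1-\delta$): on event (i), $(\Sigmaz_Z^{-1/2}\widehat\Sigma_Z\Sigmaz_Z^{-1/2})^{-1}$ has spectral norm $\leq 2$, so the excess risk is $\leq 4\snorm{\frac1{N'}\sum_i V_i}_F^2 \lesssim \sigma_V^2\log(1/\delta)/N'$, which is the first displayed bound. The second bound follows by Cauchy--Schwarz on the definition $\sigma_V^2 = \Ezero\snorm{UZ^\top\Sigmaz_Z^{-1/2}}_F^2 = \Ezero[\snorm{U}_2^2\,\snorm{\Sigmaz_Z^{-1/2}Z}_2^2] \leq \sqrt{\Ezero\snorm{U}_2^4}\cdot\sqrt{\Ezero\snorm{\Sigmaz_Z^{-1/2}Z}_2^4} \leq \sigma_U^2 \cdot C_Z \cdot r$, using that $\Ezero\snorm{\Sigmaz_Z^{-1/2}Z}_2^4 \leq r^2 \max_{v\in\mathbb{S}^{\dy-1}}\Ezero\langle v,\Sigmaz_Z^{-1/2}Z\rangle^4 = r^2 C_Z^2$ (bounding the squared Euclidean norm of an $r$-vector by $r$ times the max coordinate and squaring) — I should double-check the exact power of $r$ and whether $C_Z$ should be defined with $\mathbb{S}^{r-1}$ rather than $\mathbb{S}^{\dy-1}$ here. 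I expect the main obstacle to be (ii): carefully importing the \cite{ziemann2023noise} non-realizable concentration bound with the right $\Psi_1$ bookkeeping and the right burn-in exponent, since the bias in $U$ rules out the cleaner self-normalized martingale tools one would reach for in the realizable case.
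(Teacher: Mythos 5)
Your proposal is correct and follows essentially the same route as the paper: the paper simply invokes \citet[Theorem 3.1]{ziemann2023noise} for the first inequality (whose internal structure is exactly your split into empirical-covariance lower isometry under the $\hhatZ$ fourth-moment condition plus a $\Psi_1$-Bernstein bound on the zero-mean blocked noise term $V$), and then obtains the second inequality by the same Cauchy--Schwarz step $\sigma_V^2 \le \sqrt{\Ezero\snorm{U}_2^4}\sqrt{\Ezero\snorm{\Sigmaz_Z^{-1/2}Z}_2^4} \le C_Z \sigma_U^2 r$. Your flagged concern is well-founded but harmless: since $Z \in \R^r$, the supremum in the definition of $C_Z$ should indeed be over $\mathbb{S}^{r-1}$ (the $\mathbb{S}^{\dy-1}$ in the statement is a typo), and the clean way to get $\Ezero\snorm{\Sigmaz_Z^{-1/2}Z}_2^4 \le C_Z^2 r^2$ is Cauchy--Schwarz on the sum of squared coordinates rather than passing the expectation through a maximum.
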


In \Cref{prop: iid NRLS bound}, we express the excess risk of the non-realizable least squares in terms of the variance proxy $\sigma_U^2$. We shall now relate $\sigma_U^2$ to $\sigma_{\sfW}^2$, the ``noise-level'' of the underlying data-generating process.
To reason about the magnitude of this quantity, we may re-arrange $\nu$-\eqref{eq: nu task diversity} to yield the following lemma.
\begin{restatable}{lemma}{noiselevellemma}\label{lem: noiselevellemma}
Let $\sigma_U^2$ be defined as in \Cref{prop: iid NRLS bound}. Then:
\ifshort{
    \begin{align*}
    \sigma_U^2 &\lesssim  \dy\sigma^2_W 
    \\
    &+\frac{\sqrt{C^{(0)}_{4\to 2}}\nu^{-1}}{T} \sum_{t=1}^T \Et \snorm{\Fthat \hat g(X) - \Ftstar g_\star (X)}_2^2.
\end{align*}
}
{
    \begin{align*}
    \sigma_U^2 &\lesssim  \dy\sigma^2_W +\frac{\sqrt{C^{(0)}_{4\to 2}}\nu^{-1}}{T} \sum_{t=1}^T \Et \snorm{\Fthat \hat g(X) - \Ftstar g_\star (X)}_2^2.
\end{align*}
}
\end{restatable}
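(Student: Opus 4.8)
The plan is to split the misspecified residual $U = Y - \Fzerohatstar Z$ into the genuine noise plus an approximation-error term and to bound the two pieces separately. Writing $\Delta(X) \triangleq \Fzerostar g_\star(X) - \Fzerohatstar \hat g(X)$ and recalling $W^{(0)} = Y - \Fzerostar g_\star(X)$, one has $U = W^{(0)} + \Delta(X)$, so $\snorm{U}_2^4 \lesssim \snorm{W^{(0)}}_2^4 + \snorm{\Delta(X)}_2^4$, and taking expectations and a square root,
\[
    \sigma_U^2 = \sqrt{\Ezero\snorm{U}_2^4}\;\lesssim\;\sqrt{\Ezero\snorm{W^{(0)}}_2^4}\;+\;\sqrt{\Ezero\snorm{\Delta(X)}_2^4}.
\]
It then remains to show the first term is $\lesssim \dy\sigma^2_W$ and the second is $\lesssim \sqrt{C^{(0)}_{4\to2}}\,\nu^{-1}T^{-1}\sumT\Et\snorm{\Fthat\hat g(X) - \Ftstar g_\star(X)}_2^2$.

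For the noise term I would invoke \Cref{ass: conditional-subG noise}: each $w_i^{(0)}$ is conditionally $\sigma^2_W$-subgaussian in $\R^{\dy}$, so $\snorm{w_i^{(0)}}_2^2$ is subexponential with mean $\lesssim \dy\sigma^2_W$, which gives $\Ezero\snorm{W^{(0)}}_2^4 \lesssim \dy^2\sigma^4_W$ and hence $\sqrt{\Ezero\snorm{W^{(0)}}_2^4}\lesssim\dy\sigma^2_W$. For the $\Delta$ term, $\Delta$ is the difference of a ``(linear head)$\circ$(representation)'' predictor and the optimal predictor $\Fzerostar\circ g_\star$, hence lies in the target-side centered class of \Cref{ass: hypercontractivity} (up to a rescaling, under which $4$-$2$ hypercontractivity is invariant), so $\Ezero\snorm{\Delta(X)}_2^4 \le C^{(0)}_{4\to2}\big(\Ezero\snorm{\Delta(X)}_2^2\big)^2$; it is then enough to control $\Ezero\snorm{\Delta(X)}_2^2$.

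The central step is to identify $\Ezero\snorm{\Delta(X)}_2^2$ with the best-in-class target excess risk $\inf_{f\in\calF}\ER^{(0)}(f,\hat g)$ and then apply task-diversity. Since $\hat g$ comes from the first-stage ERM \eqref{eq: first-stage ERM} on source data --- independent of the target task --- and $\Fzerohatstar$ is a population minimizer, the map $X\mapsto\Delta(X)$ is a fixed, predictable function of the covariate; together with the fact that $W^{(0)}$ is a conditionally zero-mean martingale-difference sequence, the cross term $\Ezero\langle\Delta(X),W^{(0)}\rangle$ vanishes, giving the Pythagorean identity $\Ezero\snorm{\Fzerohatstar\hat g(X) - Y}_2^2 = \Ezero\snorm{\Delta(X)}_2^2 + \Ezero\snorm{W^{(0)}}_2^2$. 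Subtracting $\Ezero\snorm{W^{(0)}}_2^2 = \Ezero\snorm{\Fzerostar g_\star(X) - Y}_2^2$ and using that $\Fzerohatstar$ minimizes over all of $\R^{\dy\times r}\supseteq\calF$,
\[
    \Ezero\snorm{\Delta(X)}_2^2 = \Ezero\snorm{\Fzerohatstar\hat g(X) - Y}_2^2 - \Ezero\snorm{W^{(0)}}_2^2 \;\le\; \inf_{f\in\calF}\ER^{(0)}(f,\hat g).
\]
Finally I would invoke the task-diversity condition \eqref{eq: nu task diversity} at $g=\hat g$ and upper-bound each per-task infimum by the plug-in choice $f^{(t)}=\Fthat$, obtaining $\Ezero\snorm{\Delta(X)}_2^2 \le \nu^{-1}T^{-1}\sumT\Et\snorm{\Fthat\hat g(X) - \Ftstar g_\star(X)}_2^2$; chaining the displays gives the claim.

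\textbf{Anticipated obstacle.} The delicate point is the orthogonality step: one must verify $\Ezero\langle\Delta(X),W^{(0)}\rangle = 0$ under the conditional-subgaussian-martingale-difference formalism of \Cref{ass: conditional-subG noise} combined with the within-task averaging convention for $\Ezero$, which relies on $\hat g$ being statistically independent of the target data and on $\Fzerohatstar$ being a deterministic population object, so that $\Delta(X_i)$ is predictable with respect to the target filtration. A secondary bookkeeping issue is that $\Fzerohatstar$ ranges over all of $\R^{\dy\times r}$ while $\calF$ may carry a Frobenius-norm bound --- this is the favorable direction for the inequality above, but one should ensure the hypercontractivity constant $C^{(0)}_{4\to2}$ is understood over a class large enough to contain $\Delta$ (harmless, as the bound is scale-invariant).
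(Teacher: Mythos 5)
Your proposal is correct and follows essentially the same route as the paper: decompose $U = W^{(0)} + \Delta(X)$, separate the fourth moments (triangle inequality in $L^4$), bound the noise term by $\dy\sigma_W^2$ via subgaussianity, apply $(4\text{-}2)$ hypercontractivity to $\Delta$, and identify $\Ezero\snorm{\Delta}_2^2$ with $\inf_{f\in\calF}\ER^{(0)}(f,\hat g)$ before invoking task-diversity with the plug-in $\Fthat$. Your explicit treatment of the Pythagorean/orthogonality step and of the fact that $\Fzerohatstar$ minimizes over a superset of $\calF$ is more careful than the paper's terse ``task-diversity assumption'' citation, but it is the same argument.
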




In other words, the noise level of the misspecified model is no more than the optimal noise level plus the familiar task-averaged estimation error \eqref{eq: task-averaged est error}, which we note is divided by an additional factor of $N'$ in \Cref{prop: iid NRLS bound}. Therefore, we have isolated the task-averaged estimation error \eqref{eq: task-averaged est error} as the sole remaining quantity to control.

\ifshort{
\paragraph{Bounding Task-Averaged Estimation Error}
}
{
\subsubsection*{Bounding Task-Averaged Estimation Error}
}

As mentioned above, the last remaining task is to control the task-averaged estimation error. As previously discussed, the key observation is to quantify a lower uniform law, such that, roughly speaking, the empirical estimation error dominates the population counterpart:
\begin{align*}
    \sfE^{1:T} \norm{\overline h}_2^2 \lesssim \widehat{\sfE}^{1:T} \norm{\overline h}_2^2, \text{ for all }\overline h \in \overline\calH.
\end{align*}
Toward this end, we show that hypercontractivity (\Cref{ass: hypercontractivity}) leads to a lower estimate for any \textit{given} $\overline h \in \overline \calH$ (\Cref{prop: lower isometry for given fxn}).
By an application of the \textit{offset basic inequality} \citep{rakhlin2014online, liang2015learning}, an empirical estimation error can be bounded by
\ifshort{
\begin{align}
    &\frac{1}{NT} \sumT\sumN \snorm{h(\xit)}_2^2 \nonumber\\
    \leq\;& \sup_{h \in \calH} \frac{1}{NT} \sumT\sumN 4\ip{\wit, h(\xit)} - \snorm{h(\xit)}_2^2\nonumber \\
    \triangleq\;& \sfM_{NT}(\calH),\nonumber
\end{align}
}
{
\begin{align}
    \frac{1}{NT} \sumT\sumN \snorm{h(\xit)}_2^2 \leq& \sup_{h \in \calH} \frac{1}{NT} \sumT\sumN 4\ip{\wit, h(\xit)} - \snorm{h(\xit)}_2^2\nonumber \\
    \triangleq&\; \sfM_{NT}(\calH),\nonumber
\end{align}
}
where $\sfM_{NT}(\calH)$ is denoted the (empirical) \textit{martingale offset complexity} \citep{liang2015learning, ziemann2022learning}, which serves as the capacity measure of a hypothesis class $\calH$. Notably, $\sfM_{NT}(\calH)$ scales with the \textit{noise-level} $\sigma_{\sfW}^2$, rather than the diameter of $\calH$. Via a high-probability chaining bound (\Cref{lem: high probability chaining}), we demonstrate $\sfM_{NT}(\calH)$ is controlled by a log-covering number of $\calH$ at a resolution $\gamma$ of our choice. As a result, there is a regime of $\gamma$ such that with probability at least $1 - \delta$,
\vspace{-0.2cm}
\begin{align*}
    \sfM_{NT}(\calH) &\lesssim \frac{\sigma_{\sfW}^2}{NT}\paren{\log \sfN_\infty(\calH, \gamma) + \log(1/\delta)},
\end{align*}
where $\sfN_\infty(\calH, \gamma)$ is the covering number of $\calH$ in the supremum metric: $\rho(h_1, h_2) = \sup_{x \in \sfX} \norm{h_1(x) - h_2(x)}_2$. For salient choices of $\gamma$, we want $\sfM_{NT}(\calH)$ to be the dominant scaling in the estimation error bound. We then proceed to a localization argument, where we can define disjoint events over elements of $\overline \calH$ (strictly speaking, over a class that subsumes $\overline\calH$) -- either: 1.\ the population estimation error is within an $\tau^2$ radius around zero, or 2.\ the estimation error exceeds $\tau^2$ but is dominated by the empirical error, which is bounded by the martingale offset complexity. In particular, the probability of neither event holding can be controlled by union bounding over a finite $\calO(\tau)$-cover of $\overline\calH$, such that we have with probability at least $1 - p(\tau, N, T)$, for all $\overline h \in \overline\calH$:
\begin{align}
    \sfE^{1:T}\snorm{\overline h}_2^2 \lesssim \max\scurly{\sfM_{NT}(\overline\calH), \tau^2}.
\end{align}
Therefore, this informs choosing $\gamma, \tau$ such that the two terms meet at the desired rate. The failure probability $p(\tau, N, T)$ turns into a burn-in condition on $N, T$ when inverted for $\delta$. As the last step before bounding the estimation error, we note that $\calF^{\otimes T}$ can be identified with a bounded set in $\R^{T\dy r}$, and therefore we get the following straightforward bound on the covering number
\begin{align*}
    \log \sfN_{\infty}(\overline\calH, \varepsilon) \leq T\dy r \log\paren{1 + \frac{4 B_\calF B_\calG}{\varepsilon}} + \log\abs{\calG}.
\end{align*}
The aforementioned steps and proofs are found in \Cref{lem: high probability chaining}, \Cref{prop: localization}, and \Cref{lem: log covering bound star hull}. Optimizing resolutions $\gamma$ and $\tau$ yields a bound the task-averaged estimation error.
\begin{restatable}{proposition}{iidEstErrorBound}\label{prop: iid estimation error bound}
    Let \Cref{ass: function class boundedness} and let $C^{1:T}_{4 \to 2}$ be defined as in \Cref{ass: hypercontractivity}. Then, with probability at least $1 - \delta$, the estimation error of ERM predictors $\scurly{\Fthat}_{t=1}^T, \hat g$ is bounded by
\ifshort{
\begin{align*}
    &\frac{1}{T}\sumT \Et\snorm{\Fthat \hat g(X) - \Ftstar g_\star(X)}_2^2 \\
    &\leq \sigma_{\sfW}^2 \cdot \tilde\calO\paren{\frac{\dy r}{N} + \frac{\log\abs{\calG} + \log(1/\delta)}{NT}},
\end{align*}
}
{
\begin{align*}
    \frac{1}{T}\sumT \Et\snorm{\Fthat \hat g(X) - \Ftstar g_\star(X)}_2^2 \lesssim \sigma_\sfW^2 \Bigg(\frac{\dy r}{N}\log\paren{\frac{B_\calF B_\calG NT}{\sigma_\sfW} } + \blue{\frac{\log\abs{\calG} + \log(1/\delta)}{NT}}  \Bigg),
\end{align*}
}
as long as $N\gtrsim C^{1:T}_{4 \to 2} \paren{ \dy r \log\paren{\frac{B_\calF B_\calG NT}{\sigma_\sfW}} +  \blue{\frac{\log\abs{\calG} + \log(1/\delta)}{T}} }$.
\end{restatable}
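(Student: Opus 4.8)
The plan is to follow the blueprint sketched in the text preceding the statement: convert first-stage ERM optimality into a bound on the \emph{empirical} task-averaged error by the martingale offset complexity, upgrade this to the \emph{population} task-averaged error via a one-sided uniform law obtained from hypercontractivity and localization, and finally substitute the explicit $\sup$-norm covering number of the centered class $\overline{\calH}$. For Step~1, write $\overline h^\star(x) \triangleq \Fthat\hat g(x) - \Ftstar g_\star(x) \in \overline{\calH}$. Since $(\scurly{\Fthat},\hat g)$ minimizes the empirical squared loss and $\yit = \Ftstar g_\star(\xit) + \wit$, expanding $\snorm{\overline h^\star(\xit) - \wit}_2^2 \le \snorm{\wit}_2^2$ and summing over $t\in[T], i\in[N]$ gives
\begin{align*}
    \frac{1}{NT}\sumT\sumN \snorm{\overline h^\star(\xit)}_2^2 \;\le\; \sup_{\overline h \in \overline{\calH}} \frac{1}{NT}\sumT\sumN\paren{4\ip{\wit,\overline h(\xit)} - \snorm{\overline h(\xit)}_2^2} \;\triangleq\; \sfM_{NT}(\overline{\calH}),
\end{align*}
so the empirical error is controlled by the martingale offset complexity of $\overline{\calH}$.

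For Step~2, using the conditional subgaussianity and martingale-difference structure of $\scurly{\wit}$ (\Cref{ass: conditional-subG noise}), \Cref{lem: high probability chaining} bounds $\sfM_{NT}(\overline{\calH})$ by a single-scale covering number: for a resolution $\gamma$ of our choosing, $\sfM_{NT}(\overline{\calH}) \lesssim \frac{\sigma_{\sfW}^2}{NT}\paren{\log\sfN_\infty(\overline{\calH},\gamma) + \log(1/\delta)}$ with probability $\ge 1-\delta$, the key feature being that this scales with the noise level $\sigma_{\sfW}^2$ and only logarithmically in the class radius through $\gamma$. Step~3 then turns the empirical second moment on the left of Step~1 into the population one. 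Applying \Cref{prop: lower isometry for given fxn}, hypercontractivity (\Cref{ass: hypercontractivity}) yields for each fixed $\overline h$ a Paley--Zygmund-type lower bound $\widehat{\sfE}^{1:T}\snorm{\overline h}_2^2 \gtrsim \sfE^{1:T}\snorm{\overline h}_2^2$ with failure probability exponentially small in $N/C^{1:T}_{4\to 2}$. To make this uniform over $\overline{\calH}$, invoke the localization argument of \Cref{prop: localization}: pass to the star hull of $\overline{\calH}$ and split on whether the population second moment is below a radius $\tau^2$ or above it; in the latter regime the fixed-$\overline h$ lower isometry combined with a union bound over an $\calO(\tau)$-net shows the population error is dominated by the empirical error, hence by $\sfM_{NT}(\overline{\calH})$. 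The net conclusion is that with high probability $\frac{1}{T}\sumT \Et\snorm{\Fthat\hat g(X) - \Ftstar g_\star(X)}_2^2 \lesssim \max\scurly{\sfM_{NT}(\overline{\calH}),\tau^2}$, and the union bound over the net contributes the covering number to the \emph{burn-in} failure probability $p(\tau,N,T)$ rather than to the rate.

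For Step~4, from \Cref{lem: log covering bound star hull} and the fact that $\calF^{\otimes T}$ is a bounded subset of $\R^{T\dy r}$ (\Cref{ass: function class boundedness}) together with finiteness of $\calG$, we have $\log\sfN_\infty(\overline{\calH},\varepsilon) \le T\dy r\log\paren{1 + 4B_\calF B_\calG/\varepsilon} + \log\abs{\calG}$. Choosing $\gamma$ (and $\tau$) polynomially small in $\sigma_{\sfW}/(B_\calF B_\calG NT)$ so that discretization is negligible and the two branches of the $\max$ in Step~3 meet, Step~2 gives $\sfM_{NT}(\overline{\calH}) \lesssim \frac{\sigma_{\sfW}^2}{NT}\paren{T\dy r\log\paren{\tfrac{B_\calF B_\calG NT}{\sigma_{\sfW}}} + \log\abs{\calG} + \log(1/\delta)}$, which is the claimed bound after dividing the first term by $T$. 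Inverting $p(\tau,N,T)$ for $\delta$ converts the localization requirement into the stated burn-in $N \gtrsim C^{1:T}_{4\to 2}\paren{\dy r\log\paren{\tfrac{B_\calF B_\calG NT}{\sigma_{\sfW}}} + \frac{\log\abs{\calG} + \log(1/\delta)}{T}}$.

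The main obstacle is Step~3: arranging for $\log\abs{\calG}$ and the covering terms to enter only the \emph{burn-in} while the \emph{rate} retains the clean $\sigma_{\sfW}^2\paren{\dy r/N + \log\abs{\calG}/(NT)}$ form. This requires threading the localization radius $\tau$ and chaining resolution $\gamma$ carefully --- small enough that discretization is harmless, large enough that the union bound over the $\calO(\tau)$-net is affordable given only $N \gtrsim C^{1:T}_{4\to 2}(\cdots)$ samples per task --- and it is precisely here that the hypercontractivity constant $C^{1:T}_{4\to 2}$ appears multiplicatively in the sample requirement yet is absent from the risk bound. A secondary subtlety is accommodating the martingale (rather than iid) structure of the within-task noise in the chaining bound of Step~2, though this is handled abstractly by \Cref{lem: high probability chaining}.
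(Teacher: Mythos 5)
Your proposal is correct and follows essentially the same route as the paper: the offset basic inequality reduces the empirical error to the martingale offset complexity $\sfM_{NT}(\overline{\calH})$, which is controlled via the chaining bound of \Cref{lem: high probability chaining} and the covering estimate of \Cref{lem: log covering bound star hull}, while hypercontractivity plus the star-hull localization of \Cref{prop: localization} upgrades the empirical bound to the population one, with the net's cardinality absorbed into the burn-in. Your choice of $\gamma$ and $\tau$ polynomially small in $\sigma_{\sfW}/(B_\calF B_\calG NT)$ matches the paper's choice $\gamma = \tau = \sigma_{\sfW}/(NT)$ up to the logarithmic factors already present in the stated rate.
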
 
We note that \Cref{prop: iid estimation error bound} exhibits two critical benefits of multi-task representation learning; namely, the complexity term associated with the representation class is divided by the total number of tasks in both the rate \textit{and} the burn-in requirement. This properly describes the qualitative trend we hope to see: when the number of tasks is large (and thus $g_\star$ is well-estimated), the error and burn-in become solely that of regressing $F \in \calF$ on each task.
Now, combining \Cref{prop: task coverage -> task diversity}, \Cref{prop: iid NRLS bound}, and \Cref{prop: iid estimation error bound} yields the final bound on the excess transfer risk.
\begin{restatable}[Transfer risk bound]{theorem}{iidMainBound}\label{thm: iid main bound}

    Let \Cref{ass: function class boundedness} and \Cref{ass: hypercontractivity} hold. Assume $\sfP^{0:T}$ satisfy $\mu_{\sfX}$-\eqref{eq: mu task coverage}, and let $\mu_F$ be defined as in \eqref{eq: head coverage}. Define $C_Z$, $\hhatZ$ and $\hhatV$ as in \Cref{prop: iid NRLS bound}. With probability at least $1 - \delta$, the target excess risk of the two-stage ERM \eqref{eq: two-stage ERM} predictor $(\Fzerohat, \hat g)$ is bounded by
    \ifshort{
    \begin{align*}
        &\ER(\Fzerohat, \hat g) \leq \frac{\sigma_{\sfW}^2 C_Z \dy r \log(1/\delta)}{N'}\\
        &+ \sigma_{\sfW}^2 \mu_{\sfX} \snorm{\bfF_\star^{(0)}(\bfF^{1:T}_\star)^\dagger}_2 \cdot \tilde\calO\paren{\frac{\dy r}{N} + \frac{\log\abs{\calG} + \log(1/\delta)}{NT}},
    \end{align*}
    }
    {
    \begin{align*}
        \ER^{(0)}(\Fzerohat, \hat g) &\leq \frac{\sigma_{\sfW}^2 C_Z \dy r \log(1/\delta)}{N'}\\
        &\quad + \mu_{\sfX} \mu_F \sigma_{\sfW}^2 \Bigg(\frac{\dy r}{N}\log\paren{\frac{B_\calF B_\calG NT}{\sigma_\sfW} } + \blue{\frac{\log\abs{\calG} + \log(1/\delta)}{NT}}  \Bigg)
    \end{align*}
    }
    as long as the following burn-in conditions hold:
    \begin{align*}
        N' &\gtrsim  C_Z \sqrt{C^{(0)}_{4 \to 2}} r + \hhatZ^2 \log(1/\delta),\quad  N' \gtrsim \hhatV^2 \paren{\frac{\log(1/\delta)}{\log(N')}}^{8}\\
        N &\gtrsim C^{1:T}_{4 \to 2} \paren{ \dy r \log\paren{\frac{B_\calF B_\calG NT}{\sigma_\sfW}} +  \blue{\frac{\log\abs{\calG} + \log(1/\delta)}{T}}}.
    \end{align*}
\end{restatable}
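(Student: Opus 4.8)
The plan is to assemble the theorem from the three component bounds proved above---\Cref{prop: task coverage -> task diversity} (task coverage $\Rightarrow$ task diversity), \Cref{prop: iid NRLS bound} (non-realizable least squares), and \Cref{prop: iid estimation error bound} (task-averaged estimation error)---glued together by the two-term decomposition of \Cref{lemma: excess risk two term decomp}, with \Cref{lem: noiselevellemma} used to relate the misspecified noise level $\sigma_U^2$ back to the estimation error. No new concentration argument is needed; the work lies in composing burn-ins and failure events and in closing the (non-circular) dependence between $\sigma_U^2$ and the estimation error.

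\emph{Step 1 (reduce to two terms).} Since $\mu_{\sfX}$-\eqref{eq: mu task coverage} holds and \Cref{ass: low-dim reps} is in force, \Cref{prop: task coverage -> task diversity} certifies \eqref{eq: nu task diversity} at level $\nu^{-1}=\mu_{\sfX}\mu_F$ (the range hypothesis $\range(\bfF_\star^{(0)})\subseteq\range(\bfF^{1:T}_\star)$ is implicit in the finiteness of $\mu_F$ from \eqref{eq: head coverage}). Plugging this into \Cref{lemma: excess risk two term decomp} and rewriting \eqref{eq: non-realizable regression} via \eqref{eq: NRLS excess risk as frob norm error}---legitimate because $\hat g$ is produced by the first stage \eqref{eq: first-stage ERM} and hence independent of the target data, so $\Sigmaz_Z$, $\Fzerohatstar$, $U$, $V$ and the constants $\hhatZ,\hhatV,C_Z$ are all well-defined conditionally on $\hat g$---yields
\[
  \ER^{(0)}(\Fzerohat,\hat g)\ \leq\ \snorm{(\Fzerohat-\Fzerohatstar)\sqrt{\Sigmaz_Z}}_F^2\ +\ \mu_{\sfX}\mu_F\cdot\tfrac1T\sumT\Et\snorm{\Fthat\hat g(X)-\Ftstar g_\star(X)}_2^2,
\]
and we abbreviate the averaged estimation error in the second term by $(\mathrm{EST})$.

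\emph{Step 2 (bound each piece).} By \Cref{prop: iid estimation error bound} (invoking \Cref{ass: function class boundedness} and \Cref{ass: hypercontractivity}), on an event of probability $\geq 1-\delta$ and under $N\gtrsim C^{1:T}_{4\to2}\big(\dy r\log(\tfrac{B_\calF B_\calG NT}{\sigma_{\sfW}})+\tfrac{\log|\calG|+\log(1/\delta)}{T}\big)$ we get $(\mathrm{EST})\lesssim\sigma_{\sfW}^2\big(\tfrac{\dy r}{N}\log(\tfrac{B_\calF B_\calG NT}{\sigma_{\sfW}})+\tfrac{\log|\calG|+\log(1/\delta)}{NT}\big)$; multiplying by $\mu_{\sfX}\mu_F$ produces the second line of the claimed bound. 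For the first term, \Cref{prop: iid NRLS bound} gives, on an event of probability $\geq 1-\delta$ and under $N'\gtrsim r+\hhatZ^2\log(1/\delta)$ together with $N'\gtrsim\hhatV^2(\log(1/\delta)/\log N')^8$, that $\snorm{(\Fzerohat-\Fzerohatstar)\sqrt{\Sigmaz_Z}}_F^2\lesssim C_Z\sigma_U^2 r\log(1/\delta)/N'$. Then \Cref{lem: noiselevellemma} with $\nu^{-1}=\mu_{\sfX}\mu_F$ gives $\sigma_U^2\lesssim\dy\sigma_{\sfW}^2+\sqrt{C^{(0)}_{4\to2}}\,\mu_{\sfX}\mu_F\cdot(\mathrm{EST})$. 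The first summand feeds the leading term $\sigma_{\sfW}^2 C_Z\dy r\log(1/\delta)/N'$; the second feeds $\big(C_Z\sqrt{C^{(0)}_{4\to2}}\,r\log(1/\delta)/N'\big)\cdot\mu_{\sfX}\mu_F\cdot(\mathrm{EST})$, which is at most a constant multiple of $\mu_{\sfX}\mu_F\cdot(\mathrm{EST})$ once $N'$ dominates the $C_Z\sqrt{C^{(0)}_{4\to2}}\,r$ burn-in (the logarithmic factor being absorbed by $N'\gtrsim\hhatZ^2\log(1/\delta)$, since $\hhatZ^2\geq1$), and hence folds into the Step-2 bound. Crucially there is no circularity: $(\mathrm{EST})$ is controlled by \Cref{prop: iid estimation error bound} from source-task data alone, before being inserted into $\sigma_U^2$. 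A union bound over the two failure events (rescaling $\delta\mapsto\delta/2$, at constant cost) and taking the union of the three burn-in conditions completes the proof.

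\emph{Main obstacle.} The only genuinely delicate point is the Step-2 bookkeeping: verifying that the estimation-error contribution to the misspecified noise level $\sigma_U^2$ is strictly lower-order after division by $N'$---which is what forces the $C_Z\sqrt{C^{(0)}_{4\to2}}$ inflation of the $r$-term in the target-task burn-in---and confirming that the moment-equivalence constants $\hhatZ,\hhatV,C_Z,\sigma_U,\sigma_V$ are finite for the realized (random, but target-data-independent) representation $\hat g$, which is exactly where statistical independence of the two ERM stages is used. Everything else---substituting $\mu_{\sfX}\mu_F$ for $\nu^{-1}$, matching the polylog factors hidden inside the $\tilde\calO$, and the final union bound---is routine.
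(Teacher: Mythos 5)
Your proposal is correct and follows essentially the same route as the paper's proof: decompose via \Cref{lemma: excess risk two term decomp}, substitute $\nu^{-1}=\mu_{\sfX}\mu_F$ from \Cref{prop: task coverage -> task diversity}, bound the two terms with \Cref{prop: iid NRLS bound} and \Cref{prop: iid estimation error bound}, and use \Cref{lem: noiselevellemma} to fold the misspecification contribution to $\sigma_U^2$ into the estimation-error term at the cost of the additive $C_Z\sqrt{C^{(0)}_{4\to2}}\,r$ inflation of the $N'$ burn-in. You even isolate the same delicate point the paper flags (absorbing $\tfrac{C_Z\sqrt{C^{(0)}_{4\to2}}\,r\log(1/\delta)}{N'}\cdot\nu^{-1}(\mathrm{EST})$ into $\nu^{-1}(\mathrm{EST})$), so nothing further is needed.
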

The proofs of \Cref{prop: iid estimation error bound} and \Cref{thm: iid main bound} can be found in \Cref{appdx: estimation error}. We observe the following: 1.\ the rates are qualitatively correct, where the noise-level hits $\mathrm{dim}(\calF)/\scurly{N, N'}$ for the complexity of fitting the linear heads and $\log \abs{\calG}$ for the shared representation, 2.\ the burn-in for $N'$ is proportional to $r$, which is the number of samples necessary for $\Fzerohat$ to be well-posed, 3.\ in the burn-in for $N$, $\log\abs{\calG}$ is additionally divided by $T$. Therefore, for large $T$, the dominant term is $\dy r$. Compared to prior work in nonlinear representation learning \citep{du2020few, tripuraneni2020theory} (where $\dy = 1$), this is a dramatic improvement from at least $\calO(\dx)$ to $r$.


\subsection{Representation Learning with Little Mixing}\label{sec: main; rep learning with little mixing}

In this section, we extend our results to full generality, allowing possibly dependent within-task data within-task and general representation classes $\calG$, subsuming various settings of interest, such as identification of nonlinear dynamical systems. Beyond finiteness, we demonstrate that the statistical complexity of a representation class can be bounded via its log-covering number $\log \sfN_\infty(\calG, \gamma)$ in the supremum metric $\rho(g_1, g_2) = \sup_{x \in \sfX} \norm{g_1(x) - g_2(x)}_2$. In particular, this allows a painless instantiation of various standard classes of interest, such as (Lipschitz) parametric function classes.
\begin{definition}[Lipschitz parametric function class]\label{def: Lipschitz loss class}
    A function class $\calG$
    is called $(B_\theta, L_\theta, d_\theta)$-\emph{Lipschitz parametric}
    if $\calG = \{ g_\theta(\cdot) \mid \theta \in \Theta \}$ with $\Theta \subset \R^{d_\theta}$, and satisfies
    \begin{align}\label{eq: Lipschitz loss class}
        &\sup_{\theta \in \Theta} \norm{\theta} \leq B_\theta, \\
        &\sup_{x \in \sfX} \sup_{\substack{\theta_1, \theta_2 \in \Theta \\ \theta_1 \neq \theta_2}} \frac{\norm{g_{\theta_1}(x) - g_{\theta_2}(x)}_2}{\norm{\theta_1-\theta_2}_2} \leq L_\theta.
    \end{align}
    By a standard volumetric argument \citep{wainwright2019high}, it can be shown that a $(B_\theta, L_\theta, d_\theta)$-Lipschitz parametric class $\calG$ satisfies
    \[
    \log \sfN_\infty(\calG, \gamma) \leq d_\theta \log\paren{1 + \frac{2B_\theta L_\theta}{\gamma}}.
    \]
\end{definition}
Parametric function classes include various models of interest, such as (generalized) linear models and neural networks with smooth activations. Notably, instantiating $\calG$ as a linear class, by identifying it with $r \times \dx$ (orthonormal) matrices \citep{du2020few}, we may replace $\log \abs{\calG} \mapsto \tilde \calO(r \dx)$, immediately recovering the rates from prior work on multi-task linear regression, along with the reduced burn-in and refined task diversity estimate. We note that our results are not limited to ``parametric-type'' covering number estimates, and can handle various non-parametric classes. We refer to \citep{ziemann2022learning} for various worked examples; the resulting effect on the martingale complexity bound and thus the final risk bound is elucidated in \Cref{lem: multi-task martingale complexity}. In particular, by associating the complexity of $\calG$ to a well-studied measure in the log-covering number, rate-optimal multi-task bounds can be easily extended from many existing single-task settings, avoiding the need for custom complexity measures that may be hard to instantiate or suboptimal. To quantify dependency, we define $\phi$-mixing \citep{kuznetsov2017generalization} covariates.
\begin{definition}[$\phi$-mixing]
    A sequence of random variables $\{S_i\}_{i=1}^n$ is called $\phi$-mixing if
    \begin{align*}
        \phi(i) \triangleq \sup_{t\in[n]: t+i\leq n} \sup_{s} \| \sfP_{S_{i+t}}( s \mid S_{1:t} ) - \sfP_{S_{i+t}} \|_{\mathsf{TV}} < \infty,\quad  i \in [n],
    \end{align*}
    where $\norm{\cdot}_{\mathsf{TV}}$ denotes the total variation distance.
\end{definition}
In other words, $\phi$-mixing measures the distributional distance between the marginal distribution of a covariate at a given time and the same distribution \textit{conditioned} on events in the past. The mixing function $\phi$ measures how the dependency decays as a function of timesteps in the past $i$. Therefore, a standard technique applied to mixing processes is \emph{blocking}; that is, chopping up trajectories into contiguous blocks of samples. By $\phi$-mixing, given sufficiently long blocks covariates from one block are essentially independent from covariates from more than one block away. We give a short preliminary on the blocking technique in \Cref{appdx: blocking}. Unfortunately, standard applications of the blocking technique deflate the iid rate by a factor of block length, which is undesirable, especially for slowly mixing processes. However, using recent insights from \cite{ziemann2023noise} and \cite{ziemann2022learning}, we observe that the effect of mixing can be relegated to \emph{solely affecting the burn-in}. In other words, we demonstrate that past a mixing-inflated burn-in, the risk bounds remain the same from the earlier independent-samples results.
With this preliminary in place we now state the analogue of \Cref{prop: iid NRLS bound}.

\begin{restatable}{proposition}{mixNRLSbound}\label{prop: mix NRLS bound}
Suppose that $\Pzero$ is stationary and $\phi$-mixing and fix $\delta \in (0,1)$. Fix a block length $k$ dividing $N'/2$. Define the \emph{blocked} noise-class interaction term $V \triangleq \frac{1}{k} \sum_{i=1}^k U_iZ_i^\top \mathopen{}\Sigmaz_Z\mathclose{}^{-1/2}$ and $\sigma_V^2 \triangleq \Ezero\brac{\snorm{V}_F^2}$. Define $\sigma_U^2$, $C_{Z}$, $\hhatZ$ and $\hhatV$ as in \Cref{prop: iid NRLS bound}. As long as the burn-in conditions hold:
\begin{align*}
        \frac{N'}{k} \gtrsim r + \hhatZ^2 \log(1/\delta), \quad \frac{N'}{k} \phi(k) \leq \delta,  \quad \frac{N'}{k} \gtrsim \hhatV^2 \paren{\frac{\log(1/\delta)}{\log(N')}}^{8},
    \end{align*}
then with probability at least $1 - \delta$ we have
    \ifshort{
    \begin{align*}
        &\snorm{(\Fzerohat - \Fzerohatstar)\sqrt{\Sigmaz_Z}}_F^2 \\
        &\lesssim \frac{\sigma_V^2 \log(1/\delta)}{N'} 
        \lesssim \frac{C_Z \sigma_U^2 r \log(1/\delta)}{N'} +\\
        &\frac{C^{(0)} k\nu^{-1} \log (1/\delta) }{N'T}\sum_{t=1}^T \Et \snorm{\Fthat \hat g(X) - \Ftstar g_\star (X)}_2^2,
    \end{align*}
    }
    {
    \begin{align*}
        \snorm{(\Fzerohat - \Fzerohatstar)\sqrt{\Sigmaz_Z}}_F^2
        &\lesssim \frac{\sigma_V^2 \log(1/\delta)}{N'} \\
        &\lesssim \frac{C_Z \sigma_U^2 r \log(1/\delta)}{N'}.
    \end{align*}
    }
\end{restatable}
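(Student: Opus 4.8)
The plan is to mirror the structure of the proof of \Cref{prop: iid NRLS bound} and absorb the dependence of the target trajectory into the burn-in by a blocking argument in the style of \cite{ziemann2023noise,ziemann2022learning}. Throughout, condition on the first-stage data, so that $\hat g$ (hence $Z \triangleq \hat g(X)$) is deterministic and the only randomness is the stationary $\phi$-mixing sequence $\{(X_i,W_i)\}_{i=1}^{N'}$ drawn from $\Pzero$ together with the conditionally subgaussian noise of \Cref{ass: conditional-subG noise}. Exactly as in the iid case, optimality of $\Fzerohatstar$ gives $\Ezero[UZ^\top]=0$, and on the event that $\hat\Sigma_Z \triangleq \tfrac{1}{N'}\sum_i Z_iZ_i^\top$ is invertible, the normal equations for the second-stage problem \eqref{eq: two-stage ERM} give $\Fzerohat-\Fzerohatstar = \big(\tfrac{1}{N'}\sum_i U_iZ_i^\top\big)\hat\Sigma_Z^{-1}$, whence
\[
\snorm{(\Fzerohat-\Fzerohatstar)\sqrt{\Sigmaz_Z}}_F^2 \;\le\; \snorm{\sqrt{\Sigmaz_Z}\,\hat\Sigma_Z^{-1}\sqrt{\Sigmaz_Z}}_2^2 \cdot \Big\| \tfrac{1}{N'}\sum_{i} U_iZ_i^\top (\Sigmaz_Z)^{-1/2}\Big\|_F^2 .
\]
So it suffices to (i) show the curvature factor $\snorm{\sqrt{\Sigmaz_Z}\,\hat\Sigma_Z^{-1}\sqrt{\Sigmaz_Z}}_2 \le 2$ and (ii) bound the cross factor by $\lesssim \sigma_V^2\log(1/\delta)/N'$, both with probability at least $1-\delta$ under the stated burn-ins; the dimension-explicit bound $\lesssim C_Z\sigma_U^2 r\log(1/\delta)/N'$ then follows from a moment comparison.

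For the blocking, partition $[N']$ into $N'/k$ contiguous length-$k$ blocks and split them into the odd- and even-indexed sub-families of $N'/(2k)$ blocks each (using $k \mid N'/2$); blocks within a sub-family are separated by a gap of length $k$, so the standard $\phi$-mixing coupling (\Cref{appdx: blocking}), applied once per sub-family, replaces---on an event of probability at least $1-\tfrac{N'}{k}\phi(k) \ge 1-\delta$---each sub-family's block statistics by an independent-across-blocks copy. The relevant per-block atoms are the block averages $\bar V_j \triangleq \tfrac1k\sum_{i\in B_j} U_iZ_i^\top (\Sigmaz_Z)^{-1/2}$ and the block covariances $\bar\Sigma_{Z,j} \triangleq \tfrac1k\sum_{i\in B_j}Z_iZ_i^\top$; by stationarity $\Ezero[\bar V_j] = 0$, $\Ezero\snorm{\bar V_j}_F^2 = \sigma_V^2$, and $\Ezero[\bar\Sigma_{Z,j}] = \Sigmaz_Z$, and both $\hat\Sigma_Z$ and $\tfrac1{N'}\sum_i U_iZ_i^\top (\Sigmaz_Z)^{-1/2}$ are $\tfrac12$ times the sum of two within-sub-family averages of these atoms.

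Given this reduction, the curvature bound is a lower isometry for an average of $N'/(2k)$ (coupled-to-)independent PSD matrices $\bar\Sigma_{Z,j}$ with mean $\Sigmaz_Z$; a matrix-Chernoff / Rudelson-type argument as in \cite{oliveira2016lower}, with $\hhatZ$ controlling the relevant fourth moment of $Z$, yields $\hat\Sigma_Z \succeq \tfrac12\Sigmaz_Z$ once $N'/k \gtrsim r + \hhatZ^2\log(1/\delta)$---this is the sole place mixing inflates the burn-in, the effective number of independent units being $N'/k$ rather than $N'$. For the cross factor, $\tfrac1{N'}\sum_i U_iZ_i^\top (\Sigmaz_Z)^{-1/2}$ is $\tfrac12$ times the sum of two averages of $N'/(2k)$ mean-zero independent-after-coupling matrices $\bar V_j$; bounding its squared Frobenius norm is then the noise-adaptive self-normalized / Freedman-type estimate of \cite{ziemann2023noise,ziemann2022learning}, in which $\hhatV$ governs the $\Psi_1$-tail of the atoms and therefore enters only the burn-in $N'/k \gtrsim \hhatV^2(\log(1/\delta)/\log N')^8$; the resulting bound is $\lesssim \sigma_V^2\log(1/\delta)/N'$. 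Multiplying by the curvature bound (at most $4$) gives the first displayed inequality.

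Finally, to pass to the dimension-explicit form, Jensen within a block gives $\sigma_V^2 = \Ezero\snorm{\tfrac1k\sum_{i\in B_1} U_iZ_i^\top (\Sigmaz_Z)^{-1/2}}_F^2 \le \Ezero\snorm{U Z^\top (\Sigmaz_Z)^{-1/2}}_F^2 = \Ezero[\snorm{U}_2^2\, Z^\top(\Sigmaz_Z)^{-1}Z]$, which by Cauchy--Schwarz and the definitions of $\sigma_U^2$ and $C_Z$ is $\lesssim C_Z\sigma_U^2 r$, exactly as in the proof of \Cref{prop: iid NRLS bound}; combining \Cref{lem: noiselevellemma} then re-expresses $\sigma_U^2$ through the task-averaged estimation error, recovering the expanded display. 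I expect the main obstacle to be the bookkeeping of the blocking step: one must charge the $\phi(k)$-coupling to the $\delta$-budget exactly once across both sub-families, verify that the one-sided curvature estimate and the two-sided cross-term concentration are each compatible with replacing raw samples by block averages, and---most delicately---confirm that only the burn-ins (not the risk bound) see the deflation from $N'$ to $N'/k$, which is where the precise definition of $\sigma_V^2$ as a \emph{block-averaged} second moment does the work.
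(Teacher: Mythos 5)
Your proposal is correct and follows essentially the same route as the paper: the paper simply invokes an adapted version of \citet[Theorem 3.1]{ziemann2023noise} as a black box for the $\sigma_V^2\log(1/\delta)/N'$ bound (converting its moment burn-in into the $\hhatV^2(\log(1/\delta)/\log N')^8$ form), and then bounds $\sigma_V^2 \le C_Z r \sigma_U^2$ by exactly the within-block convexity step and Cauchy--Schwarz computation you describe. Your sketch of the curvature/cross-term decomposition and the odd/even blocking is a faithful unpacking of what that cited theorem does internally, so there is no substantive difference.
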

The proof is analogous to \Cref{prop: iid NRLS bound}, requiring some additional analysis to bound the \emph{blocked} noise-class variance term $\sigma_V^2$. However, we see that the final bound remains identical to \Cref{prop: iid NRLS bound}, unaffected by the block-length $k$. In other words, we recover \Cref{prop: iid NRLS bound} and are able to shift the effect of mixing to the burn-in. Additionally, as we noted earlier, a key benefit of the martingale offset complexity is that it does not depend on the data distribution beyond the conditional noise-level.
Therefore, with minimal modifications to the task-averaged estimation error bound (\Cref{prop: iid estimation error bound}) besides substituting a general parametric $\calG$ in lieu of a finite $\calG$, we yield our main theorem.
\begin{restatable}[Transfer risk bound, mixing]{theorem}{mixMainBound}\label{thm: main bound mixing}
    Let \Cref{ass: function class boundedness} and \Cref{ass: hypercontractivity} hold. Assume $\sfP^{0:T}$ satisfy $\mu_{\sfX}$-\eqref{eq: mu task coverage}, and let $\mu_F$ be defined as in \eqref{eq: head coverage}.
    Suppose that $P^{0:T}$ are each stationary and $\phi$-mixing. Assume that $k$ is fixed and divides $N'/2$ and $N/2$. Define the quantity $\Phi \triangleq (\sum_{i=1}^\infty \sqrt{\phi(i)})^2$. Assume $\calG$ admits a $(B_\theta, L_\theta, d_\theta)$-Lipschitz parametric form (\Cref{def: Lipschitz loss class}).
    With probability at least $1 - \delta$, the target excess risk of the two-stage ERM \eqref{eq: two-stage ERM} predictor $(\Fzerohat, \hat g)$ is bounded by
    \ifshort{
    \begin{align*}
        &\ER^{(0)}(\Fzerohat, \hat g) \leq \frac{\sigma_{\sfW}^2 C_Z \dy r \log(1/\delta)}{N'}\\
        &+ \sigma_{\sfW}^2 \mu_{\sfX} \snorm{\bfF_\star^{(0)}(\bfF^{1:T}_\star)^\dagger}_2 \cdot \tilde\calO\paren{\frac{\dy r}{N} + \frac{\mathrm{C}(\calG) + \log(1/\delta)}{NT}},
    \end{align*}
    }
    {
    \begin{align*}
        \ER^{(0)}(\Fzerohat, \hat g) &\lesssim \frac{\sigma_{\sfW}^2 C_Z \dy r \log(1/\delta)}{N'}\\
        &\quad + \mu_{\sfX} \mu_F \sigma_{\sfW}^2 \Bigg(\frac{\dy r}{N}\log\paren{\frac{B_\calF B_\calG NT}{\sigma_\sfW} } + \blue{\frac{d_\theta \log\paren{\frac{B_\calF B_\theta L_\theta NT}{\sigma_\sfW}} + \log(1/\delta)}{NT}} \Bigg),
    \end{align*}
    }
    as long as the following burn-in conditions hold:
    \begin{align*}
        \frac{N'}{k} &\gtrsim C_Z \sqrt{C^{(0)}_{4 \to 2}} r + \hhatZ^2 \log(1/\delta), \quad \frac{N'}{k} \phi(k) \leq \delta,  \quad \frac{N'}{k} \gtrsim \hhatV^2 \paren{\frac{\log(1/\delta)}{\log(N')}}^{8} \\
        \frac{N}{\Phi} &\gtrsim C^{1:T}_{4 \to 2} \paren{ \dy r \log\paren{\frac{B_\calF B_\calG NT}{\sigma_\sfW}} +  \blue{\frac{d_\theta \log\paren{\frac{B_\calF B_\theta L_\theta NT}{\sigma_\sfW}} + \log(1/\delta)}{T}}}.
    \end{align*}
\end{restatable}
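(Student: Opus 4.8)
The plan is to assemble the pieces already in place. \Cref{lemma: excess risk two term decomp} reduces the target excess risk to $\ER^{(0)}(\Fzerohat,\hat g)\le \mathsf{(A)}+\nu^{-1}\mathsf{(B)}$, where $\mathsf{(A)}$ is the excess risk of the second-stage head $\Fzerohat$ when it regresses $Y$ on the frozen features $\hat g(X)$---under \Cref{ass: low-dim reps} this is precisely a mis-specified linear least-squares problem on $\R^r$-valued covariates---and $\mathsf{(B)}\triangleq\frac1T\sumT\Et\snorm{\Fthat\hat g(X)-\Ftstar g_\star(X)}_2^2$ is the task-averaged estimation error of the first-stage ERM. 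Since $\mu_{\sfX}$-\eqref{eq: mu task coverage} holds and $\range(\bfF_\star^{(0)})\subseteq\range(\bfF^{1:T}_\star)$ (implicit in the statement, otherwise $\mu_F$ is infinite), \Cref{prop: task coverage -> task diversity} lets us take $\nu^{-1}=\mu_{\sfX}\mu_F$ with $\mu_F=\snorm{\bfF_\star^{(0)}(\bfF^{1:T}_\star)^{\dagger}}_2$. It then remains to bound $\mathsf{(A)}$ and $\mathsf{(B)}$ separately and substitute.

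\textbf{Bounding $\mathsf{(B)}$ (dependent-data estimation error).} This is the mixing analogue of \Cref{prop: iid estimation error bound}, carried out as in \Cref{sec: main; indep cov finite class} with one modification for dependency and one for the covering number. As there, the offset basic inequality bounds the empirical task-averaged estimation error by the martingale offset complexity $\sfM_{NT}(\calH)$; a high-probability chaining bound controls $\sfM_{NT}(\calH)$ by $\log\sfN_\infty(\calH,\gamma)$ at a resolution $\gamma$ of our choice; and a localization argument using hypercontractivity (\Cref{ass: hypercontractivity}) upgrades this to a population lower-uniform-law over $\overline\calH$ with an additive $\tau^2$ slack. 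To accommodate $\phi$-mixing covariates I would block each trajectory into runs of length $k$; the crucial point---following \Cref{lem: multi-task martingale complexity} and the technique of \citet{ziemann2022learning, ziemann2023noise}---is that, since $\sfM_{NT}$ only sees the covariate law through the conditional noise level, the blocking inflates the burn-in by the aggregate $\Phi=(\sum_{i\ge1}\sqrt{\phi(i)})^2$ but does \emph{not} deflate the $1/(NT)$ rate. For the covering number I would use $\log\sfN_\infty(\overline\calH,\varepsilon)\le T\dy r\log(1+4B_\calF B_\calG/\varepsilon)+\log\sfN_\infty(\calG,\varepsilon)$ and insert the Lipschitz-parametric estimate $\log\sfN_\infty(\calG,\gamma)\le d_\theta\log(1+2B_\theta L_\theta/\gamma)$ from \Cref{def: Lipschitz loss class}. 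Optimizing $\gamma$ and $\tau$ yields $\mathsf{(B)}\lesssim\sigma_{\sfW}^2\big(\tfrac{\dy r}{N}\log\tfrac{B_\calF B_\calG NT}{\sigma_{\sfW}}+\tfrac{d_\theta\log(B_\calF B_\theta L_\theta NT/\sigma_{\sfW})+\log(1/\delta)}{NT}\big)$ on a high-probability event, provided the stated $N/\Phi$ burn-in holds (the burn-in being where $C^{1:T}_{4\to2}$ enters).

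\textbf{Bounding $\mathsf{(A)}$ and assembling.} Applying \Cref{prop: mix NRLS bound} to the frozen $\hat g$ with the fixed block length $k$ and its burn-ins $N'/k\gtrsim r+\hhatZ^2\log(1/\delta)$, $(N'/k)\phi(k)\le\delta$, $N'/k\gtrsim\hhatV^2(\log(1/\delta)/\log N')^8$, gives $\mathsf{(A)}\lesssim C_Z\sigma_U^2 r\log(1/\delta)/N'$. By \Cref{lem: noiselevellemma}, $\sigma_U^2\lesssim\dy\sigma_{\sfW}^2+\sqrt{C^{(0)}_{4\to2}}\,\nu^{-1}\mathsf{(B)}=\dy\sigma_{\sfW}^2+\sqrt{C^{(0)}_{4\to2}}\,\mu_{\sfX}\mu_F\mathsf{(B)}$, so $\mathsf{(A)}\lesssim \tfrac{C_Z\dy\sigma_{\sfW}^2 r\log(1/\delta)}{N'}+\tfrac{C_Z\sqrt{C^{(0)}_{4\to2}}\,r\log(1/\delta)}{N'}\mu_{\sfX}\mu_F\mathsf{(B)}$. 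The second summand carries an extra $r\log(1/\delta)/N'$ relative to the $\mu_{\sfX}\mu_F\mathsf{(B)}$ term already present in the decomposition, so under the $N'$ burn-in of \Cref{prop: mix NRLS bound} it is of lower order and absorbed (likewise for the $k$-dependent cross term if one uses the sharper form of \Cref{prop: mix NRLS bound}). Combining through $\ER^{(0)}(\Fzerohat,\hat g)\le\mathsf{(A)}+\mu_{\sfX}\mu_F\mathsf{(B)}$, taking a union bound over the $O(1)$ events (rescaling $\delta$ by constants), and recording the two burn-in families---the $N'$ conditions from \Cref{prop: mix NRLS bound} and the $N/\Phi$ condition from the $\mathsf{(B)}$ step---yields the stated theorem.

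\textbf{Main obstacle.} The substantive work is entirely in the $\mathsf{(B)}$ step: showing that $\phi$-mixing within a task costs only a burn-in inflation by $\Phi$ rather than a deflation of the $1/(NT)$ rate, \emph{uniformly over the nonlinear class} $\calG$. This is where the martingale-offset-complexity machinery and the blocking argument of \citet{ziemann2022learning, ziemann2023noise} (packaged in \Cref{lem: multi-task martingale complexity}) do the real work, and where the hypercontractivity constant $C^{1:T}_{4\to2}$ and the moment-equivalence quantities $\hhatZ,\hhatV,C_Z$ appear---all confined to burn-ins, so the risk bound matches the independent-data rate. The only delicate bookkeeping in the assembly itself is the self-referential appearance of $\mathsf{(B)}$ inside $\sigma_U^2$ (resolved by the $N'$ burn-in), and keeping straight that the \emph{target} task is blocked at the fixed length $k$ while the \emph{source} estimation-error bound aggregates mixing through $\Phi$.
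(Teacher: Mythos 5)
Your proposal is correct and follows essentially the same route as the paper: the canonical decomposition of \Cref{lemma: excess risk two term decomp} with $\nu^{-1}=\mu_{\sfX}\mu_F$ from \Cref{prop: task coverage -> task diversity}, the blocked non-realizable least-squares bound of \Cref{prop: mix NRLS bound} for the target term (with the self-referential $\sigma_U^2$ absorbed via the inflated $N'$ burn-in), and the martingale-offset/localization argument with the Lipschitz-parametric covering number for the source term. One small clarification: for the source tasks the paper does not block the trajectories---the $\Phi$ factor enters through the dependency-matrix version of Samson's concentration inequality (\Cref{prop: mixing lower isometry for given fxn}) in the localization step, exactly as your final paragraph correctly anticipates, while explicit blocking at length $k$ is reserved for the target task.
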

To understand how mixing affects our bounds, let us consider geometric $\phi$-mixing, i.e.\ $\phi(k) \leq \Gamma \rho^k$ for some $\Gamma>0$, $\rho \in (0,1)$. Then we can find a valid block length $k = \frac{\log(\Gamma N'/\delta)}{\log(1/\rho)}$ and $\Phi \leq \frac{\Gamma}{(1 - \sqrt{\rho})^2}$, thereby inflating the burn-in requirement on $N'$ by a factor of $\approx$ $\log(N'/\delta)$ and $N$ by a constant factor. Notably, the excess risk bound remainds unchanged between \Cref{thm: iid main bound} and \Cref{thm: main bound mixing} (up to universal constants). We also note that the problem-specific constants $C_Z, C^{(0)}_{4 \to 2}, C^{(1:T)}_{4 \to 2}$ are defined over expectations of respective \emph{mixture} distributions. Therefore, by linearity of expectation these constants remain the same between sampling dependent trajectories versus sampling each datapoint independently from its marginal distribution, thus remain the same from the iid setting. With $\phi$-mixing, we are able to port to broader sequential settings, such as Markov Chains \citep{samson2000concentration} and parametrized dynamical systems \citep{tu2022learning, ziemann2022learning}.


\section{Numerical Validation}\label{sec: numerical}

\begin{figure}
    \centering
    \subfigure[Example observation]{
    \label{fig: example obs} \includegraphics[width=0.3\linewidth]{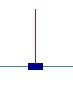}}
    \subfigure[ideal keypoint extraction]{
    \label{fig: ideal keypoints}
    \includegraphics[width=0.5\linewidth]{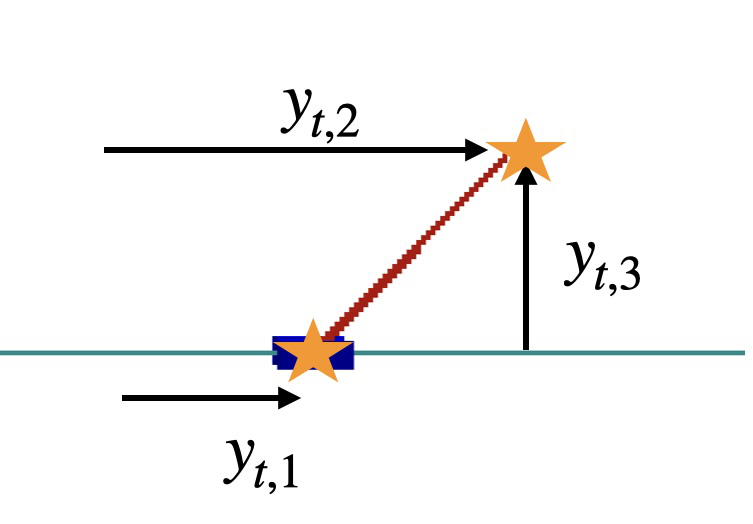}}
    \caption{\Cref{fig: example obs} shows an example camera observation of the pybullet simulated cartpole environment. In this image, the cartpole is at the state $x = \bmat{0 & 0 & 0 & 0}^\top$. \Cref{fig: ideal keypoints} illustrates the ideal keypoints extracted from a cartpole image.}
\end{figure}

\begin{figure*}[t]
    \centering
    \subfigure[Input imitation error]{
        \includegraphics[width=0.26\textwidth]{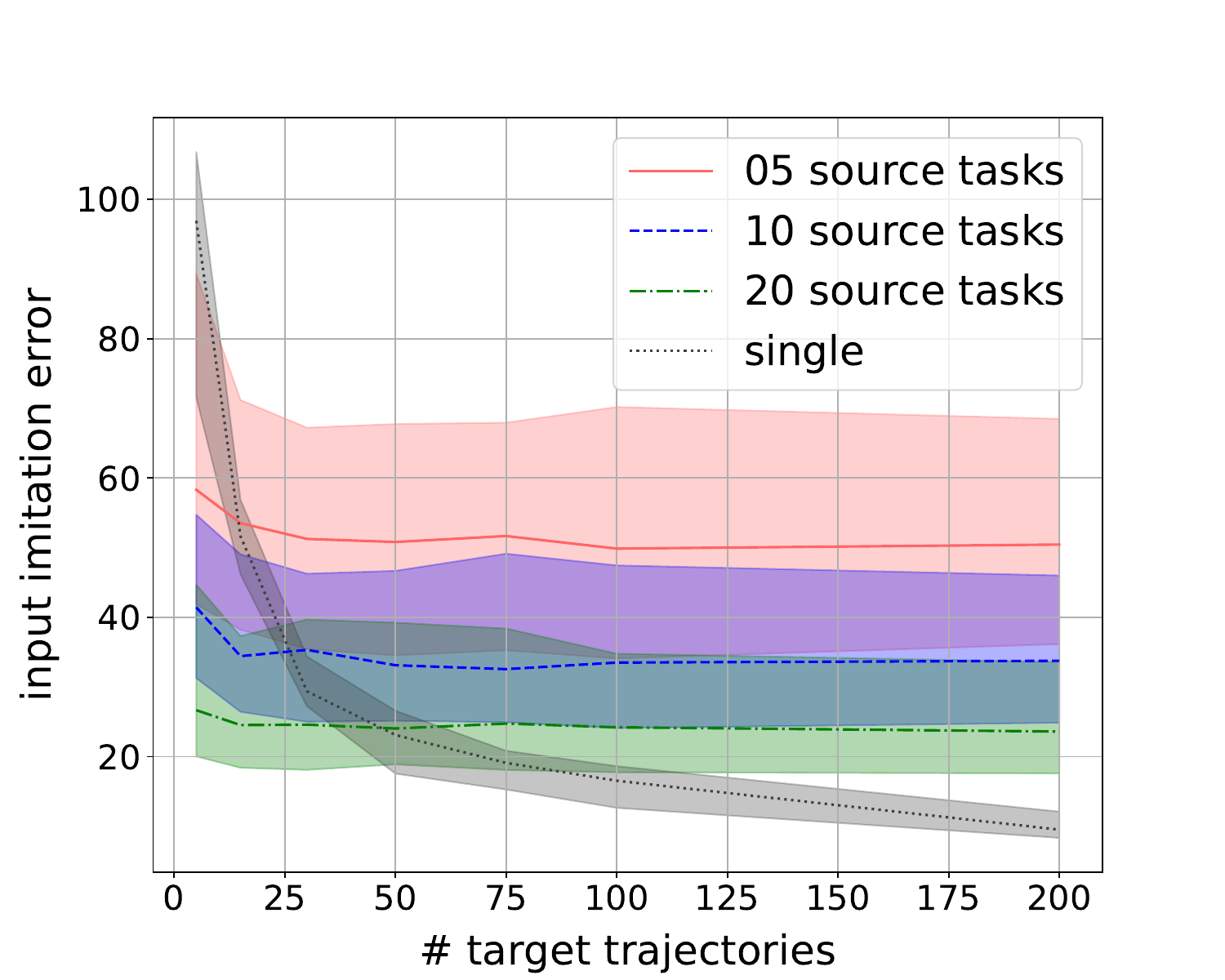}
    }
    \subfigure[State imitation gap]{
    \includegraphics[width=0.26\textwidth]{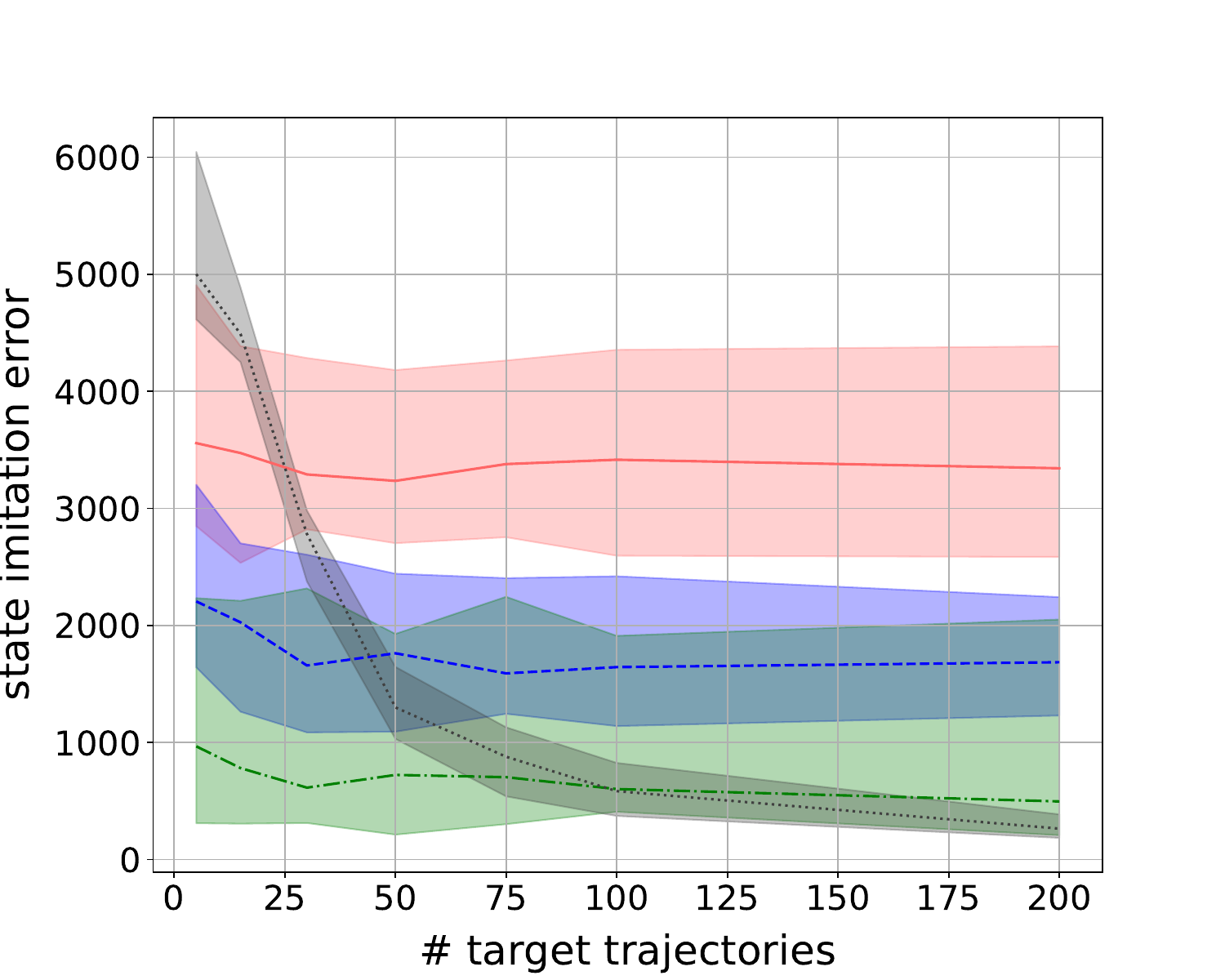}}
    \subfigure[Percent stable]{
    \includegraphics[width=0.26\textwidth]{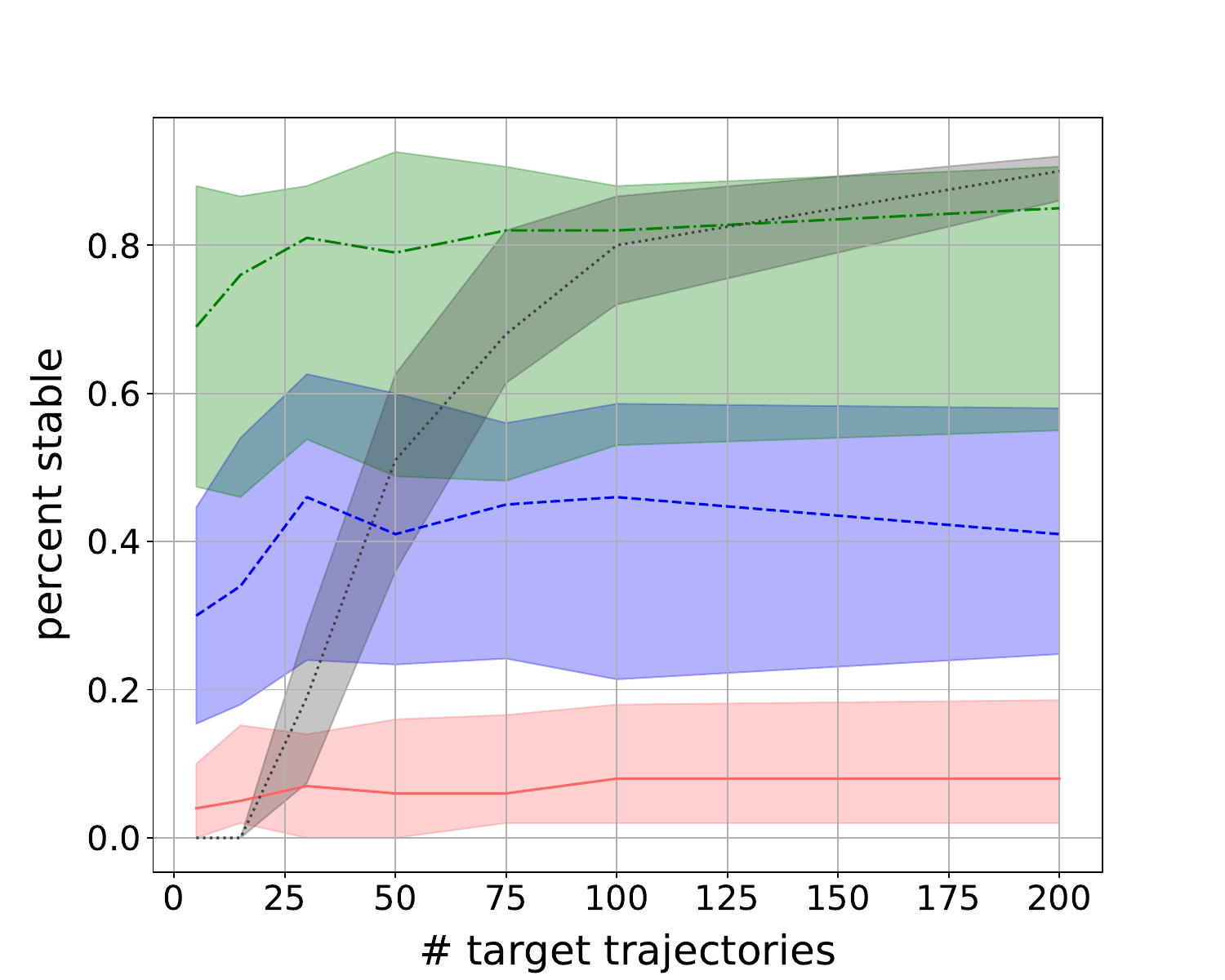}}
    \caption{Three evaluation metrics comparing the performance of multi-task versus single-task imitation learning: the MSE between the input of the learned and expert controllers when evaluated on the expert trajectory, the deviation between the state trajectories generated by the learned and expert controllers, and the \%trials that the learned controller keeps the pole balanced for all $500$ timesteps (the dynamics are discretized to $\Delta t = 0.02$ seconds). Three curves are shown for multi-task imitation learning, generated by pre-training with a different number of source tasks. In all metrics, multi-task learning improves over single task when few target trajectories are available.}
    \label{fig: cartpole imitation learning comparisons}
    \vspace{-0.3cm}
\end{figure*}

To validate our theoretical observations, we consider a non-trivial regression task over dynamical systems: balancing a pole atop a cart from visual observations, as pictured in \Cref{fig: example obs}. 
A collection of systems is obtained by randomly sampling different values for the cart mass, pole mass, and pole length parameters. The regression task is to imitate expert policies controlling each collection of systems from (control input, observation) pairs. We design expert policies as linear controllers of the underlying state\footnote{This consists of the cart position and velocity, and pole angular position and angular velocity.} to balance the pole in the upright position. The expert estimates the state of the system from the camera observations by first applying a keypoint extractor to the camera observations to get noisy estimates of two keypoints (visualized in \Cref{fig: ideal keypoints}), and then passing these noisy estimates into a Kalman filter. A common keypoint extractor is shared across the experts, but the linear controllers and filters are system-specific. Actuation noise is added to the expert input when it is applied to the system. We use demonstrations from the aforementioned expert policies to train imitation learning policies to replicate the experts. The policies are parameterized with convolutional neural networks. They take as input a history of $8$ images, and output the control action to be applied to the system. The policies are trained by solving a supervised learning problem using the expert demonstrations.

Our theoretical analysis predicts that multi-task learning helps substantially in this setting, due to the shared keypoint extractor across all policies. The part that varies between expert policies is the controller and filter, which are linear maps from the keypoints to the control action to be consistent with our linear $\calF$ nonlinear $\calG$ model.  

The experimental results in \Cref{fig: cartpole imitation learning comparisons} compare multi-task learning with single-task learning. We consider multi-task learning using a varying number of source tasks, each consisting of $10$ expert demonstrations. The $x$-axis denotes the number of demonstrations available from the target task. For single task learning, these trajectories are used to train the entire network, while for multi-task learning they are used to fit only the final layer, keeping the representation fixed from pre-training on the source tasks. Three evaluation metrics are plotted: the MSE of the learned controller inputs, the MSE between the learned and expert trajectories, and the \%trials where the controller is stabilizing. Each metric is averaged over $50$ evaluation rollouts for each controller.  We plot the median and shade 30\%-70\% quantiles for these evaluation metrics over 5 random seeds for pretraining the representation, and 10 realizations of target tasks. In all metrics, multi-task learning improves over single task learning in the low data regime as predicted, but saturates quickly when the number of target trajectories exceeds the number of per-task training trajectories, which our theory predicts is the limiting rate when $T$ is large. Full experimental details are contained in \Cref{appdx: numerical}.

\section{Discussion}\label{sec: disc + conclusion}

We provided new guarantees for nonlinear representation learning that: 1.\ agree with prior work rate-wise, 2.\ apply to non-identical covariates and/or sequentially dependent ($\phi$-mixing) covariates, 3.\ improve the per-task sample requirement and refine the task-diversity measure.
We did not address pathologies that can arise in multi-task learning, such as class (source data) imbalance and low task diversity. Indeed, addressing these pathologies is what motivates ongoing work in active learning \cite{wang2023improved} and alignment \citep{wu2019understanding}, which are important directions to fully realize the benefit of learning over multiple tasks. 


\section*{Acknowledgements}
Ingvar Ziemann is supported by a Swedish Research Council international postdoc grant. George J. Pappas is supported in part by NSF Award SLES-2331880. Nikolai Matni is supported in part by NSF Award SLES-2331880, NSF CAREER award ECCS-2045834, NSF EECS-2231349, and AFOSR Award FA9550-24-1-0102.


\bibliography{refs, bib2}
\bibliographystyle{abbrvnat}

\clearpage
\appendix
\onecolumn

\section{Proofs and Additional Information for \Cref{sec: main results}}\label{appendix: main results}
\sloppy
\subsection{Proofs for \Cref{sec: canonical decomposition}}\label{appdx: canonical decomposition}

\TaskCoverageTaskDiversity*

\begin{proof}
    Let $g$ be fixed. Then, writing out the left-hand side of \eqref{eq: nu task diversity}, we have:
\begin{equation}\label{eq:lin_id_1}
\begin{aligned}
        &\hspace{-30pt}\inf_{F\in \calF}\Ezero \|Fg(X)- Y\|_2^2- \Ezero \|\Fzerostar g_\star(X)-Y\|_2^2\\
        & =\inf_{F \in \calF}\Ezero \|F g(X)-\Fzerostar g_\star(X)\|_2^2 && (\Fzerostar\textnormal{ is } \Ezero-\textnormal{optimal}).
\end{aligned}
\end{equation}
We make repeated use of the following fact: for any $t=0,\dots,T$:
\begin{align}
\begin{split}\label{eq: partial quadratic form}
    \inf_{F \in \calF}\Ezero \|F g(X)-\Ftstar g_\star(X)\|_2^2 &= \inf_{F \in \calF} \trace \paren{ \begin{bmatrix} F^\top \\ -\Ftstar^\top \end{bmatrix}^\top \Sigma^{(t)}_g \begin{bmatrix} F^\top \\ -\Ftstar^\top \end{bmatrix}} \\
    &= \trace\paren{\Ftstar \overline\Sigma_g^{(t)} \Ftstar^\top },
\end{split}
\end{align}
where $\Sigma^{(t)}_g, \overline\Sigma_g^{(t)}$ are defined in \Cref{def: mu task coverage}, and the optimization step is a standard calculation about partial minima of quadratic forms \citep[see e.g.][Example 3.15, Appendix A.5.4]{boyd2004convex}. Notably, the result therein is defined for vector arguments; however, this extends to matrix arguments straightforwardly by treating each column of $\begin{bmatrix} F^\top \\ -\Ftstar^\top \end{bmatrix}$ individually. Applying \eqref{eq: partial quadratic form} to task $0$, we have
\begin{align*}
    \inf_{F\in \calF} \ER^{(0)}(F, g) & = \trace\paren{\Fzerostar \overline\Sigma_g^{(0)} \Fzerostar^\top }.
\end{align*}
Meanwhile, applying \eqref{eq: partial quadratic form} to the RHS of \eqref{eq: nu task diversity} yields:
\begin{align*}
    \frac{\nu^{-1}}{ T }\sumT\inf_{F^{(t)}}\; \Et \|F^{(t)}  g(X)- \Ftstar g_\star(X)\|_2^2 &= \frac{\nu^{-1}}{ T }\sumT \trace\paren{\Ftstar \overline\Sigma_g^{(t)} \Ftstar^\top }.
\end{align*}
To provide a valid bound on $\nu^{-1}$, we do the following:
\begin{align*}
    \inf_{F\in \calF} \ER^{(0)}(F, g) &= \trace\paren{\Fzerostar \overline\Sigma_g^{(0)} \Fzerostar^\top } \\
    &= \trace\paren{\Fzerostar (\bfF^{1:T})^{\dagger/2} (\bfF^{1:T})^{1/2} \overline\Sigma_g^{(0)} (\bfF^{1:T})^{1/2} (\bfF^{1:T})^{\dagger/2} \Fzerostar^\top } \tag{requires $\range(\bfF_\star^{(0)}) \subseteq \range(\bfF^{1:T}_\star)$} \\
    &\leq \mu_F \trace\paren{\overline\Sigma_g^{(0)} \bfF^{1:T}} \\
    &= \frac{\mu_F }{T} \sumT \trace\paren{\Ftstar \overline\Sigma_g^{(0)} \Ftstar^\top} \\
    &\leq  \frac{\mu_X \mu_F}{T} \sumT \trace\paren{\Ftstar \overline\Sigma_g^{(0)} \Ftstar^\top} \tag{$\overline\Sigma_g^{(0)} \preceq \mu_\sfX \overline\Sigma_g^{(t)}$ by $\mu_\sfX$-\eqref{eq: mu task coverage}} \\
    &= \frac{\mu_X \mu_F}{T} \sumT\inf_{F^{(t)}}\; \Et \|F^{(t)}  g(X)- \Ftstar g_\star(X)\|_2^2.
\end{align*}
This completes the proof.

\end{proof}

\subsection{Directly Estimating $\nu$}\label{appdx: est task diversity}

\taskdivconsistency*

\begin{proof}
    We recall the estimator
    \begin{align*}
    \hat\nu_N(g) \triangleq \frac{\frac{1}{T}\sumT \hatEt[\snorm{Y}_2^2] - \trace\paren{\widehat F_{g}^{(t)} \hatEt[g(X)g(X)^\top] \mathopen{}\widehat F_{g}^{(t)}\mathclose{}^\top}}{ \hatEzero \brac{\snorm{Y}_2^2} - \trace\paren{\widehat F_{g}^{(0)} \hatEzero[g(X)g(X)^\top] \mathopen{}\widehat F_{g}^{(0)}\mathclose{}^\top} }.
    \end{align*}
    Toward establishing the consistency of $\hat\nu_N(g)$, it suffices to demonstrate for each $t = 0,\dots, T$,
    \begin{align*}
         \hatEt[\snorm{Y}_2^2] - \trace\paren{\widehat F_{g}^{(t)} \hatEt[g(X)g(X)^\top] \mathopen{}\widehat F_{g}^{(t)}\mathclose{}^\top}
    \end{align*}
    is a consistent estimator of $\inf_F \ER^{(t)}(F,g)$. From \eqref{eq: partial quadratic form}, we also have that
    \begin{align*}
        \inf_F \ER^{(t)}(F,g) = \trace\paren{\Ftstar \overline \Sigma_g^{(t)} \Ftstar^\top}.
    \end{align*}
    Expanding out $\overline\Sigma_g^{(t)}$ (see \Cref{def: mu task coverage}), we have
    \begin{align*}
        &\trace\paren{\Ftstar \overline \Sigma_g^{(t)} \Ftstar^\top} \\
        =\;& \trace(\Ftstar \Et[g_\star(X)g_\star(X)^\top] \Ftstar^\top) - \trace(\Ftstar \Et[g_\star(X)g(X)^\top] \; \Et[g(X) g(X)^\top]^{\dagger} \;\Et[g(X)g_\star(X)^\top] \Ftstar^\top).
    \end{align*}
    Observing that
    \begin{align*}
        \yit &= \Ftstar g_\star(\xit) \\
        \widehat F^{(t)}_g &\triangleq \argmin_{F} \;\hatEt[\snorm{Y - F g(X)}_2^2] \\
        &= \hatEt[Y g(X)^\top]\; \hatEt[g(X)g(X)]^{\dagger} \\
        &= \Ftstar \hatEt[g_\star(X)g(X)^\top]\; \hatEt[g(X)g(X)]^{\dagger},
    \end{align*}
    we compute the population counterparts of $\snorm{Y}_F^2$ and $\snorm{\widehat F_{g}^{(t)} g(X)^\top}_F^2$:
    \begin{align*}
        &\Et[\snorm{Y}_2^2] \\
        &\quad = \trace\paren{\Ftstar \Et[g_\star(X) g_\star(X)^\top] \Ftstar^\top} \\
        &\Et\trace\paren{\widehat F_{g}^{(t)} \hatEt[g(X) g(X)^\top] \mathopen{}\widehat F_{g}^{(t)}\mathclose{}^\top} \\
        &\quad = \trace\paren{\Ftstar \Et\brac{\hatEt[g_\star(X)g(X)^\top] \; \hatEt[g(X) g(X)^\top]^{\dagger} \;\hatEt[g(X)g_\star(X)^\top]} \Ftstar^\top}
    \end{align*}
    Notably, the first term on the RHS of the least-squares expression converges to
    \begin{align*}
        \trace(\Ftstar \Et[g_\star(X)g(X)^\top] \; \Et[g(X) g(X)^\top]^{\dagger} \;\Et[g(X)g_\star(X)^\top] \Ftstar^\top)
    \end{align*}
    via the law of large numbers. Putting this together, this verifies
    \begin{align*}
        &\hatEt[\snorm{Y}_2^2] - \trace\paren{\widehat F_{g}^{(t)} \hatEt[g(X)g(X)^\top] \mathopen{}\widehat F_{g}^{(t)}\mathclose{}^\top}  \\
        &\toP \trace(\Ftstar \Et[g_\star(X)g_\star(X)^\top] \Ftstar^\top) - \trace(\Ftstar \Et[g_\star(X)g(X)^\top] \; \Et[g(X) g(X)^\top]^{\dagger} \;\Et[g(X)g_\star(X)^\top] \Ftstar^\top) \\
        &\quad=\trace\paren{\Ftstar \overline \Sigma_g^{(t)} \Ftstar^\top}  \\
        &\quad= \inf_F \ER^{(t)}(F, g).
    \end{align*}
    Therefore, $\hat\nu(g) \toP \nu(g)$.
    
\end{proof}

\subsection{Non-realizable Least Squares}\label{appdx: non-realizable least squares}

 \iidNRLSbound*
\begin{proof}
We defer the derivation of the burn-in requirements to after \Cref{lem: ziemann LS result redux} (noting in the iid setting $k = 1$). The result in terms of $\sigma_V$ is immediate by \citet[Theorem 3.1]{ziemann2023noise}. We postpone discussion of adapting the result to the full dependent covariate case in the proof of \Cref{prop: mix NRLS bound}. It remains to compute the noise term $\sigma_V$ for the second inequality. Namely, we have that:
\begin{equation}\label{eq: sigmav to sigmau}
    \begin{aligned}
        \sigma_V^2 &=\sfE\brac{ \|UZ^\top \Sigma^{-1/2}_Z\|^2_F}\\
        &= \sfE \brac{\|U\|_2^2 \|\Sigma^{-1/2}_Z Z\|^2_2} && (\textnormal{rewrite rank-1 objects})
        \\
        &\leq \sqrt{\sfE \|U\|_2^4  \sfE \|\Sigma^{-1/2}_Z Z \|^4_2}. &&(\textnormal{Cauchy-Schwarz})
    \end{aligned}
\end{equation}
The result follows since $\sfE \|\Sigma^{-1/2}_Z Z \|^4_2 \leq C_Z^2 r^2$.
\end{proof}

Let us spend a few moments to establish the existence of $\hhatZ, \hhatV$. First off, we assume without harm that $\Sigmaz_Z$ is invertible: $\hhatZ$ by definition considers only $v$ such that $v^\top \Sigmaz_Z v =1$ and is thus agnostic to rank-degeneracy. Notably, by using the definition of subgaussianity we immediately get that $\tilde Z \triangleq (\Sigmaz_Z)^{-1/2}Z$ is also subgaussian with corresponding variance proxy no worse than $B_\calG^2 / \lambda_{\min}^{+}(\Sigmaz_Z)$, where $\lambda_{\min}^{+}$ denotes the smallest non-zero eigenvalue. This implies we may bound $\hhatZ$ by
\begin{align*}
    \hhatZ^2 &\triangleq \max_{v^\top \Sigmaz_Z v = 1} \Ezero[\ip{v, Z}^4] \\
    &= \max_{\norm{u}=1} \Ezero[\ip{u, \tilde Z}^4] \\
    &\lesssim (4^{1/2} \mathrm{subG}(\tilde Z))^4 \\
    &\lesssim (B_\calG^2 / \lambda_{\min}^{+}(\Sigmaz_Z))^4,
\end{align*}
where we used the fact that $\tilde Z$ is a subgaussian random vector with parameter no larger than $B_\calG^2 / \lambda_{\min}^{+}(\Sigmaz_Z)$, and thus for any $\norm{u}=1$, $\ip{u, \tilde Z}$ is a subgaussian random variable with the said variance proxy. As for $\hhatV$, by definition $V = UZ^\top (\Sigmaz_Z)^{-1/2}$, where $U = Y - \Fzerohatstar Z = \Fzerostar g_\star(X) - \Fzerohatstar Z$. Therefore, as a sum of subgaussian vectors $U$ is subgaussian; from the prior discussion $\tilde Z$ is subgaussian, and thus $V$ as a product of subgaussian vectors is thus at worst subexponential. Therefore, we know $\norm{\snorm{V}_F}_{\Psi_1}$ exists.

\noiselevellemma*

\begin{proof}
Recall that $U = Y - \Fzerohatstar \hat g(X)=W + \Fzerostar g_\star (X)- \Fzerohatstar \hat g(X)$.   By orthogonality (in $L^2$) of $W_i$ to $\Fzerohatstar \hat g(X)$ thus have that:
    \begin{equation} 
        \begin{aligned}
            \sigma^2_U &= \sqrt{\sfE \|U\|^4} 
            \\
            &
            =
            \sqrt{\sfE \|W\|^4 +\sfE\|\Fzerostar g_\star (X)- \Fzerohatstar \hat g(X)\|^4  } 
            \\
            &
            \leq 
            \sqrt{\sfE \|W\|^4} +\sqrt{\sfE\|\Fzerostar g_\star (X)- \Fzerohatstar \hat g(X)\|^4}
            && (\textnormal{Triangle inequality})
            \\
            &
            \lesssim 
            \dy \sigma^2_W +\sqrt{C^{(0)}_{4\to 2}}\sfE\|\Fzerostar g_\star (X)- \Fzerohatstar \hat g(X)\|^2
            &&(\textnormal{hypercontractive estimate and sub-Gaussianity})
            \\
            &
            \lesssim 
             \dy\sigma^2_W +\frac{\sqrt{C^{(0)}_{4\to 2}}\nu^{-1}}{T} \sum_{t=1}^T \Et \snorm{\Fthat \hat g(X) - \Ftstar g_\star (X)}_2^2 
            &&(\textnormal{task-diversity assumption, \Cref{ass: id}})
        \end{aligned}
    \end{equation}
\end{proof}

\mixNRLSbound*

\begin{proof}
    As noted the argument is identical to that presented in \Cref{prop: iid NRLS bound}. By assumption that the (equal) block length $k$ divides $N'$ and the process $\scurly{x^{(0)}_i, y^{(0)}_i}$ is assumed stationary; therefore we may define w.l.o.g.\ the blocked noise-class interaction term $V \triangleq \frac{1}{k} \sum_{i=1}^k U_iZ_i^\top \mathopen{}\Sigmaz_Z\mathclose{}^{-1/2}$ on the first $k$ indices. Therefore, we may make a few adaptations to \citet[Theorem 3.1]{ziemann2023noise} to yield the following (misspecified) least-squares bound on the intermediate covariates $Z \triangleq \hat g(X)$:
    \begin{lemma}[Adapted version of {\citet[Theorem 3.1]{ziemann2023noise}}]\label{lem: ziemann LS result redux}
        Assume $k$ divides $N'$ and there exists a constant $\mathrm h$ such that for all $v:\;v^\top \Sigmaz_Z v = 1$, $\sfE[\ip{v, Z}^4] \leq \mathrm h^2 \sfE[\ip{v, Z}^2]$. Then, as long as the following burn-in conditions hold:
        \begin{align*}
            &\frac{N'}{k} \gtrsim r + \mathrm h^2 \log(1/\delta),\quad  \frac{N'}{k}\phi(k) \leq \delta, \\
            &\paren{\frac{N'}{k}}^{1-2/p} \gtrsim p^2 \frac{\Ezero[\snorm{V}_F^p]^{2/p}}{\Ezero[\snorm{V}_F^2]\delta^{2/p}}\quad \text{for some }p \geq 4,
        \end{align*}
        the following bound holds with probability at least $1 - \delta$:
        \begin{align*}
            \snorm{(\Fzerohat - \Fzerohatstar)\sqrt{\Sigmaz_Z}}_F^2
        &\lesssim \frac{\trace(\sfE[\VEC(V)\VEC(V)^\top]) \log(1/\delta)}{N'} \\
        &=\frac{\sigma_V^2 \log(1/\delta)}{N'}.
        \end{align*}
    \end{lemma}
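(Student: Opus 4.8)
The plan is to reduce, via blocking, the $\phi$-mixing least-squares problem to an i.i.d.\ one over $m \triangleq N'/k$ blocks, and then run the i.i.d.\ argument of \citet[Theorem 3.1]{ziemann2023noise} --- the one already behind \Cref{prop: iid NRLS bound} --- with each length-$k$ block playing the role of a single i.i.d.\ draw. First I would partition the $N'$ stationary samples $\scurly{(Z_i, U_i)}_{i=1}^{N'}$ into $m$ contiguous blocks $B_1,\dots,B_m$ of length $k$ and invoke a $\phi$-mixing coupling (Berbee-type; cf.\ the blocking preliminary in \Cref{appdx: blocking}) to construct, on a common probability space, an \emph{independent} sequence of blocks $\widetilde B_1,\dots,\widetilde B_m$ with $\widetilde B_b \overset{d}{=} B_b$, agreeing with the original blocks except on an event of probability at most $m\,\phi(k) \le \delta$ --- which is exactly the second burn-in condition. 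On this good event every estimate below may be computed as if the blocks were independent, and the $\delta$ is folded into the final failure probability.

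With independence in hand I would use the standard misspecified-regression decomposition: writing $Y_i = \Fzerohatstar Z_i + U_i$ and using the normal equations, $\Fzerohat - \Fzerohatstar = \big(\tfrac{1}{N'}\sum_i U_i Z_i^\top\big)\widehat{\Sigma}_Z^{-1}$ with $\widehat{\Sigma}_Z \triangleq \tfrac{1}{N'}\sum_i Z_i Z_i^\top$, so that
\[ \big\| (\Fzerohat - \Fzerohatstar)\sqrt{\Sigmaz_Z} \big\|_F \;\le\; \Big\| \tfrac{1}{N'}\sum_{i=1}^{N'} U_i Z_i^\top (\Sigmaz_Z)^{-1/2} \Big\|_F \cdot \big\| (\Sigmaz_Z)^{1/2}\,\widehat{\Sigma}_Z^{-1}\,(\Sigmaz_Z)^{1/2} \big\|_2 . \]
For the second factor I would establish a \emph{lower isometry}: grouping the covariance sum into the $m$ i.i.d.\ block averages $M_b \triangleq \tfrac{1}{k}\sum_{i\in B_b} Z_i Z_i^\top$ (each with mean $\Sigmaz_Z$ by stationarity), the $\mathrm h$-hypercontractivity assumed for $Z$ with respect to $\Sigmaz_Z$ passes to the $M_b$ by Jensen, so a matrix lower-tail bound in the style of \citet{oliveira2016lower} (the $k=1$ step underlying \Cref{prop: iid NRLS bound}) yields $\widehat{\Sigma}_Z \succeq \tfrac12\,\Sigmaz_Z$, hence $\big\| (\Sigmaz_Z)^{1/2}\widehat{\Sigma}_Z^{-1}(\Sigmaz_Z)^{1/2} \big\|_2 \le 2$, with probability $\ge 1-\delta$ once $m = N'/k \gtrsim r + \mathrm h^2\log(1/\delta)$ --- the first burn-in. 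The first factor is the heart of the matter: grouping its sum by blocks rewrites it as $\tfrac{1}{m}\sum_{b=1}^m V_b$, where the $V_b \triangleq \tfrac{1}{k}\sum_{i\in B_b} U_i Z_i^\top (\Sigmaz_Z)^{-1/2}$ are i.i.d.\ copies of $V$ that are \emph{mean zero}, since $\Fzerohatstar$ is the $L^2$-optimal linear predictor and hence $\Ezero[U Z^\top] = 0$. A Rosenthal/Bernstein-type inequality for the Frobenius norm of an i.i.d.\ mean-zero matrix average --- whose second-moment scale is $\tfrac{1}{m}\Ezero[\snorm{V}_F^2] = \tfrac{1}{m}\trace(\Ezero[\VEC(V)\VEC(V)^\top])$ and whose higher-moment tail is killed precisely by the third burn-in $(N'/k)^{1-2/p}\gtrsim p^2\,\Ezero[\snorm{V}_F^p]^{2/p}/(\Ezero[\snorm{V}_F^2]\,\delta^{2/p})$ --- then produces the claimed bound, verbatim as in the i.i.d.\ proof with the $m$ blocks in place of the $N'$ samples. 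Multiplying the two factors, union-bounding over the coupling event and the two $\delta$-events, and relabelling $3\delta\mapsto\delta$, completes the argument.

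The main obstacle is the cross-moment term $\tfrac{1}{N'}\sum_i U_i Z_i^\top$. Because the regression onto $Z = \hat g(X)$ is misspecified, $U_i$ is in general correlated with $Z_i$ \emph{conditionally} --- only $L^2$-orthogonal on average --- so this is not a martingale-difference sum and self-normalized martingale concentration is unavailable; one must instead exploit hypercontractivity and the i.i.d.\ block structure directly. The delicate point, and the reason for importing \citet{ziemann2023noise} rather than a classical blocking inequality, is to carry this out so that the block length $k$ enters \emph{only} through the burn-in (and through the normalization built into $\sigma_V^2$), rather than inflating the risk by a factor of $k$ as a naive $\beta$-mixing / independent-blocks reduction would. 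I expect the careful accounting of how $k$ threads through $\sigma_V^2$ and the moment conditions to be the fiddliest bookkeeping; everything else is a routine transcription of the i.i.d.\ argument with ``block'' in place of ``sample.''
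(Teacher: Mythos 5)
Your proposal follows the same route as the paper, which for this lemma is essentially to defer to \citet[Theorem 3.1]{ziemann2023noise}: the paper's own ``proof'' consists of noting that equal block lengths and stationarity trivialize most of that theorem's burn-in conditions, and that one may keep the trace of the noise covariance rather than passing to operator norm times effective dimension. Your reconstruction of the underlying argument is structurally faithful: decoupling the $m = N'/k$ blocks at total cost $m\,\phi(k) \le \delta$ (the paper's blocking preliminary uses Yu's odd/even-block expectation lemma rather than an explicit coupling, but the price is of the same order); the normal-equations factorization into a lower-isometry factor and a cross-term factor; an Oliveira-style fourth-moment lower tail for the empirical covariance, with the hypercontractivity constant passing to block averages, yielding the first burn-in; the observation that $\Ezero[UZ^\top]=0$ because $\Fzerohatstar$ is the $L^2$ projection; and a Fuk--Nagaev/Rosenthal bound on the blocked average whose higher-moment tail is absorbed by the third burn-in. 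That skeleton is correct and is what the cited theorem does.

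The step you describe as ``the fiddliest bookkeeping'' and then leave unexecuted is, however, the load-bearing one. By your own accounting the second-moment scale of $\tfrac{1}{m}\sum_{b=1}^m V_b$ is $\tfrac{1}{m}\Ezero\snorm{V}_F^2 = \sigma_V^2/m = k\sigma_V^2/N'$, so a verbatim transcription of the i.i.d.\ argument ``with $m$ blocks in place of $N'$ samples'' produces a leading term of order $\sigma_V^2\log(1/\delta)/m$, which differs from the stated $\sigma_V^2\log(1/\delta)/N'$ by a factor of $k$. With $V$ normalized by $k^{-1}$ as in \Cref{prop: mix NRLS bound}, Chebyshev alone already shows the variance of the cross term is $\sigma_V^2/m$, so no concentration inequality applied to the $m$ independent blocks can yield the smaller denominator $N'$; bridging the two requires pinning down the normalization convention for the blocked interaction term against the cited theorem (e.g.\ $k^{-1/2}$ versus $k^{-1}$ in the definition of $V$) or otherwise exhibiting where the extra factor of $k$ is absorbed. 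Since this is precisely the point on which the claim that the risk is ``unaffected by the block length'' rests, the proposal as written establishes the bound only with $N'$ replaced by $N'/k$ in the denominator, and the argument needed to close that gap is missing.
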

        
    The conciser form of the above statement compared to \citet[Theorem 3.1]{ziemann2023noise} follows from the fact that we assumed equal block-size and stationarity for convenience, rendering many of the burn-in conditions trivial. Furthermore, rather than use $\trace(\cdot) \equiv \snorm{\cdot}_{2}\; \mathrm{edim}(\cdot)$ on the noise-class variance, for our purposes it suffices to remain with the trace. It remains to analyze the last burn-in. Notably, as written there is a polynomial dependence on $1/\delta$, which is the price paid when $Z$ has finite moments (e.g.\ in \cite{ziemann2023noise} only $4$ moments are assumed), roughly speaking quantifying the transition from the moderate deviations to large deviations regime. However, in our case $Z = \hat g(X)$ is bounded, hence subgaussian--thus we aim to modify this to a Bernstein-type burn-in. Now, invoking our definition of $\hhatV$ from \Cref{def: NRLS moment-equiv quants}, we see that
    \begin{align*}
        \frac{\Ezero[\snorm{V}_F^p]^{2/p}}{\Ezero[\snorm{V}_F^2]} \leq \frac{p^2 \norm{\snorm{V}_F}^2_{\Psi_1}}{\Ezero[\trace(V^\top V)]} = p^2 \hhatV.
    \end{align*}
    Therefore, it suffices to satisfy the more stringent burn-in for some $p \geq 4$:
    \begin{align*}
        \paren{\frac{N'}{k}}^{1-2/p} \gtrsim p^4  \hhatV (1/\delta)^{2/p}.
    \end{align*}
    We now aim to find the optimal range for $p$. Defining $m \triangleq N'/k$, we take log on both sides to yield
    \begin{align*}
        \paren{1 - \frac{2}{p}}\log(m) &\geq 4\log(p) + \log(C\hhatV) + \frac{2}{p}\log(1/\delta), \quad C >0 \text{ is a fixed constant.}
    \end{align*}
    Rearranging and substituting $p \to (C\hhatV)^{-1/4} (N')^{b/4}$, where $b > 0$ is a constant to be determined later, we find the above inequality is equivalent to
    \begin{align*}
        b + 2 \paren{\frac{C\hhatV}{m^b}}^{1/4} \paren{\frac{1 + \log(1/\delta)}{\log(m)}} &\leq 1.
    \end{align*}
    Therefore, sufficing to choose $b = 1/2$ (though we may similarly choose any $b = 1- \varepsilon$), we invert the above inequality to yield the burn-in requirement:
    \begin{align}\label{eq: subG burn-in}
        m = \frac{N'}{k} &\gtrsim \hhatV^2 \paren{\frac{\log(1/\delta)}{\log(N'/k)}}^8.
    \end{align}
    Notably, when $\delta \geq \mathrm{poly}(1/m)$, then the above reduces to $N'/k \gtrsim \hhatV^2$. It now remains to analyze the noise-class variance $\sigma_V^2$. By definition we have that:
    \begin{align*}
        \sigma_V^2 &= \frac{1}{k^2}\trace\left( \Sigma_Z^{-1/2} \sfE \left(\sum_{i,j=1}^k (U_i Z_i)^\top (U_j Z_j)  \right) \Sigma_Z^{-1/2} \right) \\
        &\leq \frac{1}{2k^2} \sfE \sum_{i,j=1}^k \norm{U_i Z_i^\top \Sigma_Z^{-1/2}}_F^2 + \norm{U_j Z_j^\top \Sigma_Z^{-1/2}}_F^2 &&\trace(A^\top B)\leq \frac{\norm{A}_F^2}{2} + \frac{\norm{B}_F^2}{2} \\
        &= \frac{1}{k}\sfE \sum_{i=1}^k \norm{U_i Z_i^\top \Sigma_Z^{-1/2}}_F^2 \\
        &= \sfE \norm{U Z^\top \Sigma_Z^{-1/2}}_F^\top \\
        &\leq \sqrt{\sfE \snorm{U}_2^4 \snorm{\Sigma_Z^{-1/2} Z}_2^4 } \\
        &\leq C_Z r \sigma_U^2. && \text{applying \eqref{eq: sigmav to sigmau}}
    \end{align*}

\end{proof}

\subsection{Bounding the Estimation Error}\label{appdx: estimation error}
The goal is to control the task-averaged estimation error. As previously discussed, the key observation is to quantify a lower isometry, such that
\begin{align*}
    &\frac{1}{T}\sumT \Et \snorm{\ft\mem\circ g(X) - \ftstar\mem \circ g_\star(X)}_2^2 \lesssim \frac{1}{NT}\sumT\sumN \snorm{\ft\mem\circ g(\xit) - \ftstar\mem \circ g_\star(\xit)}_2^2.
\end{align*}
By hypercontractivity, we have an anti-concentration result:
\begin{restatable}[{\citet[Theorem 2]{samson2000concentration}}, {\citet[Prop.\ 5.1]{ziemann2022learning}}]{proposition}{iidLowerIsometry} \label{prop: lower isometry for given fxn}
    Fix $C > 0$. Let $\psi : \sfX \to \R$ be a non-negative function satisfying
    \begin{align}
        \sfE [\psi (X)^2] \leq C \sfE [\psi(X)]^2. \nonumber
    \end{align}
    Then we have:
    \begin{align}
        \sfP\brac{\frac{1}{m} \sum_{i=1}^m \psi(x_i) \leq \frac{1}{2} \sfE [\psi(X)]} \leq \exp\paren{\frac{-m}{8C}}. \nonumber
    \end{align}
\end{restatable}
Setting $\psi(X) \triangleq \snorm{\overline h(X)}_2^2$, \Cref{prop: lower isometry for given fxn} yields a tail bound on the lower-isometry event for a \textit{given} $\overline h$, which we will use shortly.
By an application of the basic inequality \citep{rakhlin2014online, liang2015learning}, an empirical estimation error can be bounded by
\begin{align}
    \frac{1}{NT} \sumT\sumN \snorm{h(\xit)}_2^2
    &\leq \sup_{h \in \calH} \frac{1}{NT} \sumT\sumN 4\ip{\wit, h(\xit)} - \snorm{h(\xit)}_2^2 \\
    &\triangleq \sfM_{NT}(\calH),
\end{align}
where $\sfM_{NT}(\calH)$ is denoted the (empirical) \textit{martingale offset complexity} \citep{liang2015learning, ziemann2022learning}, which serves as the capacity measure of hypothesis class $\calH$. Notably, $\sfM_{NT}(\calH)$ scales with the \textit{noise-level} $\sigma_{\sfW}^2$, rather than the diameter of $\calH$. We control $\sfM_{NT}(\calH)$ via a high-probability chaining bound from \citet[Theorem 4.2.2]{ziemann2022statistical}.
\begin{lemma}[{\citet[Theorem 4.2.2]{ziemann2022statistical}}]\label{lem: high probability chaining}
    Let \Cref{ass: conditional-subG noise} hold, and fix $u, v, w > 0$. Then, with probability at least $1 - 3\exp(-u^2/2) - \exp(-v/2) - e\exp(-w)$, the following bound on the martingale complexity holds:
    \begin{align*}
        \sfM_{NT}(\calH) \leq c \inf_{\gamma > 0, \delta \in [0, \gamma]}\cdot &\bigg\{w\delta \sigma_{\sfW}\sqrt{\dy} + \sqrt{\frac{\sigma_{\sfW}^2}{NT}} \int_{\delta/2}^\gamma \sqrt{\log\sfN_\infty(\calH, \varepsilon)} \;d\varepsilon + \frac{v\sigma_{\sfW}^2}{NT} \\
        &\quad +\frac{\sigma_{\sfW}^2 \log\sfN_\infty(\calH, \gamma)}{NT} + \frac{u\gamma \sigma_{\sfW}}{\sqrt{NT}} + \gamma^2 \bigg\},
    \end{align*}
    where $c > 0$ is some universal numerical constant, and $\sfN_{\infty}(\calH, \gamma)$ is the covering number of $\calH$ at resolution $\gamma$ under the metric $\rho(h_1, h_2) = \sup_{x \in \sfX} \norm{h_1(x) - h_2(x)}$.
\end{lemma}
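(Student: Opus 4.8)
The plan is to re-derive this known high-probability chaining bound by generic (Dudley-type) chaining applied to the \emph{offset} noise process, adapted to the conditionally subgaussian martingale structure of \Cref{ass: conditional-subG noise}. First I would flatten the $NT$ source samples into one stream $(w_j,x_j)_{j=1}^{NT}$ adapted to a filtration, introduce the empirical norm $\|h\|_{NT}^2\triangleq\tfrac1{NT}\sum_j\snorm{h(x_j)}_2^2$ and the noise functional $\mathcal{W}(h)\triangleq\tfrac1{NT}\sum_j\ip{w_j,h(x_j)}$, so that $\sfM_{NT}(\calH)=\sup_{h\in\calH}4\mathcal{W}(h)-\|h\|_{NT}^2$. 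Conditioning on the covariates, each increment $\mathcal{W}(h)-\mathcal{W}(h')=\tfrac1{NT}\sum_j\ip{w_j,h(x_j)-h'(x_j)}$ is a normalized martingale-difference sum whose conditional moment generating function, by \Cref{ass: conditional-subG noise}(b) applied along the unit direction $(h(x_j)-h'(x_j))/\snorm{h(x_j)-h'(x_j)}_2$ at each $j$, is bounded by $\exp(\lambda^2\sigma_{\sfW}^2\|h-h'\|_{NT}^2/(2NT))$; hence, conditionally on $\{x_j\}$, $\mathcal{W}$ is subgaussian in the random pseudometric $d(h,h')=\tfrac{\sigma_{\sfW}}{\sqrt{NT}}\|h-h'\|_{NT}$. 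The pointwise bound $\|h-h'\|_{NT}\le\sup_{x\in\sfX}\norm{h(x)-h'(x)}_2=\rho(h,h')$ shows every $\varepsilon$-cover in $\rho$ is an $\tfrac{\sigma_{\sfW}\varepsilon}{\sqrt{NT}}$-cover in $d$, which is precisely what lets the data-dependent metric entropy be replaced throughout by $\log\sfN_\infty(\calH,\cdot)$.

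Next I would carry out the offset chaining. Fix resolutions $0\le\delta\le\gamma$ and telescope a chain of successive $\rho$-covers from scale $\gamma$ down to $\delta/2$, applying at each link a maximal inequality for conditionally subgaussian martingale differences. This produces the Dudley integral $\sqrt{\tfrac{\sigma_{\sfW}^2}{NT}}\int_{\delta/2}^{\gamma}\sqrt{\log\sfN_\infty(\calH,\varepsilon)}\,d\varepsilon$ together with a high-probability remainder that separates into a moderate-deviations (subgaussian) piece $\tfrac{u\gamma\sigma_{\sfW}}{\sqrt{NT}}$ held at level $e^{-u^2/2}$, a large-deviations (Bernstein/Freedman) piece $\tfrac{\sigma_{\sfW}^2\log\sfN_\infty(\calH,\gamma)+v\sigma_{\sfW}^2}{NT}$ held at level $e^{-v/2}$, and a coarsest-discretization piece $w\delta\sigma_{\sfW}\sqrt{\dy}$ held at level $e\cdot e^{-w}$ (the $\sqrt{\dy}$ coming from $\tfrac1{NT}\sum_j\norm{w_j}_2\lesssim\sigma_{\sfW}\sqrt{\dy}$ combined with Cauchy--Schwarz against the $\delta$-residual). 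The offset term $-\|h\|_{NT}^2$ is then exploited \emph{inside} the chain in the Liang--Rakhlin--Sridharan fashion: peel $\calH$ into the ball $\{\|h\|_{NT}\le\gamma\}$ and the shells $\{2^{k-1}\gamma<\|h\|_{NT}\le2^k\gamma\}$, $k\ge1$; the ball contributes the chaining term plus the residual $\gamma^2$, while on each shell $4ab-a^2\le4b^2$ with $a=\|h\|_{NT}$ lets the shell's offset $(2^{k-1}\gamma)^2$ dominate $4\sup_{\|h\|_{NT}\le2^k\gamma}\mathcal{W}(h)$, the geometric decay making the sum over shells converge and the failure probabilities summable. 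Taking the infimum over $\gamma,\delta$ gives the claimed bound.

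The main obstacle is that no step can invoke an off-the-shelf iid subgaussian maximal inequality: every link of the chain and the final tail must be the \emph{conditional} martingale version, so one needs a Freedman-type bound for conditionally subgaussian martingale-difference sums that is simultaneously sharp in the $\sqrt{\cdot/NT}$ and $1/NT$ regimes and composes cleanly with the union bound over covers. A secondary subtlety is that the controlling pseudometric $d$ is itself random, handled here by the crude but adequate substitution $d\le\tfrac{\sigma_{\sfW}}{\sqrt{NT}}\rho$, trading a possibly loose metric for a data-independent one. The genuinely delicate part is bookkeeping — arranging the three tail parameters $u,v,w$ to enter additively rather than multiplicatively and ensuring the peeling introduces no $\log\log$ or shell-count factor — after which the statement is a routine instantiation of Dudley's inequality for offset processes, matching \citet[Theorem~4.2.2]{ziemann2022statistical}.
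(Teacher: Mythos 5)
You should first note that the paper does not prove this lemma at all: it is imported verbatim from \citet[Theorem 4.2.2]{ziemann2022statistical}, so there is no in-paper argument to compare against line by line. Measured against the known proof of that theorem, your reconstruction has most of the right ingredients --- flattening the $NT$ samples into one adapted stream, observing that conditionally on the covariates the increments of $\mathcal W(h)=\frac{1}{NT}\sum_j\ip{w_j,h(x_j)}$ are subgaussian in the empirical pseudometric, dominating that random metric by $\rho$ so that sup-norm covers control everything, a one-step discretization at scale $\gamma$ followed by chaining down to $\delta/2$ (Dudley integral plus the $u\gamma\sigma_{\sfW}/\sqrt{NT}$ deviation), a Cauchy--Schwarz step against $\frac{1}{NT}\sum_j\snorm{w_j}_2\lesssim w\,\sigma_{\sfW}\sqrt{\dy}$ for the finest-scale residual, and a Bernstein/Freedman-type bound over the finite $\gamma$-cover producing $\sigma_{\sfW}^2(\log\sfN_\infty(\calH,\gamma)+v)/NT$.

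The one place you deviate, and where your sketch has a genuine gap, is the treatment of the offset $-\norm{h}_{NT}^2$. In the actual argument no peeling is needed: for each \emph{fixed} $h$ in the $\gamma$-cover, Assumption (b) makes $\exp\bigl(\lambda\sum_j\ip{w_j,h(x_j)}-\tfrac{\lambda^2\sigma_{\sfW}^2}{2}\sum_j\snorm{h(x_j)}_2^2\bigr)$ a supermartingale, and choosing $\lambda\asymp\sigma_{\sfW}^{-2}$ yields the pointwise offset bound $4\mathcal W(h)-\norm{h}_{NT}^2\lesssim\sigma_{\sfW}^2\log(1/\delta)/NT$; the union bound over the cover is then the \emph{only} use of the offset, and the $\gamma^2$ term arises from comparing $\norm{h}_{NT}^2$ with $\norm{h_\gamma}_{NT}^2$ via $\norm{h-h_\gamma}_\infty\le\gamma$ (e.g.\ $\norm{h}_{NT}^2\ge\tfrac12\norm{h_\gamma}_{NT}^2-\gamma^2$), not from a ``ball residual.'' Your shell-peeling layer is therefore redundant with the Freedman step you already invoke, and as sketched it does not go through cleanly: the normalization $h/\norm{h}_{NT}$ exits $\calH$, the Dudley integrals on the outer shells do not shrink with $k$ so the comparison against $(2^{k-1}\gamma)^2$ imposes an additional lower-bound condition on $\gamma^2 NT/\sigma_{\sfW}^2$, and summing the shell failure probabilities would pollute the clean additive tail $3e^{-u^2/2}+e^{-v/2}+e\,e^{-w}$ claimed in the statement. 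Drop the peeling and let the self-normalized bound on the cover do that work; the rest of your outline then matches the cited proof.
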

In particular, \Cref{lem: high probability chaining} suggests that the martingale complexity can be bounded solely as a function of the class $\calH$, and not the statistics of the data. Roughly speaking, we can choose $\gamma$ to be whatever is required such that the log-covering number term is dominant, as $\gamma$ manifests only logarithmically there. To determine what $\calH$ to cover, we use the following localization result from \citet[Theorem 5.1]{ziemann2022learning}.
\begin{restatable}{proposition}{iidLocalization}\label{prop: localization}
    Let \Cref{ass: hypercontractivity} hold with $C_{4\to2}^{1:T}$. Defining $\overline \calH_\star$ as the star-hull\footnote{$\calH_\star \triangleq \mathrm{StarHull}(\calH) = \scurly{\alpha h,\; h \in \calH, \alpha \in [0,1]}$.} of $\overline\calH$, $B(\tau) \triangleq \scurly{h \in \overline \calH_\star \big| \sfE^{1:T} \snorm{h}_2^2 \leq \tau^2}$, and $\partial B(\tau)$ the boundary of $B(\tau)$. Then, there exists a $\tau/\sqrt{8}$-net $\overline \calH_{\star}(\tau)$ in the $\norm{\cdot}_\infty$ of $\partial B(\tau)$ such that
    \begin{align}\label{eq: localization}
        \begin{split}
            &\sfP \brac{\inf_{\overline h \in \overline\calH_\star \setminus B(\tau)} \curly{\widehat \sfE^{1:T}_N \snorm{\overline h}_2^2 - \frac{1}{8}\sfE^{1:T}\snorm{\overline h}_2^2 } \leq 0} \leq \abs{\overline \calH_\star(\tau)} \exp\paren{\frac{-NT}{8C_{4\to2}^{1:T}}}.
        \end{split}
    \end{align}
\end{restatable}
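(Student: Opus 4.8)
The plan is to run a peeling / localization argument in the spirit of \citet[Theorem 5.1]{ziemann2022learning}, adapted to vector-valued $\overline h$ and to the mixture law $\sfP^{1:T}$ so that the non-identical per-task covariate distributions are absorbed automatically. The first ingredient is a \emph{star-shaped rescaling}. Since $0 \in B(\tau)$, $\overline\calH_\star$ is star-shaped about $0$, and both $\overline h \mapsto \widehat\sfE^{1:T}_N\snorm{\overline h}_2^2$ and $\overline h \mapsto \sfE^{1:T}\snorm{\overline h}_2^2$ are $2$-homogeneous, every $\overline h \in \overline\calH_\star \setminus B(\tau)$ can be scaled down by some $\alpha(\overline h)\in(0,1]$ so that $\alpha(\overline h)\,\overline h$ lands on the sphere $\partial B(\tau) \triangleq \{h \in \overline\calH_\star : \sfE^{1:T}\snorm{h}_2^2 = \tau^2\}$ (by continuity of $\alpha \mapsto \alpha^2 \sfE^{1:T}\snorm{\overline h}_2^2$). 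Because $\widehat\sfE^{1:T}_N\snorm{\overline h}_2^2 - \tfrac18\sfE^{1:T}\snorm{\overline h}_2^2 = \alpha^{-2}\big(\widehat\sfE^{1:T}_N\snorm{\alpha\overline h}_2^2 - \tfrac18\tau^2\big)$ with $\alpha=\alpha(\overline h)$, it suffices to show that $\widehat\sfE^{1:T}_N\snorm{\overline h}_2^2 \ge \tfrac18\tau^2$ holds uniformly over $\overline h\in\partial B(\tau)$ on the stated event.

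The second ingredient is a \emph{discretized lower isometry}. I would fix a finite $\tau/\sqrt8$-net $\overline\calH_\star(\tau)\subseteq\partial B(\tau)$ in the metric $\rho(h_1,h_2)=\sup_{x\in\sfX}\snorm{h_1(x)-h_2(x)}_2$, which is finite because the classes considered ($\calG$ finite in \Cref{prop: iid estimation error bound}, or $(B_\theta,L_\theta,d_\theta)$-Lipschitz parametric otherwise) are totally bounded under $\rho$, hence so is $\partial B(\tau)$. For each fixed $\overline h'\in\overline\calH_\star(\tau)$, set $\psi(X)\triangleq\snorm{\overline h'(X)}_2^2\ge0$; then \Cref{ass: hypercontractivity} gives $\sfE^{1:T}[\psi^2]\le C_{4\to2}^{1:T}(\sfE^{1:T}[\psi])^2$, so \Cref{prop: lower isometry for given fxn}, applied with $m=NT$ to the independent sample $\{\xit\}_{t\in[T],\,i\in[N]}$ (noting $\sfE[\widehat\sfE^{1:T}_N\psi]=\sfE^{1:T}[\psi]=\tau^2$), yields $\widehat\sfE^{1:T}_N\snorm{\overline h'}_2^2\ge\tfrac12\tau^2$ off an event of probability at most $\exp(-NT/(8C_{4\to2}^{1:T}))$. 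A union bound over $\overline\calH_\star(\tau)$ contributes the prefactor $\abs{\overline\calH_\star(\tau)}$.

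On the complement of that union-bound event I would then transfer the estimate from the net to the whole sphere: for $\overline h\in\partial B(\tau)$, choosing $\overline h'\in\overline\calH_\star(\tau)$ with $\rho(\overline h,\overline h')\le\tau/\sqrt8$ and applying the pointwise inequality $\snorm{a}_2^2\ge\tfrac12\snorm{b}_2^2-\snorm{a-b}_2^2$ with $a=\overline h(x)$, $b=\overline h'(x)$ gives $\widehat\sfE^{1:T}_N\snorm{\overline h}_2^2\ge\tfrac12\widehat\sfE^{1:T}_N\snorm{\overline h'}_2^2-\rho(\overline h,\overline h')^2\ge\tfrac12\cdot\tfrac12\tau^2-\tfrac18\tau^2=\tfrac18\tau^2$, uniformly over $\partial B(\tau)$. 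Rescaling back via the first ingredient turns this into $\widehat\sfE^{1:T}_N\snorm{\overline h}_2^2\ge\tfrac18\sfE^{1:T}\snorm{\overline h}_2^2$ for all $\overline h\in\overline\calH_\star\setminus B(\tau)$, i.e.\ the complement of the event in \eqref{eq: localization}, completing the bound.

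The point that needs the most care is invoking \Cref{prop: lower isometry for given fxn} under \emph{non-identical} covariates: the tail bound must be used in a form valid for independent but not identically distributed samples $\xit$, for which the relevant first and second moments are precisely the mixture quantities $\sfE^{1:T}[\psi]$ and $\sfE^{1:T}[\psi^2]$ that $C_{4\to2}^{1:T}$ is defined against — this is exactly what the product-space concentration inequality of \citet{samson2000concentration} provides. The remaining work is routine constant-chasing: matching the $\tau/\sqrt8$ net resolution, the factor $\tfrac12$ from the lower isometry, and the pointwise bound $\snorm{a}_2^2\ge\tfrac12\snorm{b}_2^2-\snorm{a-b}_2^2$ so the slack collapses into the advertised $\tfrac18$; plus checking that $\partial B(\tau)$ is nonempty (it receives $\alpha(\overline h)\,\overline h$ for any $\overline h$ outside $B(\tau)$) and that the net is finite in the settings of interest.
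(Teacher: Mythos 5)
Your proposal is correct and follows essentially the same route as the paper's proof: rescale into the sphere $\partial B(\tau)$ using the star-hull structure and $2$-homogeneity, apply the hypercontractivity-based lower isometry (\Cref{prop: lower isometry for given fxn}) with a union bound over the $\tau/\sqrt{8}$-net, and transfer from the net to the sphere via the quadratic inequality $\snorm{a}_2^2 \ge \tfrac12\snorm{b}_2^2 - \snorm{a-b}_2^2$, which is exactly the parallelogram-law step the paper uses. The constants match ($\tfrac12\cdot\tfrac12\tau^2 - \tfrac18\tau^2 = \tfrac18\tau^2$), and your remark about Samson's inequality holding for independent non-identically distributed samples with the mixture moments is the same (implicit) justification the paper relies on.
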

\begin{proof}
    We set up to apply \Cref{prop: lower isometry for given fxn}. By \Cref{ass: hypercontractivity}, we have for all $\overline h \in \overline \calH$,
    \begin{align*}
        \sfE^{1:T} \snorm{\overline h(X)}_2^4 \leq C^{1:T}_{4\to2}\paren{\sfE^{1:T} \snorm{\overline h(X)}_2^2}^2.
    \end{align*}
    It is immediately verifiable that the above hypercontractivity assumption on $\overline \calH$ transfers to the star-hull $\overline \calH_\star$. Therefore, setting $\psi(x) \triangleq \snorm{\overline h(x)}_2^2$ and union bounding over applications of \Cref{prop: lower isometry for given fxn} to each $\overline h \in \calH_\star(\tau)$ yields
    \begin{align*}
        &\sfP \brac{ \exists \overline h \in \overline \calH_\star(\tau) : \;\widehat \sfE^{1:T}_N \snorm{\overline h}_2^2  \leq \frac{1}{2}\sfE^{1:T}\snorm{\overline h}_2^2} \leq \abs{\overline \calH_\star(\tau)} \exp\paren{\frac{-NT}{8C_{4\to2}^{1:T}}}.
    \end{align*}  
    Having established a lower uniform law over the covering, we require a way to transfer the statement to the whole class $\overline \calH_\star$ (outside the localization radius $\tau$). The definition of star-hull allows re-scaling of $\overline \calH$ by $\alpha \in [0,1]$, and thus for any $\overline h \in \overline\calH_\star \setminus B(\tau)$, we may rescale by
    \begin{align*}
        \overline h \mapsto \alpha \overline h, \quad \alpha \triangleq \frac{\tau}{\sqrt{\sfE^{1:T} \snorm{h}_2^2}} < 1,
    \end{align*}
    such that $\alpha \overline h \in \partial B(\tau)$. We note that by the parallelogram law and $\snorm{\cdot}_\infty$-covering definition of $\overline\calH_\star(\tau)$, for any $\overline h \in \partial B(\tau)$, there exists $\overline h_i \in \overline\calH_\star(\tau)$ such that
    \begin{align*}
        &\widehat \sfE^{1:T}_N  \snorm{\overline h}_2^2 + \widehat \sfE^{1:T}_N  \snorm{\overline h - \overline h_i}_2^2 = \frac{1}{2}\widehat \sfE^{1:T}_N  \snorm{\overline h_i}_2^2 + \frac{1}{2}\widehat \sfE^{1:T}_N  \snorm{2\overline h - \overline h_i}_2^2\\
        \implies & \widehat \sfE^{1:T}_N  \snorm{\overline h}_2^2 + \frac{\tau^2}{8} \geq \frac{1}{2}\widehat\sfE_N^{1:T} \snorm{\overline h_i}_2^2.
    \end{align*}
    Therefore, harkening back to the lower uniform law, we have that with probability at least $1 - \abs{\overline \calH_\star(\tau)} \exp\paren{\frac{-NT}{8C_{4\to2}^{1:T}}}$, for any $\overline h \in \partial B(\tau)$, there exists $\overline h_i \in \overline \calH_\star(\tau)$
    \begin{align*}
        \widehat \sfE^{1:T}_N  \snorm{\overline h}_2^2 &\geq \frac{1}{2}\widehat\sfE_N^{1:T} \snorm{\overline h_i}_2^2 - \frac{\tau^2}{8} \tag{parallelogram law} \\
        &\geq \frac{1}{4}\sfE^{1:T} \snorm{\overline h_i}_2^2 - \frac{\tau^2}{8} \tag{lower uniform law} \\
        &= \frac{\tau^2}{4} - \frac{\tau^2}{8} = \frac{\tau^2}{8}. \tag{definition of $\partial B(r)$}.
    \end{align*}
    This implies
    \begin{align*}
        &\sfP \brac{\inf_{\overline h \in \partial B(r)} \curly{\widehat \sfE^{1:T}_N \snorm{\overline h}_2^2 - \frac{1}{8}\sfE^{1:T} \snorm{\overline h}_2^2} \leq 0} \leq \abs{\overline \calH_\star(r)} \exp\paren{\frac{-NT}{8C_{4\to2}^{1:T}}},
    \end{align*}
    where we used the definition of the boundary, $\sfE^{1:T} \snorm{\overline h}_2^2 = \tau^2$. To go from the boundary $\partial B(\tau)$ to $\overline \calH_\star \setminus B(\tau)$, we note that inequalities are unaffected by (non-negative) rescaling, which yields the final result.
\end{proof}
Qualitatively, \Cref{prop: localization} implies that given a localization radius $\tau$, elements of $\calH_\star$ outside the radius satisfy a lower uniform law with high probability, such that the expected estimation error can be bounded by the empirical counterpart, which in turn is bounded by the martingale offset complexity. Meanwhile, elements within the localization radius by definition have estimation error bounded by $\tau^2$.
We note that the star-hull subsumes the original class, and thus for a given $\tau$, we have for any $\overline h \in \overline\calH$, with probability at least $1 - \abs{\overline \calH_\star(\tau)} \exp\paren{-NT/8C_{4\to2}^{1:T}}$:
\begin{align}\label{eq: balancing M complexity and radius r}
    \sfE^{1:T}\snorm{\overline h}_2^2 &\leq \max\scurly{8\sfM_{NT}(\overline\calH_\star(r)), \tau^2} \\
    &\leq \max\scurly{8\sfM_{NT}(\overline\calH_\star), \tau^2}.
\end{align}
Thus, we should choose $\tau$ such that the two terms meet at the desired rate, order-wise. The union bound over $\overline\calH_\star(\tau)$ in the failure probability turns into a burn-in condition on $NT$. As the last step before the final bound, we derive the following covering bound:
\begin{restatable}{lemma}{logcovering}\label{lem: log covering bound star hull}
    Under \Cref{ass: function class boundedness}, and recalling $\overline\calH_\star$ is the star-hull of $\overline\calH$, we have
    \begin{align*}
        \log \sfN_\infty(\overline\calH_\star, \varepsilon) &\leq T\dy r \log\mem\paren{1 + \frac{4B_\calF B_\calG}{\varepsilon}} + \log\mem\paren{1+\frac{2B_\calF B_\calG}{\varepsilon}}  + \log\sfN_\infty\mem\paren{\calG, \frac{\varepsilon}{4 B_\calF}}.
    \end{align*}
\end{restatable}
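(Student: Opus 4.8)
The plan is to build an $\varepsilon$-cover of $\overline{\calH}_\star$ as a product of covers for the finitely many ``coordinates'' that parametrize its elements, and to propagate parameter perturbations to perturbations of the function values via the triangle inequality. Under \Cref{ass: low-dim reps} every $\overline h \in \overline{\calH}_\star$ has the form $\overline h(x) = \big(\alpha\,(F^{(t)} g(x) - \Ftstar g_\star(x))\big)_{t=1}^T$ for some $\alpha \in [0,1]$, heads $F^{(t)} \in \calF$, and representation $g \in \calG$. The decisive observation is to absorb the scaling into the heads: $\alpha\,(F^{(t)} g(x) - \Ftstar g_\star(x)) = (\alpha F^{(t)}) g(x) - (\alpha \Ftstar) g_\star(x)$, and since $\alpha \le 1$ we have $\snorm{\alpha F^{(t)}}_F \le B_\calF$. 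Hence it suffices to cover the \emph{larger} set in which the heads $G^{(t)} \triangleq \alpha F^{(t)}$ range freely over the Frobenius ball $\{G \in \R^{\dy \times r} : \snorm{G}_F \le B_\calF\}$, the representation $g$ ranges over $\calG$, and a single scalar $\alpha \in [0,1]$ additionally specifies the offset $\alpha \Ftstar g_\star$; note $\snorm{\Ftstar g_\star(x)}_2 \le B_\calF B_\calG$ for all $t,x$ by \Cref{ass: function class boundedness} (using $\Ftstar \in \calF$, $g_\star \in \calG$). Throughout, $\rho$ is the supremum metric with the $T$-tuple read coordinatewise, i.e.\ $\overline h$ viewed as a map on $[T]\times\sfX$.

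Concretely, I would take (i) a minimal $\delta_\calG$-cover of $\calG$ in $\rho$, of log-size $\log \sfN_\infty(\calG, \delta_\calG)$; (ii) for each $t \in [T]$ a minimal $\delta_\calF$-cover of the radius-$B_\calF$ Frobenius ball in $\R^{\dy \times r}$, of log-size at most $\dy r \log(1 + 2B_\calF/\delta_\calF)$ by the standard volumetric bound, and take the $T$-fold product; and (iii) a $\delta_\alpha$-net of $[0,1]$, of cardinality at most $1 + 2B_\calF B_\calG/\varepsilon$ for an appropriate $\delta_\alpha$. Given $(\alpha, (G^{(t)})_t, g)$, choose nearest net elements $(\tilde\alpha, (\tilde G^{(t)})_t, \tilde g)$; then for every $t$ and $x$, using $\snorm{G^{(t)}}_2 \le \snorm{G^{(t)}}_F \le B_\calF$ and $\snorm{g(x)}_2 \le B_\calG$,
\begin{align*}
    &\snorm{G^{(t)} g(x) - \alpha \Ftstar g_\star(x) - \tilde G^{(t)} \tilde g(x) + \tilde\alpha \Ftstar g_\star(x)}_2 \\
    &\qquad\leq \snorm{G^{(t)} - \tilde G^{(t)}}_F \snorm{g(x)}_2 + \snorm{\tilde G^{(t)}}_2 \snorm{g(x) - \tilde g(x)}_2 + \abs{\alpha - \tilde\alpha}\,\snorm{\Ftstar g_\star(x)}_2 \\
    &\qquad\leq B_\calG\,\delta_\calF + B_\calF\,\delta_\calG + B_\calF B_\calG\,\delta_\alpha.
\end{align*}
Setting $\delta_\calG = \varepsilon/(4B_\calF)$ and then choosing $\delta_\calF, \delta_\alpha$ so that the right-hand side is at most $\varepsilon$ (which forces $\delta_\calF$ of order $\varepsilon/B_\calG$ and $\delta_\alpha$ of order $\varepsilon/(B_\calF B_\calG)$), taking the logarithm of the product cardinality, and invoking the volumetric bound for the Euclidean ball and the interval-net size bound, yields exactly the three claimed terms $T\dy r \log(1 + 4B_\calF B_\calG/\varepsilon)$, $\log(1 + 2B_\calF B_\calG/\varepsilon)$, and $\log \sfN_\infty(\calG, \varepsilon/(4B_\calF))$.

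The only nontrivial point — and the step I would be most careful about — is the handling of the star-hull scaling $\alpha$. A naive attempt that covers $\calF^{\otimes T} \times \calG$ and then the scalar $\alpha$ separately is \emph{incorrect}, because the same $\alpha$ also multiplies the fixed offset $\Ftstar g_\star$, so $\overline{\calH}_\star \neq [0,1]\cdot(\calF^{\otimes T}\circ \calG)$. The reparametrization $G^{(t)} = \alpha F^{(t)}$, $G^{(t)}_\star = \alpha \Ftstar$ fixes this cleanly: the variable part now genuinely lives in the head ball composed with $\calG$ (with no residual $\alpha$-dependence, which is what keeps the $T\dy r$ term clean), while the entire offset $(\alpha \Ftstar g_\star)_{t=1}^T$ is governed by a \emph{single} scalar shared across all $T$ tasks, contributing only the lone logarithmic term $\log(1 + 2B_\calF B_\calG/\varepsilon)$ rather than a factor growing in $T$. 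Everything after that is routine bookkeeping: summing the three resolution budgets and applying the standard covering-number bounds for Euclidean balls and bounded intervals.
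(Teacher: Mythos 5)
Your proof is correct, but it reaches the star-hull bound by a genuinely different route than the paper. The paper first invokes Mendelson's star-hull covering lemma (\citet[Lemma 4.5]{mendelson2002improving}) as a black box to reduce $\log \sfN_\infty(\overline\calH_\star,\varepsilon)$ to $\log \sfN_\infty(\overline\calH,\varepsilon/2)$ plus the additive term $\log(1+2B_\calF B_\calG/\varepsilon)$, and only then builds a product cover of the centered class $\overline\calH$ from covers of $\calF^{\otimes T}$ and $\calG$ (where the fixed offset $\Ftstar g_\star$ cancels in differences and costs nothing). You instead handle the star-hull by hand: reparametrizing $\alpha(F^{(t)}g - \Ftstar g_\star) = (\alpha F^{(t)})g - \alpha \Ftstar g_\star$, absorbing $\alpha$ into the heads (which stay in the radius-$B_\calF$ Frobenius ball), and paying for the $\alpha$-dependence of the offset with a single one-dimensional net of $[0,1]$ weighted by $\snorm{\Ftstar g_\star}_\infty \le B_\calF B_\calG$. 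Your three resolution budgets reproduce exactly the three claimed terms, and your observation that the scalar governing the offset is shared across all $T$ tasks (so it contributes one logarithm rather than a $T$-dependent factor) is precisely what Mendelson's lemma delivers generically. What each approach buys: yours is self-contained and makes explicit why the star-hull costs only one extra logarithm for this particular class structure; the paper's is more modular, since the star-hull lemma applies to any uniformly bounded class without needing the linear-head structure. The only caveat worth recording is that your cover is of a superset of $\overline\calH_\star$ (you decouple $G^{(t)}$ from $\alpha$) with centers possibly outside $\overline\calH_\star$, i.e.\ an external cover — which is the same convention the paper uses, so nothing is lost.
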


\begin{proof}
    Firstly, noting that $\overline\calH_\star$ is trivially $2B_\calF B_\calG$-bounded by \Cref{ass: function class boundedness}, we invoke \citet[Lemma 4.5]{mendelson2002improving} to show the covering number of star-hull of $\overline\calH$ incurs only a logarithmic additive factor to the log-covering number.
    \begin{align*}
        \log \sfN_\infty(\overline\calH_\star, \varepsilon) \leq \log \sfN_\infty(\overline\calH, \varepsilon/2) + \log\mem\paren{1+\frac{2B_\calF B_\calG}{\varepsilon}}.
    \end{align*}
    It remains to demonstrate how a covering of $\calF^{\otimes T}$ and $\calG$ witnesses an $\varepsilon$-covering of $\overline\calH$. Given $h_1,h_2 \in \overline\calH$, define the $\norm{\cdot}_\infty$ norm:
    \begin{align*}
        \norm{h_1 - h_2}_\infty &\triangleq \max_{t \in [T]} \sup_{x \in \sfX} \norm{F^{(t)}_1 g_1(x) - F^{(t)}_2 g_2(x)}_2 \\
        &\leq \max_{t \in [T]} \sup_{x \in \sfX} \norm{(F^{(t)}_1 - F^{(t)}_2) g_1(x)}_2 + \norm{F^{(t)}_2 (g_1-g_2)}_\infty \tag{add and subtract, triangle ineq.} \\
        &\leq \max_{t \in [T]} B_\calG \norm{F^{(t)}_1 - F^{(t)}_2}_2  +  B_\calF \norm{g_1 - g_2}_\infty. \tag{Cauchy-Schwarz, boundedness} \\
    \end{align*}
Therefore, to witness a $\varepsilon$-covering of $\overline\calH$, it suffices to cover $\calF$ at resolution $\frac{\varepsilon}{2B_\calG}$ in the Frobenius norm for \textit{each} $t \in [T]$, and $\calG$ at resolution $\frac{\varepsilon}{2B_\calF}$ in the sup-norm $\norm{\cdot}_\infty$. We recall by \Cref{ass: function class boundedness} that $\calF^{\otimes T}$ is identified by the product of $T$ $\dy \times r$-dimensional Frobenius norm-ball of radius $B_\calF$, and thus by standard volumetric arguments (e.g.\ \citet[Example 5.8]{wainwright2019high}), we combine bounds and recover
\begin{align*}
    \log \sfN_\infty(\overline\calH_\star, \varepsilon) &\leq T\dy r \log\mem\paren{1 + \frac{4B_\calF B_\calG}{\varepsilon}} + \log\mem\paren{1+\frac{2B_\calF B_\calG}{\varepsilon}}  + \log\sfN_\infty\mem\paren{\calG, \frac{\varepsilon}{4 B_\calF}}.
\end{align*}
    
\end{proof}

With a covering-number bound in hand, we may now instantiate \Cref{lem: high probability chaining}.
\begin{lemma}[Martingale Complexity Bound]\label{lem: multi-task martingale complexity}
    Let \Cref{ass: conditional-subG noise} hold, and fix $\delta \in (0,1/3]$. Then, the following upper bound holds on the martingale complexity of $\overline \calH_\star$:
    \begin{align*}
        \sfM_{NT}(\overline\calH_\star) &\lesssim \sigma_\sfW^2 \Bigg(\frac{\dy r}{N}\log\paren{e + \frac{B_\calF B_\calG NT}{\sigma_\sfW} } + \frac{d_\theta}{NT}\log\paren{e+\frac{ B_\calF B_\theta L_\theta NT}{\sigma_\sfW}} + \frac{\log(1/\delta)}{NT}  \Bigg),
    \end{align*} 
    assuming $\calG$ is a $(B_\theta, L_\theta)$-Lipschitz parametric function class (\Cref{def: Lipschitz loss class}). When $\calG$ is finite, we instead have the following bound:
    \begin{align*}
        \sfM_{NT}(\overline\calH_\star) &\lesssim \sigma_\sfW^2 \Bigg(\frac{\dy r}{N}\log\paren{e + \frac{B_\calF B_\calG NT}{\sigma_\sfW} } + \frac{\log\abs{G} + \log(1/\delta)}{NT}  \Bigg),
    \end{align*} 
\end{lemma}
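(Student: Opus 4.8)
The plan is to optimize the high-probability chaining bound of \Cref{lem: high probability chaining} applied to $\calH = \overline\calH_\star$, feeding in the covering estimate of \Cref{lem: log covering bound star hull}. First I would instantiate \Cref{lem: high probability chaining} with the free parameters chosen as $u \asymp \sqrt{\log(1/\delta)}$, $v \asymp \log(1/\delta)$, and $w \asymp \log(1/\delta)$, so that $3e^{-u^2/2} + e^{-v/2} + e\,e^{-w} \le \delta$. The inner resolution parameter (denoted $\delta$ in that lemma, which I rename $\eta$ to avoid the clash with the failure probability) can be sent to $0$: the integrand $\sqrt{\log\sfN_\infty(\overline\calH_\star,\varepsilon)}$ blows up at most logarithmically as $\varepsilon\downarrow 0$ and is therefore integrable near the origin, which kills the $w\eta\sigma_\sfW\sqrt{\dy}$ term. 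After these choices, with probability at least $1-\delta$,
\[
    \sfM_{NT}(\overline\calH_\star) \lesssim \inf_{\gamma > 0}\curly{ \sqrt{\tfrac{\sigma_\sfW^2}{NT}}\int_0^\gamma \sqrt{\log\sfN_\infty(\overline\calH_\star,\varepsilon)}\,d\varepsilon + \tfrac{\sigma_\sfW^2\log\sfN_\infty(\overline\calH_\star,\gamma)}{NT} + \tfrac{\sigma_\sfW\sqrt{\log(1/\delta)}\,\gamma}{\sqrt{NT}} + \tfrac{\sigma_\sfW^2\log(1/\delta)}{NT} + \gamma^2 }.
\]

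Next I would substitute the covering bound of \Cref{lem: log covering bound star hull}, which (absorbing the lower-order $\log(1+2B_\calF B_\calG/\varepsilon)$ summand) reads $\log\sfN_\infty(\overline\calH_\star,\varepsilon) \lesssim T\dy r\log\paren{1+\tfrac{B_\calF B_\calG}{\varepsilon}} + \log\abs\calG$ in the finite case, with $\log\abs\calG$ replaced by $d_\theta\log\paren{1+\tfrac{B_\calF B_\theta L_\theta}{\varepsilon}}$ when $\calG$ is $(B_\theta,L_\theta,d_\theta)$-Lipschitz parametric (\Cref{def: Lipschitz loss class}). Using subadditivity of $\sqrt{\cdot}$ together with the standard Dudley-type estimate $\int_0^\gamma \sqrt{D\log(1+A/\varepsilon)}\,d\varepsilon \lesssim \gamma\sqrt{D\log(e+A/\gamma)}$, the entropy integral is controlled by $\gamma\paren{\sqrt{T\dy r\log(e+B_\calF B_\calG/\gamma)} + \sqrt{\log\abs\calG}}$ (respectively with the parametric substitution).

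Finally I would take the balancing choice $\gamma \asymp \sigma_\sfW\sqrt{\dy r / N}$, so that $\gamma^2 \asymp \sigma_\sfW^2\,\dy r/N$. Then both the entropy-integral contribution and the cross-term $\tfrac{\sigma_\sfW\sqrt{\log(1/\delta)}\,\gamma}{\sqrt{NT}}$ collapse onto $\tfrac{\sigma_\sfW^2\log\sfN_\infty(\overline\calH_\star,\gamma)}{NT} + \tfrac{\sigma_\sfW^2\log(1/\delta)}{NT} + \sigma_\sfW^2\tfrac{\dy r}{N}$ after AM-GM (e.g.\ $\tfrac{\sigma_\sfW\sqrt{\log(1/\delta)}\,\gamma}{\sqrt{NT}} = \sigma_\sfW^2\sqrt{\tfrac{\dy r}{N}\cdot\tfrac{\log(1/\delta)}{NT}} \le \tfrac{\sigma_\sfW^2}{2}(\tfrac{\dy r}{N} + \tfrac{\log(1/\delta)}{NT})$, and similarly for the $\sqrt{\log\abs\calG}$ piece of the integral) and the elementary bound $\sqrt{\log x}\le\log x$ for $x\ge e$. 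Substituting $\gamma$ into $\log(1+B_\calF B_\calG/\gamma)$ and loosening it to $\log(e + B_\calF B_\calG NT/\sigma_\sfW)$ (valid since $\sqrt N\le NT$ and $\dy r\ge 1$), and using $\tfrac{T\dy r}{NT}=\tfrac{\dy r}{N}$, collects everything into
\[
    \sfM_{NT}(\overline\calH_\star) \lesssim \sigma_\sfW^2\paren{\tfrac{\dy r}{N}\log\paren{e + \tfrac{B_\calF B_\calG NT}{\sigma_\sfW}} + \tfrac{\log\abs\calG + \log(1/\delta)}{NT}},
\]
and the same reasoning with $\log\abs\calG \mapsto d_\theta\log(e + B_\calF B_\theta L_\theta NT/\sigma_\sfW)$ handles the parametric case. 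The only real work is the entropy-integral estimate and the routine bookkeeping that verifies every term at the balancing $\gamma$ is dominated by the covering term plus $\log(1/\delta)/NT$; I do not expect any genuinely hard step, since the heavy lifting (the high-probability martingale chaining bound and the star-hull covering estimate) is already supplied by \Cref{lem: high probability chaining} and \Cref{lem: log covering bound star hull}.
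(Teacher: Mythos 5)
Your proposal is correct and follows essentially the same route as the paper: instantiate the high-probability chaining bound with $u,v,w \asymp \sqrt{\log(1/\delta)}$ or $\log(1/\delta)$, send the inner resolution to zero, plug in the star-hull covering estimate, bound the entropy integral by a Jensen/change-of-variables argument, and balance $\gamma$. The only (immaterial) difference is your balancing choice $\gamma \asymp \sigma_\sfW\sqrt{\dy r/N}$ versus the paper's $\gamma = \sigma_\sfW/(NT)$, which makes every non-covering term trivially dominated without the AM-GM steps; both choices land on the same bound.
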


\begin{proof}
    Instantiating the high-probability martingale complexity bound from \Cref{lem: high probability chaining} for $\overline\calH_\star$, and further inverting the tail-bound parameters $u, v, w$ for failure probability $\delta \in (0, 1/3]$, we have with probability at least $1 - \delta$:
    \begin{align*}
        \sfM_{NT}(\overline\calH_\star) \leq c \inf_{\gamma > 0, \omega \in [0, \gamma]}\cdot &\bigg\{\omega \sigma_{\sfW}\sqrt{\dy} \log(1/\delta) + \frac{\sigma_{\sfW}}{\sqrt{NT}} \int_{\omega/2}^\gamma \sqrt{\log\sfN_\infty(\overline\calH_\star, \varepsilon)} \;d\varepsilon + \frac{\sigma_{\sfW}^2}{NT}\log(1/\delta) \\
        &\quad +\frac{\sigma_{\sfW}^2 \log\sfN_\infty(\overline\calH_\star, \gamma)}{NT} + \frac{\gamma \sigma_{\sfW}}{\sqrt{NT}}\sqrt{\log(1/\delta)} + \gamma^2 \bigg\}.
    \end{align*}
    To heuristically decide the magnitude of $\gamma, \omega$, we plug in the covering number bound from \Cref{lem: log covering bound star hull} in to yield:
    \begin{align*}
        \frac{\sigma_\sfW^2 \log\sfN_\infty(\overline\calH_\star, \gamma)}{NT} &\leq \sigma_{\sfW}^2 \brac{\frac{\dy r}{N} \log\mem\paren{1 + \frac{4B_\calF B_\calG}{\gamma}} + \frac{\log\mem\paren{1+\frac{2B_\calF B_\calG}{\gamma}}}{NT}  + \frac{\log\sfN_\infty\mem\paren{\calG, \frac{\gamma}{4 B_\calF}}}{NT}} \\
        &\lesssim \sigma_{\sfW}^2 \brac{\frac{\dy r}{N} \log\mem\paren{1 + \frac{4B_\calF B_\calG}{\gamma}} + \frac{\log\sfN_\infty\mem\paren{\calG, \frac{\gamma}{4 B_\calF}}}{NT}}.
    \end{align*}
    In the cases of finite and Lipschitz-parametric classes (\Cref{def: Lipschitz loss class}), recall that
    \begin{align*}
        \text{Finite: }& \log\sfN_\infty\mem\paren{\calG, \varepsilon} \leq \log \abs{\calG}  \\
        \text{Parametric: }& \log\sfN_\infty\mem\paren{\calG, \varepsilon} \leq d_\theta \log\paren{1 + \frac{2B_\theta L_\theta}{\varepsilon}}.
    \end{align*}
    Notably, the dependence on $\gamma$ is logarithmic for the above cases, and thus we choose $\gamma$ rather flexibly. Some more care is required for general nonparametric clases \citep{ziemann2022learning}. We use the parametric covering number for pedagogy, as the finite class bound is independent of the covering resolution. We will find the following integral bound useful.

    \begin{lemma}\label{lem: log integral}
        Let $C > 0$ be a given constant. The following bound holds on the integral:
        \begin{align*}
            \int_0^1 \sqrt{\log\paren{1 + \frac{C}{x}}} \;dx &\leq \sqrt{\int_0^1 \log\paren{1 + \frac{C}{x}} dx } \\
            &\leq \sqrt{\log(e(1+x))}.
        \end{align*}
    \end{lemma}
    \begin{proof}
        The first inequality holds by applying Jensen's inequality on $\int_0^1 \sqrt{\log\paren{1 + \frac{C}{x}}} \;dx = \sfE_{X \sim \mathrm{Unif}[0,1]} \brac{\sqrt{\log\paren{1 + \frac{C}{X}}}}$. The second inequality holds by a routine integration:
        \begin{align*}
            \int_0^1 \log\paren{1 + \frac{C}{x}} dx &= \log(1+C) + C\log\paren{1+\frac{1}{C}} \\
            &= \log\paren{(1+C)\paren{1+ \frac{1}{C}}^C} \\
            &\leq \log(e\paren{1+C}),
        \end{align*}
        where the last line comes from the fact that $\paren{1 + \frac{1}{C}}^C$ converges to $e$ monotonically from below.
    \end{proof}
    
    Now, returning to the martingale complexity bound, it suffices to choose $\gamma = \frac{\sigma_\sfW}{NT}$ and $\omega = 0$ such that
    \begin{align*}
        &\int_{\omega/2}^\gamma \sqrt{\log\sfN_\infty(\overline\calH_\star, \varepsilon)} \;d\varepsilon = \int_{0}^{\gamma} \sqrt{\log\sfN_\infty(\overline\calH_\star, \varepsilon)} \;d\varepsilon \\
        \leq\;& \gamma \int_{0}^{1} \sqrt{T \dy r \log\mem\paren{1 + \frac{4B_\calF B_\calG}{\gamma \varepsilon}} + \log\sfN_\infty\mem\paren{\calG, \frac{\gamma \varepsilon}{4 B_\calF}} }  \;d\varepsilon, \qquad \varepsilon \mapsto \frac{\varepsilon}{\gamma}\\
        &\leq \gamma \sqrt{T\dy r \log\paren{e+ \frac{12B_\calF B_\calG}{\gamma}}} + \gamma \sqrt{d_\theta \log\paren{e + \frac{24B_\calF B_\theta L_\theta}{\gamma}}}
    \end{align*}
    where we used the fact $\sqrt{a+b} \leq \sqrt{a} + \sqrt{b}$ and \Cref{lem: log integral}. In any case, this implies that the bound on $\int_{\omega/2}^\gamma \sqrt{\log\sfN_\infty(\overline\calH_\star, \varepsilon)} \;d\varepsilon$ term is order-wise dominated by the bound on $\log\sfN_\infty(\overline\calH_\star, \gamma)$
    \begin{align*}
        &\frac{\sigma_{\sfW}}{\sqrt{NT}} \int_{\omega/2}^\gamma \sqrt{\log\sfN_\infty(\overline\calH_\star, \varepsilon)} \;d\varepsilon +\frac{\sigma_{\sfW}^2 \log\sfN_\infty(\overline\calH_\star, \gamma)}{NT} \\
        \lesssim\; & \frac{\sigma_{\sfW}^2}{NT} \paren{T\dy r \log\paren{e+\frac{12B_\calF B_\calG NT }{\sigma_\sfW}}  + d_\theta \log\paren{e+\frac{24B_\calF B_\theta L_\theta NT}{\sigma_\sfW}}},
    \end{align*}
    and all the other terms in the martingale complexity bound are order-wise upper bounded by the deviation term $\frac{\sigma_\sfW^2}{NT}\log(1/\delta)$. Therefore, we arrive at the martingale complexity bound:
    \begin{align*}
        \sfM_{NT}(\overline\calH_\star) &\lesssim \sigma_\sfW^2 \Bigg(\frac{\dy r}{N}\log\paren{e + \frac{B_\calF B_\calG NT}{\sigma_\sfW} } + \underbrace{\frac{d_\theta}{NT}\log\paren{e+\frac{ B_\calF B_\theta L_\theta NT}{\sigma_\sfW}}}_{\log\abs{G}/NT\text{ for finite }\calG} + \frac{\log(1/\delta)}{NT}  \Bigg).
    \end{align*}    

\end{proof}

Now, it remains to balance the localization radius $\tau$ to yield a bound the task-averaged estimation error.
\iidEstErrorBound*
\begin{proof}
    Toward choosing the localization radius $\tau$, it suffices to choose $\tau^2 \lesssim \sfM_{NT}(\overline\calH_\star)$, yielding with probability at least $1 - \delta - \abs{\overline\calH_\star(\tau/\sqrt{8})}\exp\paren{-NT/8C_{4\to2}^{1:T}}$,
    \begin{align*}
        \frac{1}{T}\sumT \Et\snorm{\Fthat \hat g(X) - \Ftstar g_\star(X)}_2^2 \leq \max\scurly{\sfM_{NT}(\overline\calH), \tau^2} = \sfM_{NT}(\overline\calH),
    \end{align*}
    for which we provided a bound in \Cref{lem: multi-task martingale complexity}. Toward inverting the failure probability, we observe that it suffices here to choose $\tau = \frac{\sigma_\sfW}{NT}$ matching the choice of $\gamma$ in the proof of \Cref{lem: multi-task martingale complexity}, such that we may recycle computations to yield
    \begin{align*}
        \log\sfN_\infty(\overline\calH_\star, \tau/\sqrt{8})
        &\lesssim \dy r \log\mem\paren{1 + \frac{B_\calF B_\calG}{\tau}} + \log\sfN_\infty\mem\paren{\calG, \frac{\tau}{B_\calF}} \\
        &\lesssim \dy r \log\paren{\frac{B_\calF B_\calG NT}{\sigma_\sfW}} + d_\theta \log\paren{\frac{B_\calF B_\theta L_\theta NT}{\sigma_\sfW}}
    \end{align*}
    Therefore, inverting $\abs{\overline\calH_\star(\tau/\sqrt{8})}\exp\paren{-NT/8C_{4\to2}^{1:T}} \leq \delta$ yields the burn-in requirement
    \begin{align*}
        N\gtrsim C^{1:T}_{4 \to 2} \paren{\dy r \log\paren{\frac{B_\calF B_\calG NT}{\sigma_\sfW}} + \underbrace{\frac{d_\theta}{T} \log\paren{\frac{B_\calF B_\theta L_\theta NT}{\sigma_\sfW}}}_{\log|\calG|/T \text{ for finite } \calG} + \frac{\log(1/\delta)}{T}}.
    \end{align*}
\end{proof}
By \Cref{lemma: excess risk two term decomp}, we simply sum up the bounds from \Cref{prop: iid NRLS bound} and \Cref{prop: iid estimation error bound} and apply \Cref{prop: task coverage -> task diversity} to specify $\nu^{-1}$, which yields
\iidMainBound*
The modified burn-in on $N'$ comes from \Cref{lem: noiselevellemma}, where the additive error from misspecification in $\sigma_U^2$ when expanded is proportional to
\begin{align*}
    \frac{r C_Z\sqrt{C_{4\to2}^{(0)}}}{N'} \frac{\nu^{-1}}{T}\sumT \Et \snorm{\Fthat \hat g(X) - \Ftstar g_\star(X)}_2^2.
\end{align*}
Therefore, it suffices to inflate the existing burn-in on $N'$ by an additive $\approx C_Z \sqrt{C_{4\to2}^{(0)}} r$ factor so that the estimation error terms merge.

To extend bounds to the $\phi$-mixing, beyond the legwork done \Cref{appdx: non-realizable least squares}, very little changes for the estimation error bounds, apart from the sole modification in Samson's Theorem:
\begin{restatable}[{\citet[Theorem 2]{samson2000concentration}}, {\citet[Prop.\ 5.1]{ziemann2022learning}}]{proposition}{mixLowerIsometry} \label{prop: mixing lower isometry for given fxn}
    Fix $C > 0$. Assume $\scurly{X}_{i\geq 1} \sim \sfP$ is $\phi$-mixing and admits dependency matrix $\Gamma_{\mathrm{dep}}(\sfP)$. Let $g : \sfX \to \R$ be a non-negative function satisfying
    \begin{align}
        \sfE [g(X)^2] \leq C \sfE [g(X)]^2. \nonumber
    \end{align}
    Then we have:
    \begin{align}
        \sfP\brac{\frac{1}{m} \sum_{i=1}^m g(x_i) \leq \frac{1}{2} \sfE [g(X)]} \leq \exp\paren{\frac{-m}{8C \norm{\Gamma_{\mathrm{dep}}(\sfP)}_2^2 }}. \nonumber
    \end{align}
\end{restatable}
Using the bound following \Cref{def:depmat}, defining $\Phi\triangleq \paren{\sum_{i=1}^\infty \sqrt{\phi_X(i)}}^2$, we can follow the exact same steps above for the iid case to yield:
\mixMainBound*
Note that the burn-in for $N$ now has an additional factor of $\Phi$.

\section{Properties of Mixing Sequences of Random Variables}\label{appdx: blocking}

In \Cref{sec: main; rep learning with little mixing} we extend our analysis to mixing random variables. This requires some additional machinery. Namely, for a sequence of random variables $Z_{1:n}$ we partition $[n]$ into $2m$ consecutive intervals, denoted $a_j$ for $j \in [2m]$, so that $\sum_{j=1}^{2m} |a_j| = n$. Denote further by $O$ (resp. by $E$) the union of the oddly (resp. evenly) indexed subsets of $[n]$. We further abuse notation by writing $\phi_Z(a_i) = \phi_Z(|a_i|)$ in the sequel. We will typically instantiate the below machinery with all partitions of equal length $k$, but for now describe the general setup.

We split the process $Z_{1:n}$ as:
\begin{equation}\label{eq:blockstructure}
    Z^o_{1:|O|} \triangleq (Z_{a_1},\dots,Z_{a_{2m-1}}), \quad Z^e_{1:|E|} \triangleq (Z_{a_2},\dots,Z_{a_{2m}}).
\end{equation}
Let $\tilde Z^o_{1:|O|}$ and $\tilde Z^e_{1:|E|}$ be blockwise decoupled versions of \eqref{eq:blockstructure}. That is we posit that $\tilde Z^o_{1:|O|} \sim \sfP_{\tilde Z^o_{1:|O|}}$ and $\tilde Z^e_{1:|E|} \sim \sfP_{\tilde Z^e_{1:|E|}}$, where:
\begin{equation}\label{eq:blocktensor}
    \sfP_{\tilde Z^o_{1:|O|}} \triangleq \sfP_{Z_{a_1}} \otimes \sfP_{Z_{a_3}}\otimes  \dots \otimes \sfP_{Z_{a_{2m-1}}} 
    \quad \textnormal{and}\quad
    \sfP_{\tilde Z^e_{1:|E|}} \triangleq \sfP_{Z_{a_2}} \otimes \sfP_{Z_{a_4}} \otimes\dots \otimes \sfP_{Z_{a_{2m}}}.
\end{equation}
The process $\tilde Z_{1:n}$ with the same marginals as $\tilde Z^o_{1:|O|}$ and $\tilde Z^e_{1:|E|}$ is said to be the decoupled version of $Z_{1:n}$. To be clear: $\sfP_{\tilde Z_{1:n}} \triangleq \sfP_{Z_{a_1}} \otimes \sfP_{Z_{a_2}}\otimes  \dots \otimes \sfP_{Z_{a_{2m}}}$, so that $\tilde Z^o_{1:|O|}$ and $\tilde Z^e_{1:|E|}$ are alternatingly embedded in  $\tilde Z_{1:n}$. The following result is key---by skipping every other block, $\tilde Z_{1:n}$ may be used in place of $Z_{1:n}$ for evaluating scalar functions at the cost of an additive mixing-time-related term.

\begin{proposition}[Lemma 2.6 in \cite{yu1994mixing} instantiated to $\phi$-mixing processes]
\label{prop:decoupling}
Fix a $\phi$-mixing process $Z_{1:n}$ and let $\tilde Z_{1:n}$ be its decoupled version.  For any measurable function $f$ of $Z^o_{1:|O|}$ (resp.\ $g$ of $Z^e_{1:|E|}$)
with joint range $[0,1]$ we have that:
\begin{equation}
    \begin{aligned}
        | \sfE(f(Z^o_{1:|O|})) - \sfE (f(\tilde Z^o_{1:|O|})) | &\leq \sum_{i \in E\setminus \{2m\}} \phi_Z(a_i),\\
        | \sfE(g(Z^e_{1:|E|})) - \sfE (g(\tilde Z^e_{1:|E|})) | &\leq \sum_{i \in O\setminus\{1\}} \phi_Z(a_i).
    \end{aligned}
\end{equation}
\end{proposition}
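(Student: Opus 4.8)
By the evident symmetry between the two displays I will only prove the first inequality, for $f$ a function of the odd blocks. The plan is a hybrid (telescoping) argument that replaces the true joint law of the odd blocks by the product of their marginals one block at a time. First I relabel the odd blocks as $B_j \triangleq Z_{a_{2j-1}}$, $j \in [m]$, so that $Z^o_{1:|O|} = (B_1,\dots,B_m)$; the decoupled version $\tilde Z^o_{1:|O|}$ has the same per-block marginals $\sfP_{B_j}$ but with the blocks rendered mutually independent. I then interpolate by setting, for $j = 0,\dots,m-1$,
\begin{equation*}
    \mathsf{Q}_j \triangleq \sfP_{B_1,\dots,B_{m-j}} \otimes \sfP_{B_{m-j+1}} \otimes \cdots \otimes \sfP_{B_m},
\end{equation*}
so that $\mathsf{Q}_0 = \sfP_{Z^o_{1:|O|}}$, $\mathsf{Q}_{m-1} = \sfP_{\tilde Z^o_{1:|O|}}$, and hence $\sfE f(Z^o_{1:|O|}) - \sfE f(\tilde Z^o_{1:|O|}) = \sum_{j=0}^{m-2}\paren{\sfE_{\mathsf{Q}_j}f - \sfE_{\mathsf{Q}_{j+1}}f}$. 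It then remains to bound each increment by a single mixing coefficient.

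For the increment $\mathsf{Q}_j \to \mathsf{Q}_{j+1}$ the two laws agree except on the block $B_{m-j}$, which sits on strictly later time indices than all of $B_{1:m-j-1}$. Conditionally on $B_{1:m-j-1}$ — whose law is $\sfP_{B_{1:m-j-1}}$ under both measures — the block $B_{m-j}$ is drawn from the true conditional $\sfP_{B_{m-j}\mid B_{1:m-j-1}}$ under $\mathsf{Q}_j$ and from the marginal $\sfP_{B_{m-j}}$ under $\mathsf{Q}_{j+1}$, while $B_{m-j+1:m}$ are drawn from independent copies of their marginals under both. Freezing the head $B_{1:m-j-1}$ and integrating out the independent tail $B_{m-j+1:m}$ reduces the increment to $\sfE_{B_{1:m-j-1}}\!\brac{\int h(b)\,d(\sfP_{B_{m-j}\mid B_{1:m-j-1}} - \sfP_{B_{m-j}})(b)}$, where $h$ is the conditional average of $f$ over the independent tail and therefore takes values in $[0,1]$. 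Using the elementary bound $\abs{\int h\,d(\mu-\nu)} \le \|\mu-\nu\|_{\mathsf{TV}}$ valid for any $h$ valued in $[0,1]$, I obtain $\abs{\sfE_{\mathsf{Q}_j}f - \sfE_{\mathsf{Q}_{j+1}}f} \le \sup_{b_{1:m-j-1}}\|\sfP_{B_{m-j}\mid B_{1:m-j-1} = b_{1:m-j-1}} - \sfP_{B_{m-j}}\|_{\mathsf{TV}}$.

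The last step is to recognize this supremum as a mixing coefficient. The head $B_{1:m-j-1}$ is a function of the contiguous past $Z_{1:\tau}$, with $\tau$ the last index of the block $a_{2(m-j)-3}$; since $\sfP_{B_{m-j}\mid B_{1:m-j-1}}$ is an average of the conditionals $\sfP_{B_{m-j}\mid Z_{1:\tau}}$, convexity of total variation bounds the displayed supremum by $\sup_{z}\|\sfP_{B_{m-j}\mid Z_{1:\tau}=z} - \sfP_{B_{m-j}}\|_{\mathsf{TV}}$, and because $B_{m-j} = Z_{a_{2(m-j)-1}}$ is separated from $Z_{1:\tau}$ by precisely the intervening even-indexed block $a_{2(m-j)-2}$, this is at most $\phi_Z(a_{2(m-j)-2})$ by the $\phi$-mixing property. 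Summing over $j = 0,\dots,m-2$, the values $2(m-j)-2$ exhaust the even indices $\{2,4,\dots,2m-2\} = E\setminus\{2m\}$, each exactly once, giving $\abs{\sfE f(Z^o_{1:|O|}) - \sfE f(\tilde Z^o_{1:|O|})} \le \sum_{i\in E\setminus\{2m\}}\phi_Z(a_i)$; the second inequality follows verbatim with the roles of odd and even blocks interchanged.

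I expect the main obstacle to be this last estimate: controlling $\|\sfP_{B_{m-j}\mid Z_{1:\tau}} - \sfP_{B_{m-j}}\|_{\mathsf{TV}}$ when $B_{m-j}$ is a \emph{block} (a vector of consecutive coordinates) rather than a single coordinate. This requires applying the $\phi$-mixing property to events in the future $\sigma$-algebra generated by the whole block — the standard interpretation of $\phi$-mixing, though formally a touch stronger than the single-coordinate statement as literally written — or, alternatively, peeling off the coordinates of $B_{m-j}$ one at a time and summing the resulting single-coordinate bounds. One must also track carefully how the time separation between the block and the conditioning past is measured relative to the convention $\phi_Z(a_i) = \phi_Z(\abs{a_i})$, since this is where the precise indexing $E\setminus\{2m\}$ (resp. $O\setminus\{1\}$) in the statement comes from.
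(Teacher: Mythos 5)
The paper does not actually prove this proposition: it is imported verbatim from \citet{yu1994mixing} (stated there for $\beta$-mixing), with the one-line remark that $\phi$-mixing coefficients dominate $\beta$-mixing coefficients so the bound survives the substitution. Your hybrid/telescoping argument is therefore a genuinely self-contained alternative, and it is essentially the classical proof of Yu's lemma: the interpolating measures $\mathsf{Q}_j$, the reduction of each increment to $\int h\,d(\mu-\nu)$ with $h$ valued in $[0,1]$, the bound by total variation, and the convexity step passing from conditioning on $B_{1:m-j-1}$ to conditioning on the full past $Z_{1:\tau}$ are all correct, and the bookkeeping showing that the gaps exhaust $E\setminus\{2m\}$ (resp.\ $O\setminus\{1\}$) exactly once is right. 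What your route buys is transparency about exactly which dependence coefficient is consumed at each step, and it works directly with $\phi$-mixing (where the coefficient controls a supremum over conditioning events, which is precisely what your $\sup_{b_{1:m-j-1}}$ step needs) rather than detouring through $\beta$-mixing.

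The one soft spot is the one you flag yourself, and you should resolve it in a specific direction. The paper's displayed definition of $\phi_Z(i)$ controls only the conditional law of a \emph{single} future coordinate $S_{t+i}$ given $S_{1:t}$, whereas your final step needs $\sup_z \|\sfP_{B_{m-j}\mid Z_{1:\tau}=z} - \sfP_{B_{m-j}}\|_{\mathsf{TV}}$ for the \emph{entire block} $B_{m-j}$. Your primary fix (read $\phi$ as controlling the whole future $\sigma$-algebra $\sigma(Z_{t+i:n})$) is the correct one, and it is the reading the paper itself relies on elsewhere, e.g.\ in the dependency-matrix coefficients of \Cref{def:depmat} and the bound $\|\Gamma_{\mathsf{dep}}\| \le \sum_i \sqrt{\phi_Z(i)}$. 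However, your proposed fallback of ``peeling off the coordinates of $B_{m-j}$ one at a time and summing the resulting single-coordinate bounds'' does not rescue the literal definition: the chain rule for total variation produces conditional distances in which the conditioning set contains both $Z_{1:\tau}$ and the earlier coordinates of the block itself, and these are not dominated by the single-coordinate $\phi_Z$ as written; moreover, even if they were, you would accrue one coefficient per coordinate of the block rather than the single $\phi_Z(a_i)$ per gap that the statement asserts. So state the whole-future version of $\phi$-mixing as the hypothesis and drop the fallback. With that fixed, your proof is complete; the only remaining (harmless) slack is that the true separation is $|a_{2(m-j)-2}|+1$, so you are invoking monotonicity of $\phi_Z$ to write $\phi_Z(|a_{2(m-j)-2}|)$.
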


The above proposition is originally stated for $\beta$-mixing random variables in \citet{yu1994mixing}, but these coefficients always dominate the $\phi$-mixing coefficients and so the result remains true in our setting. 

We will also require a second notion of dependency. 

\begin{definition}[{Dependency matrix, \citet[Section 2]{samson2000concentration}}]
\label{def:depmat}
The \emph{dependency matrix}
of a process $Z_{1:n}$ with distribution $\mathsf{P}_{Z}$ is the (upper-triangular) matrix $\Gamma_{\mathsf{dep}}(\mathsf{P}_{Z})=\{\Gamma_{ij}\}_{i,j=0}^{T-1} \in \mathbb{R}^{n\times n}$ defined as follows.
Let $\mathcal{Z}_{1:i+1}$ denote the $\sigma$-algebra 
generated by $Z_{1:i+1}$.
For indices $i < j$, let 
\begin{align}
    \Gamma_{ij} = \sqrt{ 2 \sup_{A \in \mathcal{Z}_{1:i+1}}\|\sfP_{Z_{j+1:n}}(\cdot \mid A) - \sfP_{Z_{j+1:n}}  \|_{\mathsf{TV}}}. \label{eq:dep_mat_coeffs}
\end{align}
For the remaining indices $i \geq j$, let
$\Gamma_{ii} =1$ and $\Gamma_{ij}=0$ when $i > j$ (below the diagonal).
\end{definition}

It is straightforward to verify---and we will use---that
\begin{equation}\label{eq:gammaboundedbyphi}
    \| \Gamma_{\mathsf{dep}}(\mathsf{P}_{Z})\| \leq \sum_{i=1}^\infty \sqrt{\phi_Z(i)}.
\end{equation}

\section{Additional Numerical Details}\label{appdx: numerical}

We consider the simulation task of balancing a pole atop a cart from visual observations, as pictured in \Cref{fig: example obs}. This experimental setup is used to demonstrate the benefit of multi-task imitation learning (compared to single task imitation learning) for a visuomotor control task. We first describe the system, and how expert policies are generated. We then provide details about the imitation learning and evaluation process. 

\paragraph{System Description:} The pole is balanced by applying a force to the cart along a track. Denoting the position of the cart by $p$ and the angle of the pole by $\theta$, the system evolves according to the following dynamics:
\begin{align*}
    u &= (M+m)(\ddot p + d_p \dot p) + m\ell ((\ddot \theta + d_\theta \dot \theta) \cos\theta - \dot \theta^2 \sin\theta), \\ 
    0 &= m((\ddot p + d_p \dot p) \cos \theta + \ell(\ddot \theta + d_\theta \dot \theta) - g\sin \theta).
\end{align*}
Here, $M$ is the mass of the cart, $m$ is the mass of the pole, $\ell$ is the length of the pole, $g$ is the acceleration due to gravity, $k_p$ is the damping coefficient for cart on the track, and $k_\theta$ is the damping coefficient for the joint of the pole with the cart. The state of this system at time $t$ is denoted $x_t = \bmat{p_t & \dot  p_t & \theta_t & \dot \theta_t}^\top$. These dynamics are discretized via an euler approximation with
stepsize $dt = 0.02$.  The discrete time dynamics will be written $x_{t+1} = f(x_t, u_t)$. We further suppose that we have a camera setup next to the track, directed towards the track and centered at the zero position of the cart. This camera gives us a partial observation of the state at any time: $o_t = \mathrm{camera}(x_t)$. \Cref{fig: example obs} is one such observation generated by the PyBullet simulator when the system is at the origin. We consider a collection of instances of this system by uniformly randomly sampling $M, m, \ell \in [0.5,3.0]\times [0.05, 0.2] \times [1.0, 2.5]$, and setting $g = 9.8$, $k_p = k_\theta  =0.4$. 

\paragraph{Expert Policy Description: } The expert has access to a (noisy) key-point extractor that maps the image observations from the camera to a vector containing the position of the cart-pole joint along the track, the position of the pole tip along the track, and the height of the pole tip above the track. This provides the two keypoints illustrated in \Cref{fig: ideal keypoints}\footnote{In our experiments, the keypoint extractor is a convolutional neural network trained on a $50000$ cartpole images from instances drawn uniformly at random with states having position $p \in [-3,3]$, $\theta \in [-\pi/3, \pi/3]$, and pole lengths $\ell \in [1,2.5]$.}. We denote this noisy observation as $\mathrm{keypoint}(o_t)$. A single keypoint extractor is used by all experts (across the parameter variations of the system), and is trained from labeled data across a variety of parameter settings. After applying the keypoint extractor to the images, the ideal measurements become a simple function of $p$ and $\theta$: they may be written $\bmat{p_t & p_t + \sin(\theta_t) \ell & \cos(\theta_t) \ell}^\top$. As such, we can construct expert controllers using the dynamics of the system by synthesizing LQG controllers\footnote{We use $Q=R=\Sigma_w=\Sigma_v=I$.} for the system linearized about the upright equilibrium point. In particular, for some particular parameter realization, indexed by $h$, the corresponding expert controller generates the force $u_t^\star$ applied to the cart at time $t$ as 
\begin{align*}
     \xi_{t+1} &= A_K^{(h)} \xi_t + B_K^{(h)} \mathrm{keypoint}\paren{\mathrm{camera}(x_t) }\\ 
    u_t^\star &=  C_K^{(h)} \xi_t + D_K^{(h)} \mathrm{keypoint}\paren{\mathrm{camera}(x_t) },
\end{align*}
where $(A_K^{(h)}, B_K^{(h)}, C_K^{(h)}, D_K^{(h)})$ are constructed from two Riccati equation solutions involving the linearized system, and $\xi_t$ is a four dimensional latent state. We assume that when the input applied is applied to the system, there is an unobserved actuation noise added. Therefore, the input applied to the system at time $t$ by the expert controller will be $u_t = u_t^\star + \eta_t$, where $\eta_t \sim\calN(0,0.5)$. 

\paragraph{Imitation Learning Policy Description: } We consider imitation learning agents that operate a short history of camera observations\footnote{We use a history of 8.}. In particular, the learning agent selects inputs as  
\begin{align*}
    \hat u_t = K_{\theta} \paren{\bmat{\mathrm{camera}(x_t) \\ \vdots \\\mathrm{camera}(x_{t-\mathrm{hist}})}}.
\end{align*}
Here $K_\theta$ is a convolutional neural network with parameters $\theta$. In the single task setting, the parameters are specific to the parameter realization for the task at hand. In the multi-task setting, the network parameters are partitioned into a shared component $\theta_{\mathrm{shared}}$ and a task specific component for the final layer, $\theta_{h}$.

\paragraph{First Stage: } The shared parameters in the multi-task setting are jointly trained on a collection of $H$ source tasks.\footnote{We consider three values of $H$: $H=5, 10, 20$.}. The dataset therefore consists of demonstrations from rollouts of the expert controllers generated for $H$ systems with different parameter realizations. Expert demonstrations are obtained from $10$ independent realizations of the actuation noise sequence for each system. The length of the rollout trajectory is $500$ steps (recalling the discretization timestep of $0.02$.)

The multi-task network is jointly trained on the entire collection of source data to minimize the loss 
\begin{align*}
    \sum_{h=1}^H \sum_{i=1}^{10} \sum_{t=1}^{500} \norm{u_t^{(h)}[i] - K_{\theta_{\mathrm{shared}}, \theta_h} \paren{\bmat{\mathrm{camera}(x_t^{(h)}[i]) \\ \vdots \\\mathrm{camera}(x_{t-\mathrm{hist}}^{(h)}[i])}}}^2
\end{align*}
over the network parameters $\theta_{\mathrm{shared}}, \theta_1, \dots, \theta_H$. The superscript $h$ on the inputs and states denotes the system index that they came from, while the argument in the brackets enumerates the $10$ expert trajectories collected from each system. To obtain an approximate minimizer to the above problem, we employ the adam optimizer using a batch size of 32, weight decay of $1e^{-3}$, and learning rate of $1e^{-3}$ with a decay factor of $0.5$ every 10 epochs for a total of 100 epochs.\footnote{Tasks are mixed together in the each batch.}

\paragraph{Second Stage: }
The second stage consists of 10 target tasks, defined by new parameter realizations for the cartpole system. We compare:
\begin{enumerate}
    \item Training a convolutional neural network for each of these tasks from scratch using the data available for the task (this is single task imitation learning).
    \item Re-using the representation trained for the collection of source tasks along with a head that is fit to the target task. The head is obtained by solving a least squares problem by computing the shared representation for the history of camera observations in the expert demonstrations and solving a regression problem to match the expert inputs. 
\end{enumerate}
For each target task, we again collect expert demonstrations. Here, we consider a variable number of trajectories, $N_{\mathrm{target}}$. Each trajectory is again obtained by rolling out the corresponding expert controller for $500$ steps under new, independent realizations of the actuation noise. These expert trajectories are used to fit a linear head for the corresponding target tasks for the multi-task setting, and to train a behavior cloning agent from scratch for the single task setting. 

\paragraph{Evaluation Results: } Once the target controllers are trained, we evaluate them by rolling them out on the cartpole system with the parameters for which they were designed. These evaluation rollouts occur by rolling out out the single-task learned, multi-task learned, and expert controller under new realizations of the actuation noise. We track the input imitation error over the entire trajectory, which is the MSE of the gap between the inputs applied by the expert, and the inputs a learned controller $\hat K$ would apply when faced with the same observations:
\[{\sum_{t=1}^{500} \norm{u_t^\star - \hat K\paren{\bmat{\mathrm{camera}(x_t^\star) \\ \vdots \\ \mathrm{camera}(x^\star_{t-\mathrm{hist}})}}}^2}.\]
We additionally track the state imitation error between the states $\hat x_t$ from rolling out the learned controller and the states $x_t^\star$ from rolling out the expert controller:
\[ {\sum_{t=1}^{500} \norm{x_t^\star - \hat x_t}^2}.\]
We also track whether the controller lasts 500 steps without allowing the pole to fall past an angle of $\pi/2$ in either direction. We plot the results for representation learning with $5$, $10$, or $20$ source tasks, in addition to single task learning. The evaluation metrics are averaged across $50$ evaluation rollouts for each target controller. In \Cref{fig: cartpole imitation learning comparisons}, the median is plotted, with the 30\%-70\% quantiles are shaded. The median and quantiles are over $10$ random seeds for the target tasks and $5$ random seeds for the parameters of the source task instances. In the low data regime, multi-task learning excels in all metrics, with increasing benefit as more source tasks are available. In the high data regime, the single task controller eventually beats out the multi-task controllers for all metrics.

In \Cref{fig: cartpole test loss rate}, we plot the input imitation error versus the number of source tasks available for pre-training on a $\log-\log$ scale with the number of target trajectories fixed at $100$. Neglecting the component of the error that decays with the number of target trajectories, our theoretical results predict a decay in the error of $\frac{1}{H}$, or a slope of $-1$ on a $\log\log$ plot. In \Cref{fig: cartpole test loss rate}, we observe a slope of approximately $-0.8$. The discrepancy may arise for several reasons. Firstly, the empirical risk minimizer is approximated using SGD. Secondly, the number of target trajectories used for fitting the final layer of the network is not infinite, meaning that we occur some additional error in training the final layer. 

\begin{figure}[t]
    \centering
    \includegraphics[width=0.4\textwidth]{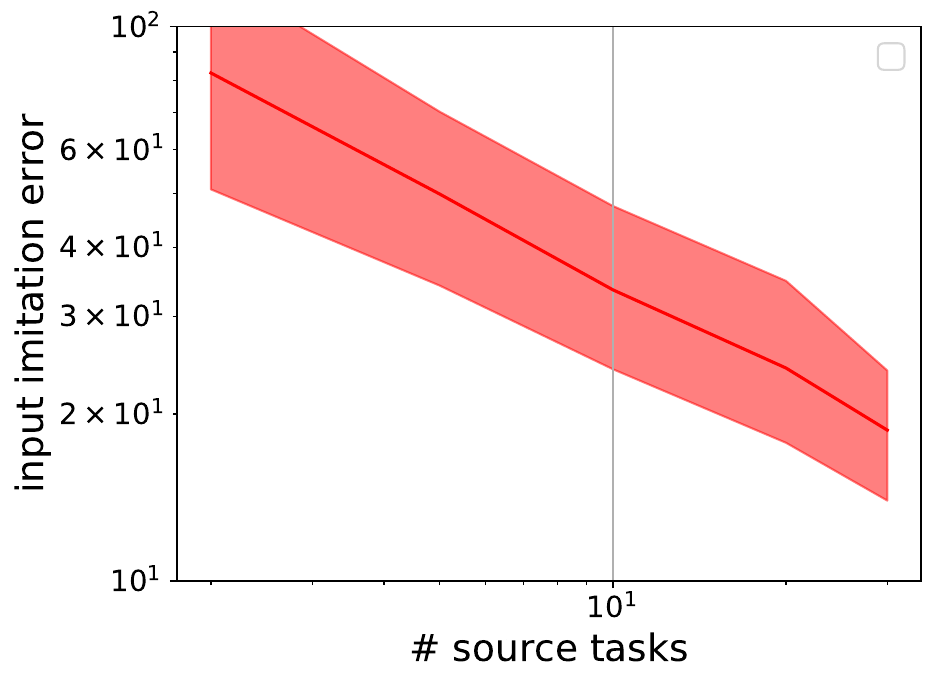}
    \caption{Input imitation error of the policies trained with a shared representation plotted against the number of source tasks used to train the representation on a $\log\log$ scale. The number of target trajectories used for finetuning is fixed at $100$.}
    \label{fig: cartpole test loss rate}
\end{figure}
\end{document}